\documentclass{article}

\usepackage{arxiv}
\usepackage{authblk}
\usepackage{hyperref}
\usepackage{graphicx}%
\usepackage{multirow}%
\usepackage{amsmath,amssymb,amsfonts}%
\usepackage{amsthm}%
\usepackage{mathrsfs}%
\usepackage{xcolor}%
\usepackage{textcomp}%
\usepackage{manyfoot}%
\usepackage{booktabs}%
\usepackage{algorithm}%
\usepackage{algorithmicx}%
\usepackage{algpseudocode}%
\usepackage{listings}%

\usepackage{apacite}

\usepackage{tabularx}
\usepackage[toc,title, page]{appendix}

\usepackage{amssymb}  
\usepackage{pifont} 
\usepackage{enumitem}

\usepackage{amssymb}  
\usepackage{pifont}

\usepackage{tikz}
\usetikzlibrary{arrows.meta, positioning}
\usetikzlibrary{matrix, positioning, fit, shapes.misc}
\usepackage[most]{tcolorbox}  

\usepackage{newunicodechar}

\usepackage[T1]{fontenc}      
\usepackage[utf8]{inputenc}   
\usepackage[english]{babel}   
\usepackage{csquotes}
\usepackage{amsmath, amsthm, amscd, amsfonts, amssymb}
\usepackage{graphicx}
\usepackage[all]{xy}          
\usepackage{color}
\usepackage{lipsum}           
\usepackage{enumitem}

\newtheorem{theorem}{Theorem}[section]

\newtheorem{proposition}[theorem]{Proposition}
\newtheorem{corollary}[theorem]{Corollary}

\theoremstyle{definition}
\newtheorem{definition}[theorem]{Definition}
\newtheorem{example}[theorem]{Example}

\theoremstyle{remark}
\newtheorem{remark}[theorem]{Remark}

\numberwithin{equation}{section}
\usepackage{xurl} 

\usepackage{etoolbox}
\appto\bibsetup{\sloppy\emergencystretch=3em}

\usepackage{hyperref}
\hypersetup{
  colorlinks=true,
  linkcolor=red,
  citecolor=cyan,
  urlcolor=red,
  bookmarksnumbered=true,
  plainpages=false
}
\begin{document}
\title{Featured Reproducing Kernel Banach Spaces for Learning and Neural Networks}


\newcommand{\shorttitle}{Featured RKBSs for Learning and Neural Networks}

\author[1]{Isabel de la Higuera}
\author[1,2]{Francisco Herrera}
\author[3,4]{M.\ Victoria Velasco}

\affil[1]{Dept. of Computer Science and Artificial Intelligence,  Andalusian Institute on Data Science and Computational Intelligence (DaSCI), University of Granada, Granada, 18140, Spain, email: \texttt{herrera@decsai.ugr.es}}   \affil[2]{ADIA Lab, Abu Dhabi, United Arab Emirates.}  \affil[3]{Dpto. de An\'{a}lisis Matem\'{a}tico, Facultad de Ciencias, Universidad de Granada 18071, Granada (Spain), \break email: \texttt{vvelasco@ugr.es}}   \affil[4]{Corresponding author}

\date{\today}

\maketitle

\begin{abstract}
Reproducing kernel Hilbert spaces provide a foundational framework for kernel-based learning, where regularization and interpolation problems admit finite-dimensional solutions through classical representer theorems. Many modern learning models, however—including fixed-architecture neural networks equipped with non-quadratic norms—naturally give rise to non-Hilbertian geometries that fall outside this setting. In Banach spaces, continuity of point-evaluation functionals alone is insufficient to guarantee feature representations or kernel-based learning formulations.

In this work, we develop a functional-analytic framework for learning in Banach spaces based on the notion of featured reproducing kernel Banach spaces. We identify the precise structural conditions under which feature maps, kernel constructions, and representer-type results can be recovered beyond the Hilbertian regime. Within this framework, supervised learning is formulated as a minimal-norm interpolation or regularization problem, and existence results together with conditional representer theorems are established.

We further extend the theory to vector-valued featured reproducing kernel Banach spaces and show that fixed-architecture neural networks naturally induce special instances of such spaces. This provides a unified function-space perspective on kernel methods and neural networks and clarifies when kernel-based learning principles extend beyond reproducing kernel Hilbert spaces.
\end{abstract}

\begin{keywords} 
~
RKHS, RKBS, featured RKBS, feature map, kernel function, special
featured RKBS, vector valued featured RKBS,  minimal norm interpolation
problem, regularization problem, representer theorem, neural network.
\end{keywords}


\section{Introduction}

The mathematical foundations of machine learning are deeply rooted in functional analysis and, in particular, in the theory of reproducing kernel Hilbert spaces (RKHSs). Since the seminal work of Aronszajn (1950) \cite{arons1950}, RKHS theory has provided a rigorous and unifying framework for learning from finite datasets, allowing classical algorithms such as support vector machines, kernel ridge regression, and Gaussian processes to be formulated as functional optimization problems. Within this setting, learning naturally corresponds to minimal-norm interpolation or regularization in a Hilbert space of functions endowed with a reproducing kernel. Moreover, continuity of point-evaluation functionals is equivalent to the existence of a reproducing kernel, and hence an associated feature map, which in turn underpins classical representer theorems for a broad class of norm-monotone regularized learning problems \cite{ScholkopfSmola2002}.

However, many function classes relevant to modern machine learning—most notably fixed-architecture neural networks equipped with non-quadratic parameter norms—do not naturally admit a Hilbert space formulation \cite{ZhangXuZhang2009RKBS}. In Banach spaces, continuity of point evaluation alone is insufficient to guarantee the existence of a feature representation or a representer theorem, and additional structural assumptions are required, as we will show in this work. Motivated by this gap, reproducing kernel Banach spaces (RKBSs) have been proposed as a generalization of RKHSs, yet the precise conditions under which kernel constructions and representer-type results can be recovered remain only partially understood.

In this work, we identify the structural requirements that enable feature-map–induced RKBSs and conditional representation theorems in Banach spaces, and we show that fixed neural network architectures naturally induce special vector-valued featured reproducing kernel Banach spaces. This provides a unified functional-analytic perspective on kernel methods and neural networks. 
Recent work has renewed interest in functional-analytic and norm-based perspectives on learning dynamics and generalization in neural networks, including kernel and infinite-width limits, implicit regularization, and non-Hilbertian geometries \cite{Neyshabur2018,Jacot2018NTK,sun2023unified} from complementary perspectives. 

Table~\ref{tab:rkhs-rkbs-comparison} summarizes the structural assumptions that underpin the theoretical developments in this paper. 
In particular, it highlights why feature-map–induced RKBSs are required to formulate learning problems and derive representer-type results in Banach spaces, and how these assumptions enable the neural-network interpretations developed later.

\begin{table}[H]
\centering
\renewcommand{\arraystretch}{1.6}
\begin{tabularx}{\textwidth}{p{2.7cm} X X X}
\hline
\textbf{Aspect} 
& \textbf{RKHS} 
& \textbf{General RKBS} 
& \textbf{Featured RKBS (this paper)} \\
\hline
Ambient function space 
& Hilbert space with continuous point evaluation
& Banach space with continuous point evaluations 
& Banach space induced by a feature map \\

Feature-map representation 
& Exists canonically via the reproducing kernel 
& Not implied by the continuity of point evaluation  
& Implicit in the definition \\

Kernel object 
& Reproducing kernel uniquely determined 
& Exists only  in the subclass of the featured RKBS
& Kernel associated with the feature map\\

Learning formulation 
& Regularization / minimal norm interpolation via RKHS norm 
& Regularization possible, but structure may be weak 
& Learning posed as minimal-norm interpolation or regularization in the induced RKBS \\

Representer theorem 
& Holds for norm-monotone regularization 
& Depends on the specific RKBS under consideration
& Admit a precise structural characterization\\

Form of solutions 
& Finite kernel expansion over training points 
& Generally non-constructive; weak-\(*\) closure issues 
& Finite kernel expansion when representer conditions are satisfied \\

Role of dual / predual 
& Trivial (Hilbert self-duality) 
& Predual may not exist
& Predual space is derived from the feature map. In special featured RKBS the predual is an RKBS \\

Neural network interpretation 
& Indirect, typically via infinite-width or kernel limits
& No systematic interpretation 
& Natural for fixed architectures via induced feature maps or the associated kernel\\
\hline
\end{tabularx}
\caption{Comparison of reproducing kernel Hilbert spaces (RKHSs), general reproducing kernel Banach spaces (RKBSs), and the featured RKBSs studied in this work. The table highlights the structural assumptions required to obtain kernel-based learning formulations and representer-type solutions in Banach settings, and their connection with neural-network interpretations developed later in the paper.}
\label{tab:rkhs-rkbs-comparison}
\end{table}

\newpage
\textbf{Contributions.}
This paper makes the following contributions:
\begin{enumerate}
    \item We introduce the notion of \emph{featured reproducing kernel Banach spaces}, distinguishing them from general RKBSs and characterizing the precise structural conditions under which a Banach space of functions admits a feature-map representation.
    \item We show that, unlike the Hilbert setting, continuity of point evaluations in Banach spaces does not guarantee representer theorems, and we provide necessary and sufficient conditions under which representer-type solutions exist in RKBSs spaces. As a corollary, explicit sufficient conditions are also provided.
    \item We formulate learning in featured RKBSs as a minimal-norm interpolation and regularization problem, and establish the existence of solutions using functional-analytic arguments that highlight the role of the Hahn--Banach theorem.

    \item We extend the framework to vector-valued featured RKBSs, preserving the full expressive power of the scalar-valued theory. Therefore, we develop the associated kernel constructions. Moreover, the relevance of special featured RKBSs is made evident. 

   \item We demonstrate that fixed-architecture neural networks naturally induce special vector-valued featured RKBSs, thereby providing a unified functional-analytic interpretation of kernel methods and neural networks. Finally, we discuss the training process and propose representative training schemes.

\end{enumerate}

The present work focuses on the function-space formulation of learning in Banach settings and on the existence and structure of minimal-norm solutions. We do not address optimization dynamics, statistical generalization bounds, or algorithmic efficiency, which depend on additional assumptions beyond the scope of this paper. Our results should therefore be interpreted as providing a foundational framework upon which such analyses may be built.

To our knowledge, this is the first work that provides a complete feature-map–based characterization of representer theorems in Banach spaces and connects them systematically to fixed-architecture neural networks.

The paper is organized as follows. 
Section~\ref{sec:2} reviews the necessary background on reproducing kernel Hilbert spaces and situates reproducing kernel Banach spaces within the broader landscape of kernel-based learning. 
Section~\ref{sec:3} introduces the functional-analytic foundations of reproducing kernel Banach spaces, emphasizing the distinction between general RKBSs and feature-map--induced (featured) RKBSs and characterizing the structural conditions under which feature representations exist. 
Section~\ref{sec:rkbs} formulates supervised learning in featured RKBSs as minimal-norm interpolation and regularization problems and establishes existence results together with conditional representer-type formulations. 
Section~\ref{sec:rep_theorems} develops representer theorems for special featured RKBSs, identifying the precise conditions under which finite kernel representations of solutions are guaranteed. 
Section~\ref{sec:vector_valued} extends the framework to vector-valued featured RKBSs and develops the associated kernel constructions for multi-output learning problems. 
In Section~\ref{neuronas}, we show that fixed-architecture neural networks naturally induce special vector-valued featured RKBSs and provide a functional-analytic interpretation of network training as a minimal-norm learning problem. 
Section~\ref{sec:example} presents an illustrative example that makes the abstract constructions explicit and demonstrates how the theory applies to concrete neural network architectures. 
Section~\ref{sec:LT-insights} distills the main learning-theoretic insights and implications of the framework. 
Finally, Section~\ref{sec:conc} concludes with a discussion of limitations, implications, and directions for future research.

\section{Related Work}
\label{sec:2}

Kernel methods and regularization in reproducing kernel Hilbert spaces (RKHSs) provide a standard framework in which learning can be cast as optimization over function spaces, and solutions to broad classes of regularized empirical risk minimization problems admit finite expansions over the training set. Early representer-type results can be traced to spline theory \cite{kimeldorf1971some}, while modern formulations and generalizations in the kernel learning literature establish representer theorems under norm-monotone regularization and general loss terms \cite{scholkopf2001generalized,ScholkopfSmola2002}.

{To move beyond Hilbert geometries—motivated by non-quadratic regularization, sparsity-promoting norms, and other non-Hilbert inductive biases—RKBSs were introduced as Banach space analogs of RKHSs \cite{ZhangXuZhang2009RKBS,xu2023sparse}. A recurring challenge in RKBS-based learning is that Hilbert tools (e.g., Riesz representation) are not available in general, and representer theorems require additional structure. Recent work has continued to formalize RKBS constructions and clarify which Banach geometries support kernel-based learning principles and tractable representations \cite{wang2021representer}.

Within RKBSs, finite-dimensional solution structure results can be substantially subtler than in RKHSs, and can depend on duality structure and subdifferential geometry. Wang and Xu (2021) \cite{wang2021representer} provide a systematic study of representer theorems for minimum-norm interpolation and regularized learning in Banach spaces, including explicit formulations and conditions under which optimization can be reduced to a truly finite-dimensional problem.
More recently, Wang, Xu and Yan (2024) \cite{wang2024sparse} studied sparse representation theorems for RKBS learning, giving conditions that yield kernel representations with fewer terms than the number of observations and analyzing how sparsity depends on the RKBS norm and regularization.
These lines of work motivate a finer-grained taxonomy of RKBSs based on whether feature-map constructions and representer-type solutions are available—precisely the perspective adopted in this paper.

In parallel, a growing literature studies neural networks via function-space norms and variational characterizations, connecting architecture and parameter norms to induced function-space geometries and implicit bias
\cite{Jacot2018NTK,Neyshabur2018,Wojtowytsch2020}. 
 For example, neural tangent kernel (NTK) work provides an influential infinite-width kernel limit \cite{Jacot2018NTK}, while research on implicit regularization highlights how optimization and geometry select solutions beyond explicit penalties \cite{Neyshabur2018}.
Recent work strengthens the function-space bridge between kernels and networks in directions closely aligned with Banach geometries. Sun et al. (2023) \cite{sun2023unified} study how mirror descent controls implicit regularization, explicitly emphasizing the role of geometry in the learned solutions.
Shenouda et al. (2024) \cite{shenouda2024variation} introduce vector-valued variation spaces, a class of RKBSs tailored to multi-output neural networks, and establish representer theorems and structural insights for multi-task learning and compression.

Follain and Bach (2025) \cite{follain2024enhanced} propose a framework that explicitly integrates feature learning and kernel methods via regularized empirical risk minimization over function classes that can be interpreted both as kernel constructions and as infinite-width neural networks with learned projections/nonlinearities.

These works underscore active interest in unifying kernels, neural networks, and norm-based learning principles—context in which the featured-RKBS framework developed here provides a complementary architecture-aware Banach-space perspective.

In contrast to these works, we focus on identifying the minimal structural assumptions under which kernel-based learning and representer theorems can be recovered in Banach spaces, and on interpreting fixed-architecture neural networks within this framework.

\section{From RKHSs to Reproducing Kernel Banach Spaces}
\label{sec:3}

This section introduces the functional-analytic framework underlying reproducing kernel Banach spaces (RKBSs), with an emphasis on concepts that are directly relevant to learning theory. While reproducing kernel Hilbert spaces provide a complete and well-understood setting for kernel-based learning, many learning problems of practical interest involve non-Hilbertian geometries arising from non-quadratic regularization, sparsity-inducing norms, or parameter norms associated with neural networks. RKBSs generalize RKHSs to Banach spaces and thus offer a natural mathematical language for studying such settings, provided additional structural assumptions are satisfied.

We begin by recalling the definition of RKBSs and highlighting the role of point-evaluation continuity, before introducing the notion of feature-map--induced RKBSs. This distinction is crucial from a learning perspective: unlike in the Hilbert case, not every RKBS admits a feature representation, and consequently not every RKBS supports kernel-based learning algorithms or representer theorems. The results in this section establish the structural foundations needed to formulate learning problems in Banach spaces in a way that parallels, but does not trivially extend, classical RKHS theory.

Throughout this paper, $X$ will be a nonempty set and $\mathbb{K}$ will stands for either the field $\mathbb{R}$ of real numbers or the field $\mathbb{C}$ of complex numbers.  Furthermore, $\mathcal{F}(X,\mathbb{K})$ will denote the vector space over $\mathbb{K}$  consisting of all functions from $X$ into $\mathbb{K}$, that is,
\[
\mathcal{F}(X,\mathbb{K}):=\{f\colon X\to\mathbb{K}: f \hspace{0.1cm }\text{ is a function}\},
\]
endowed with the usual pointwise operations, namely
\[
(f+g)(x):= f(x)+g(x), \hspace{0.2cm} \text{and} \hspace{0.2cm} (\alpha f)(x):=\alpha f(x),
\]
for all $f,g\in\mathcal{F}(X,\mathbb{K}),\alpha\in \mathbb{K} $ and $x\in X$.

This section is devoted to the study of reproducing kernel Hilbert spaces (RKHS) as a preliminary step toward their extension to reproducing kernel Banach spaces (RKBS). Both notions are defined on Hilbert
or Banach spaces that are vector subspaces of $\mathcal{F}(X,\mathbb{K}),$ denoted
 by $H$ and $B,$ respectively, so that their elements are functions with
no \textit{a priori} properties. The base field of these spaces is $\mathbb{%
K}$. Consequently, $H \subseteq\mathcal{F}(X,\mathbb{K})$ means that the Hilbert space $H$ is a vector subspace of $\mathcal{F}(X,\mathbb{K})$, and similarly for a Banach space $B\subseteq\mathcal{F}(X,\mathbb{K})$. Throughout this paper, this convention will be adopted without further notice (see, for example, Definition~\ref{def2.1}.)
In the next theorem, we characterize a reproducing kernel Hilbert space (RKHS) in terms of its feature map
and kernel function. In this context if $\mathbb{K=R}$, the conjugation is
understood as the identity.

\begin{theorem}
\label{Hilbert}Let $H\subseteq \mathcal{F}(X,\mathbb{K})$ be a Hilbert
space. Then, the following assertions are equivalent:

$\mathrm{(i)}$\ The linear map $\delta _{x}:H\to \mathbb{K}$ given by $%
\delta _{x}(f)=f(x),$ for every $f\in H,$ is continuous for every $x\in X$.
This means that there exists $C_{x}\geq 0$ such that 
\begin{equation*}
\left\vert \delta _{x}(f)\right\vert =\left\vert f(x)\right\vert \leq
C_{x}\left\Vert f\right\Vert ,\text{ \ for \ every }f\in H.
\end{equation*}

$\mathrm{(ii)}$\ There exists a map $\Phi :X\to H$ such
that $f(x)=\left\langle f,\Phi (x)\right\rangle$,  for every $x\in X.$

$\mathrm{(iii)}$\ There exists a function $k:X\times X\to 
\mathbb{K}$ such  that $k(\cdot ,x)\in H$, for every $x\in X$, satisfying $f(x)=\left\langle f,k(\cdot
,x)\right\rangle ,$ for every  $f\in H$ and $x\in X.$

If these assertions hold, then $\Phi $ and $k$ are unique. Moreover,  
\begin{equation*}
H=\overline{\mathrm{Lin}\{k(\cdot ,x):x\in X\}}^{\|\cdot\|}=\overline{\mathrm{Lin}\{\Phi
(x):x\in X\}}^{\|\cdot\|}
\end{equation*}%
and 
\begin{equation*}
k(x,x^{\prime })=\overline{\left\langle k(\cdot ,x),k(\cdot ,x^{\prime
})\right\rangle }=\overline{\left\langle \Phi (x),\Phi (x^{\prime
})\right\rangle }.
\end{equation*}
\end{theorem}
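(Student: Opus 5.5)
The plan is to run the cycle (i)$\Rightarrow$(ii)$\Rightarrow$(iii)$\Rightarrow$(i) and then deduce uniqueness, density and the kernel formula. The whole argument rests on the Riesz representation theorem together with the convention that the inner product is linear in the first variable and conjugate-linear in the second.

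\emph{(i)$\Rightarrow$(ii).} Fix $x\in X$. By (i), $\delta_x$ is a continuous linear functional on the Hilbert space $H$. The Riesz representation theorem then gives a unique vector $\Phi(x)\in H$ with $\delta_x(f)=\langle f,\Phi(x)\rangle$ for all $f\in H$. This defines $\Phi:X\to H$.

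\emph{(ii)$\Rightarrow$(iii).} Since each $\Phi(x)$ is itself a function on $X$, we set $k(y,x):=\Phi(x)(y)$. Then $k(\cdot,x)=\Phi(x)\in H$, and the reproducing identity is just (ii) rewritten.

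\emph{(iii)$\Rightarrow$(i).} By Cauchy--Schwarz, $|f(x)|=|\langle f,k(\cdot,x)\rangle|\le \|k(\cdot,x)\|\,\|f\|$. So (i) holds with $C_x=\|k(\cdot,x)\|$.

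\emph{Uniqueness.} Suppose $\Phi$ and $\Phi'$ both satisfy (ii). Then $\langle f,\Phi(x)-\Phi'(x)\rangle=0$ for every $f\in H$. Taking $f=\Phi(x)-\Phi'(x)$ gives $\Phi(x)=\Phi'(x)$. The same argument shows $k(\cdot,x)$ is unique for each $x$, and since $k(y,x)$ is the value of $k(\cdot,x)$ at $y$, the function $k$ itself is unique. In particular $k(\cdot,x)=\Phi(x)$, so the two linear spans coincide.

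\emph{Density.} Let $M=\overline{\mathrm{Lin}\{k(\cdot,x):x\in X\}}$. If $f\in M^\perp$, then $f(x)=\langle f,k(\cdot,x)\rangle=0$ for every $x$, so $f$ is the zero function. Hence $M^\perp=\{0\}$, and the orthogonal decomposition $H=M\oplus M^\perp$ gives $M=H$.

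\emph{Kernel formula.} Apply the reproducing property to $f=k(\cdot,x)$ at the point $x'$:
\[
k(x',x)=\langle k(\cdot,x),k(\cdot,x')\rangle .
\]
Conjugate symmetry of the inner product then gives
\[
\overline{\langle k(\cdot,x),k(\cdot,x')\rangle}=\langle k(\cdot,x'),k(\cdot,x)\rangle=k(x,x'),
\]
where the last equality is the reproducing property applied to $f=k(\cdot,x')$ at the point $x$. Finally, $\Phi(x)=k(\cdot,x)$ yields the same expression in terms of $\Phi$.

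No step is a real obstacle. The only point needing care is the conjugation bookkeeping in this last formula: one must keep the order of the arguments consistent with linearity in the first slot. When $\mathbb{K}=\mathbb{R}$, conjugation is the identity and the formula reduces to symmetry of the inner product.
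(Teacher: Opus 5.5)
Your proposal is correct and follows essentially the same route as the paper. The paper also uses Riesz representation for (i)$\Rightarrow$(ii), the relabeling $k(\cdot,x):=\Phi(x)$ for (ii)$\Rightarrow$(iii), Cauchy--Schwarz for (iii)$\Rightarrow$(i), the test vector $f=\Phi(x)-\widetilde{\Phi}(x)$ for uniqueness, triviality of the orthogonal complement of the span for density (you invoke $H=M\oplus M^{\perp}$ where the paper uses $M^{\perp\perp}=\overline{M}$), and the reproducing property plus conjugate symmetry for the kernel formula.
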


\begin{proof}
    $\mathrm{(i)\Longrightarrow (ii)}$ Since $\delta _{x}$ is linear, its continuity implies that $\delta _{x}\in H^{\ast }$ and, consequently, by the Riesz representation theorem
  \cite[Theorem 3.4]{conway1994course}, there exists a unique $k_{x}\in H$ such that $\delta _{x}(\cdot
)=\left\langle \cdot ,k_{x}\right\rangle .$ 
Let $\Phi :X\to H$ be given by $\Phi (x)=k_{x},$
for every $x\in X.$ Then, 
\begin{equation*}
\left\langle f,\Phi (x)\right\rangle =\left\langle f,k_{x}\right\rangle
=\delta _{x}(f)=f(x),  \text{ where }  f\in H.
\end{equation*}

$\mathrm{(ii)\Longrightarrow (iii)}$ Define $k(\cdot ,x):=\Phi (x)$ and $%
k(x,x')=\delta _{x}(k(\cdot ,x'))=\delta _{x}(\Phi(x'))$, for every $x,x' \in X$ (note that  $%
k(\cdot ,x')=\Phi (x')\in H\subseteq \mathcal{F}(X,\mathbb{K})$$).$
By $\mathrm{(ii)}$, we have  $k(\cdot ,x)\in H$ and $%
f(x)=\left\langle f,k(\cdot ,x)\right\rangle$, for every $x\in X.$

$\mathrm{(iii)\Longrightarrow (i)}$ Since $f(x)=\left\langle f,k(\cdot
,x)\right\rangle,$ for every $x\in X$ and $f\in H,$ from the Cauchy-Schwarz
inequality, letting $C_{x}=\left\Vert k(\cdot ,x)\right\Vert ,$ it follows that 
\begin{equation*}
\left\vert \delta _{x}(f)\right\vert =\left\vert f(x)\right\vert \leq
C_{x}\left\Vert f\right\Vert ,
\end{equation*}
for every $x\in X, $ which proves  $\mathrm{(i).}$

Finally, the uniqueness of $\Phi (x)$ follows from the fact that $%
\left\langle f,\Phi (x)\right\rangle =f(x)$. Indeed, if another map $%
 \widetilde{\Phi }:X\to H$ also satisfies this property, then 
\begin{equation*}
\left\langle f,\Phi (x)-\widetilde{\Phi }(x)\right\rangle =f(x)-f(x)=0,
\end{equation*}%
for every $f\in H$ and  $x\in X.$ Taking $f=\Phi (x)-\widetilde{\Phi }(x) \in H$ we
conclude that $\Phi =\widetilde{\Phi }$. The proof of the uniqueness  of $k$ is similar by
replacing $\Phi (x)$ and $\widetilde{\Phi }(x)$ with $k(\cdot ,x)$ and $%
\widetilde{k}(\cdot ,x),$ respectively. Finally, since $$f(x)=\left\langle f,\Phi (x)\right\rangle
=\left\langle f,k(\cdot ,x)\right\rangle ,$$ for every $x\in X$ and $f\in H,$ it follows that if $\left\langle f,\Phi (x)\right\rangle =0$ for every $x\in X$ then $%
(\mathrm{Lin}\{\Phi (x):x\in X\})^{\bot }=\{0\}$ and therefore, by~\cite[Corollary~2.9]{conway1994course}, it follows that
\begin{equation*}
H=\{0\}^{\bot }= (\mathrm{Lin}\{\Phi (x):x\in X\})^{\bot \perp }=\overline{%
\mathrm{Lin}\{\Phi (x):x\in X\}}^{\|\cdot\|}.
\end{equation*}%
By the same reasoning, $\{k(\cdot ,x):x\in X\}^{\bot }=\{0\}$ and
therefore,%
\begin{equation*}
H=\overline{\mathrm{Lin}\{k(\cdot ,x):x\in X\}}^{\|\cdot\|}.
\end{equation*}%
Finally,%
\begin{equation*}
k(x,x^{\prime })=\delta _{x}(k(\cdot ,x^{\prime }))=\left\langle k(\cdot
,x^{\prime }),k(\cdot ,x)\right\rangle =\overline{\left\langle k(\cdot
,x),k(\cdot ,x^{\prime })\right\rangle },
\end{equation*}%
and similarly $k(x,x^{\prime })=\delta _{x}(\Phi (x^{\prime }))=\left\langle
\Phi (x^{\prime }),\Phi (x)\right\rangle =\overline{\left\langle \Phi
(x),\Phi (x^{\prime })\right\rangle }.$
\end{proof}
The next definition can be found in \cite[Definition 2.1]{Paulsen_Raghupathi_2016}.

\begin{definition}
\label{def2.1}
A \textbf{reproducing kernel Hilbert space} (\textbf{RKHS}) is a Hilbert
space $H\subseteq \mathcal{F}(X,\mathbb{K})$ such that the point evaluation mapping $\delta
_{x}:H\to \mathbb{K}$, defined by $$\delta_x(f)=f(x),\hspace{0.2cm}\text{for every} \hspace{0.2cm}f\in H,$$ is continuous for each $x\in X.$ 

If $H$ is an
RKHS then, the mapping $\Phi $ given in assertion $\mathrm{(ii)}$ of Theorem %
\ref{Hilbert} is called the \textbf{feature map} associated with $H,$ and, similarly,
the mapping $k$ given in assertion $\mathrm{(iii)}$ is the\textbf{\
reproducing kernel function}
associated with $H.$ Moreover,
the \textbf{reproducing} \textbf{%
property} of the kernel $k$ is the equality $f(x)=\left\langle f,k(\cdot
,x)\right\rangle ,$ for every $x\in X$ and $f\in H.$
\end{definition}

Obviously, in Definition~\ref{def2.1}, the continuity of $\delta _{x}$ (that is
assertion $\mathrm{(i)}$ in Theorem \ref{Hilbert}) can be replaced by either
assertion $\mathrm{(ii)}$ or $\mathrm{(iii)}$ in Theorem \ref{Hilbert},
since these three properties are equivalent, as we have shown$.$

\begin{remark}
The Moore-Aronszajn Theorem \cite[Property 2.4]{arons1950} states that to every Hermitian
positive definite function $k:X\times X\to \mathbb{K}$, there 
corresponds a unique reproducing kernel Hilbert space (RKHS) \break$H\subseteq 
\mathcal{F}(X,\mathbb{K}),$ and conversely. This correspondence arises from
the fact that $k$ is Hermitian and positive definite if and only if
the space $$H_{0}:=\mathrm{Lin}\{k(x,\cdot ):x\in X\}$$ is a pre-Hilbert space.
Indeed, if $H$ is an RKHS, then the associated reproducing kernel is necessarily Hermitian and positive definite, as can be verified directly.
Conversely, if $k:X\times X\to \mathbb{K}$ is Hermitian and positive
definite, then the space $H$ given by completion of 
$H_{0}:=\mathrm{Lin}\{k(x,\cdot ):x\in X\}$
is a Hilbert space, and it follows that $k$ satisfies condition $\mathrm{
(iii)}$ in Theorem~\ref{Hilbert}.
\end{remark}

The above definition motivates the next one, established in \cite[Definition 2.1]{banach1}.

\vspace{0.1cm}
\begin{definition}
 A Banach space $B\subseteq \mathcal{F}(X,\mathbb{K})$ is a \textbf{%
reproducing kernel Banach space} (\textbf{RKBS}) if, for each $x\in X,$ the evaluation map $\delta
_{x}:B\to \mathbb{K}$ defined by $$
\delta _{x}(f)=f(x) \ \ (f \in B ),
$$
is continuous. That is, there exists $C_{x}\geq 0$ such that 
\begin{equation*}
\left\vert \delta _{x}(f)\right\vert =\left\vert f(x)\right\vert \leq
C_{x}\left\Vert f\right\Vert \text{, for every }f\in B.
\end{equation*}
\end{definition}

Obviously, every reproducing kernel Hilbert space is an RKBS. \ Other
essential examples can be obtained very easily, as we show next.

\vspace{0.1cm}
\begin{example}
\label{pro} Let $X$ be a non-empty set, and let $%
W$ be a Banach space. Then,
every map $\Phi :X\rightarrow W$ gives rise to an RKBS. Indeed, if 
\begin{equation*}
W_{0}:=\overline{\mathrm{Lin}\{\Phi (x):x\in X\}}^{\left\Vert \cdot
\right\Vert },
\end{equation*}%
then the space $B_{\Phi }$ given by 
\begin{equation*}
B_{\Phi }:=\{f_{\mu }(\cdot ):=\mu (\Phi (\cdot )):\mu \in W_{0}^{\ast
}\}\subseteq \mathcal{F}(X,\mathbb{K}),
\end{equation*}%
where $f_{\mu }(x):=\mu (\Phi (x)),$ provided with the norm $\left\Vert
f_{\mu }\right\Vert :=\left\Vert \mu \right\Vert ,$ is an RKBS.

That $B_{\Phi }$ is a vector space is obvious since $f_{\mu }+f_{\widetilde{%
\mu }}=f_{\mu +\widetilde{\mu }}$ and $\alpha f_{\mu }=f_{\alpha \mu }$ for
every $\mu ,\widetilde{\mu }$ $\in W_{0}^{\ast }$ and $\alpha \in \mathbb{K%
}\,.$ Since $B_{\Phi }$ is algebraically isomorphic to $W_{0}^{\ast }$, it
is clear that $\left\Vert f_{\mu }\right\Vert :=\left\Vert \mu \right\Vert $
defines a Banach norm of $B_{\Phi }$. On the other hand, the evaluation map $%
\delta _{x}:B_{\Phi }\rightarrow \mathbb{K}$ is such that 
\begin{equation*}
\left\vert \delta _{x}(f_{\mu })\right\vert =\left\vert \mu (\Phi
(x))\right\vert \leq \left\Vert \Phi (x)\right\Vert \left\Vert \mu
\right\Vert .
\end{equation*}%
Thus, $\left\vert \delta _{x}(f_{\mu })\right\vert \leq C_{x}$ $\left\Vert
f_{\mu }\right\Vert ,$ for every $f_{\mu }\in B_{\Phi },$  where $%
C_{x}=\left\Vert \Phi (x)\right\Vert $. Consequently, $%
B_{\Phi }$ is an RKBS.
\end{example}

We are only interested in those $B_\Phi$ associated with a map $\Phi:X\rightarrow W$, where $W$ is a Banach space such that $W\subseteq \mathcal{F}(X^{\prime },\mathbb{K})$, for some non-empty set $X'$. This motivates the following definition.

\vspace{0.1cm}
\begin{definition}
\label{feature}
A \textbf{feature map} is a map $\Phi :X\rightarrow W\subseteq \mathcal{F}%
(X^{\prime },\mathbb{K}),$ where $X$ and $X^{\prime }$ are non-empty sets
and $W$ is a Banach space that is a vector subspace of $\mathcal{F}%
(X^{\prime },\mathbb{K})$. If $\Phi $ is a feature map, then the space $%
B_{\Phi }$ described in  Example \ref{pro} is said to be the \textbf{RKBS
associated with}  $\Phi.$
\end{definition}

Linked to the concept of a feature map is that of a kernel function, as we
are going to show.

\begin{definition}
Let $X$ and $X^{\prime }$ be non-empty sets. We say that a map $k:X\times
X^{\prime }\rightarrow \mathbb{K}$ is a \textbf{kernel function} if there
exists a Banach space $W\subseteq \mathcal{F}(X^{\prime },\mathbb{K})$ $\ $%
such that $k(x,\cdot )\in W,$ for every $x\in X.$ The space $W$ is then called the \textbf{Banach space associated with} $k$.
\end{definition}

\begin{proposition}
\label{equiker}Let $X$ and $X^{\prime }$ be non-empty sets and $W\subseteq 
\mathcal{F}(X^{\prime },\mathbb{K})$ be a Banach space. Then, every kernel function $%
k:X\times X^{\prime }\rightarrow \mathbb{K}$ such that  $k(x,\cdot )\in W,$ for every $x\in X,$ has a unique associated feature map \break$\Phi :X\rightarrow W\subseteq \mathcal{F}(X^{\prime },\mathbb{K})$, and
\textit{vice versa}.
\end{proposition}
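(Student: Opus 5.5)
The correspondence is set up by the rule $\Phi(x):=k(x,\cdot)$, read in either direction. A map $\Phi:X\to W$ is nothing more than an assignment of an element of $W\subseteq\mathcal{F}(X',\mathbb{K})$ to each $x\in X$. Since the elements of $W$ are functions on $X'$, such an assignment is the same thing as a function of two variables on $X\times X'$. The proof therefore amounts to checking that two explicit constructions are mutually inverse.

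\emph{Kernel to feature map.} Given a kernel function $k$ with $k(x,\cdot)\in W$ for every $x\in X$, define $\Phi:X\to W$ by $\Phi(x):=k(x,\cdot)$. This is well defined precisely because of the hypothesis $k(x,\cdot)\in W$, so $\Phi$ is a feature map in the sense of Definition~\ref{feature}. For uniqueness, I would make precise what ``associated'' means, namely that
\begin{equation*}
\Phi(x)(x')=k(x,x') \quad \text{for all } x\in X,\ x'\in X'.
\end{equation*}
If $\widetilde{\Phi}$ also satisfies this identity, then $\Phi(x)$ and $\widetilde{\Phi}(x)$ agree pointwise on $X'$. Since elements of $W$ are genuine functions and equality in $\mathcal{F}(X',\mathbb{K})$ is pointwise, this gives $\Phi(x)=\widetilde{\Phi}(x)$ for every $x$.

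\emph{Feature map to kernel.} Given a feature map $\Phi:X\to W$, define $k:X\times X'\to\mathbb{K}$ by $k(x,x'):=\Phi(x)(x')$. Then $k(x,\cdot)=\Phi(x)\in W$ for every $x$, so $k$ is a kernel function whose associated Banach space is $W$. Uniqueness follows as before: any $\widetilde{k}$ with $\widetilde{k}(x,\cdot)=\Phi(x)$ satisfies $\widetilde{k}(x,x')=\Phi(x)(x')=k(x,x')$ for all $(x,x')$. Finally, I would check that the two assignments compose to the identity in both orders. This is immediate from the defining relation $\Phi(x)(x')=k(x,x')$.

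The only real obstacle is not technical but definitional. One has to fix the meaning of ``associated'' as the relation $\Phi(x)=k(x,\cdot)$ in $W$. One also has to note that no topological or Banach-space property of $W$ is used: only the facts that $W\subseteq\mathcal{F}(X',\mathbb{K})$ as a set and that its elements are compared pointwise.
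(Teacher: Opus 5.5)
Your proposal is correct and follows essentially the same route as the paper: both directions rest on the rule $\Phi(x)=k(x,\cdot)$, with $k(x,x')=\Phi(x)(x')=\delta_{x'}(\Phi(x))$. Your explicit uniqueness and mutual-inverse checks, and your remark that only pointwise equality in $\mathcal{F}(X',\mathbb{K})$ is used (and no continuity of evaluation), make precise what the paper leaves implicit.
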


\begin{proof}
Given a kernel function $k:X\times X^{\prime }\rightarrow \mathbb{K}$, if $%
k(x,\cdot )\in W\subseteq \mathcal{F}(X^{\prime },\mathbb{K}),$ for every $%
x\in X,$ then its associated feature map is defined by $\Phi :X\rightarrow W$
where $\Phi (x):=k(x,\cdot )$, for every $x\in X.$

Conversely, given a feature map $\Phi :X\rightarrow W\subseteq \mathcal{F}%
(X^{\prime },\mathbb{K}),$  define its associated kernel function as $%
k:X\times X^{\prime }\rightarrow \mathbb{K}$ given by $k(x,x^{\prime
})=\delta _{x}^{\prime }(\Phi (x))$, where $k(x,\cdot ):=\Phi (x)$, for
every $x\in X$ and $x^{\prime }\in X^{\prime }$.
\end{proof}

\begin{remark}
\label{kernelaso}According to the above proposition, if $B_{\Phi }$ is the RKBS associated with the feature map $\Phi
:X\rightarrow W$ then, $B_{\Phi }$ can
be defined also from the perspective of the kernel. That is%
\begin{equation*}
B_{\Phi }^{k}:=\{\widehat{f}_{\mu }(\cdot ):\mu \in W_{0}^{\ast }\},
\end{equation*}%
where $\widehat{f}_{\mu }(x):=\mu (k(x,\cdot ))$ and $\left\Vert \widehat{f}%
_{\mu }\right\Vert :=\left\Vert \mu \right\Vert $. Therefore, \ $B_{\Phi
}^{k}=B_{\Phi }$ where $\Phi (x):=k(x,\cdot ),$ for every $x\in X$, and we say that $B_{\Phi }^{k}$ is the\textbf{\ RKBS associated with the
kernel function }$k.$
\end{remark}

Our next goal is to show that, in contrast to the case of the RKHS, not all
RKBS are related to a feature map.

From now on, given a Banach space $W,$ we will denote its topological dual
by $W^{\ast }$, and by $J:W\rightarrow W^{\ast \ast }$ the canonical
embedding defined by $J(w)=J_{w},$ where $J_{w}(f)=f(w)$ for every $f\in
W^{\ast }.$ As it is well known, $J$ is an isometry; that is, $\left\Vert
J_{w}\right\Vert =\left\Vert w\right\Vert $ for every $w\in W$ where the
norm on $W^{\ast \ast }$ is the standard operator norm (see, for example \cite[Proposition 1.11.3]{megginson1998introduction}).

As stated in \cite[Definition 1.3.24]{dales2025banach}:

\vspace{0.1cm}
\begin{definition}
Let $B$ be a Banach space. A \textbf{predual} of $B$ is a pair $(B_\ast,\tau )$ consisting
of a Banach space $B_\ast$ and a linear homeomorphism $\tau :B\to (B_\ast)^{\ast
}.$ The predual is said to be isometric if $\tau$ is an isometry. The space $B$ is called a \textbf{dual}
space if it admits a predual $(B_\ast,\tau )$.
\end{definition}

In this work we will consider only isometric preduals $(B_\ast,\tau ),$ which means that the map  $\tau :B\to (B_\ast)^{\ast }$ is an isometric isomorphism. Motivated by this, we introduce the following definition.

\begin{definition}
\label{def1}
Let $W_1$ and $W_2$ be Banach spaces. We say that $W_1$ is a \textbf{copy} of $W_2$ if there exists an isometric isomorphism $\varphi:W_1\to W_2$, in which case we write $W_1 \overset{\varphi}{\equiv}W_2$. Thus, $\varphi$ is a linear and bijective map such that $\|\varphi(w_1)\|= \|w_1\|$, for every $w_1\in W_1$. Moreover, if $w_1\in W_1$ and $w_2\in W_2$ satisfy
$\varphi(w_1)=w_2$, then we  write $w_1\overset{\varphi}{\equiv}w_2$.
\end{definition}

Many times, when the isometric isomorphism is clear (or irrelevant), we will simply write $W_1\equiv W_2$ and $w_1\equiv w_2$.

Recall that if $W_1$ and $W_2$ are Banach spaces and $\varphi:W_1\to W_2$ is an isomorphism, then the adjoint map of $\varphi$, which is the  map $\varphi^\ast:W_2^\ast\to W_1^\ast$ defined by 
\begin{equation*}
    [\varphi^\ast(w_2^\ast)](w_1):=w_2^\ast(\varphi(w_1)),
\end{equation*}
for every $w_1\in W_1$, it is also an isomorphism (see \cite[Proposition 4.27]{van_Neerven_2022} for details). Due to this, we have the following result, whose proof is straightforward.

\vspace{0.1cm}
\begin{proposition}
\label{prop1}
    Let $W_1$ and $W_2$ be Banach spaces. If $W_1$ is a copy of $W_2$, then $W_2^\ast$ is a copy of $W_1^\ast$. Indeed, if $W_1 \overset{\varphi}{\equiv}W_2$, then $W_2^\ast\overset{\varphi^\ast}{\equiv}W_1^\ast$. Moreover, if $w_i^\ast\in W_i^\ast$ for $i=1,2$, then $w_2^\ast\equiv w_1^\ast$ if and only if $w_1^\ast=w_2^\ast\varphi$.
\end{proposition}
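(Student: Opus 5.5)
The plan is to verify directly that the adjoint $\varphi^\ast:W_2^\ast\to W_1^\ast$ is an isometric isomorphism whenever $\varphi:W_1\to W_2$ is one. The characterization of $w_2^\ast\equiv w_1^\ast$ will then follow by unwinding the definitions in Definition~\ref{def1}.

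\textbf{Step 1: well-definedness and linearity.} For $w_2^\ast\in W_2^\ast$, the functional $\varphi^\ast(w_2^\ast)=w_2^\ast\circ\varphi$ is a composition of continuous linear maps, so it lies in $W_1^\ast$. Linearity of $\varphi^\ast$ in $w_2^\ast$ is immediate from the pointwise definition.

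\textbf{Step 2: isometry.} Since $\varphi$ is a bijective isometry, it maps the closed unit ball of $W_1$ onto the closed unit ball of $W_2$. Hence
\[
\|\varphi^\ast(w_2^\ast)\|=\sup_{\|w_1\|\le 1}|w_2^\ast(\varphi(w_1))|=\sup_{\|w_2\|\le 1}|w_2^\ast(w_2)|=\|w_2^\ast\|.
\]
In particular $\varphi^\ast$ is injective.

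\textbf{Step 3: surjectivity.} Given $w_1^\ast\in W_1^\ast$, set $w_2^\ast:=w_1^\ast\circ\varphi^{-1}$. This belongs to $W_2^\ast$ because $\varphi^{-1}$ is linear and isometric, hence bounded. By construction $\varphi^\ast(w_2^\ast)=w_1^\ast$. Together with Step 2, this shows that $\varphi^\ast$ is an isometric isomorphism, so $W_2^\ast\overset{\varphi^\ast}{\equiv}W_1^\ast$. Alternatively, one can invoke the cited result \cite[Proposition 4.27]{van_Neerven_2022} for the isomorphism property and use Step 2 only for the norm equality.

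\textbf{Step 4: the final equivalence.} By Definition~\ref{def1}, the statement $w_2^\ast\equiv w_1^\ast$ (with respect to $\varphi^\ast$) means exactly $\varphi^\ast(w_2^\ast)=w_1^\ast$. By the definition of the adjoint, this says $w_1^\ast=w_2^\ast\varphi$.

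There is no genuine obstacle in this argument. The only point needing care is Step 2, where surjectivity of $\varphi$ is what allows the two unit balls to be identified, and so gives equality rather than just the inequality $\|\varphi^\ast(w_2^\ast)\|\le\|w_2^\ast\|$.
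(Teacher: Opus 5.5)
Your proposal is correct and follows essentially the paper's route: the paper calls the proof straightforward, citing \cite[Proposition 4.27]{van_Neerven_2022} for the fact that $\varphi^\ast$ is an isomorphism and leaving the norm equality and the characterization $w_1^\ast=w_2^\ast\varphi$ as unwinding of Definition~\ref{def1}. Your Steps 2--3 simply verify the cited isomorphism property directly, which also gives the isometry, and your Step 4 matches the intended reading of $w_2^\ast\equiv w_1^\ast$ as $\varphi^\ast(w_2^\ast)=w_1^\ast$.
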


Note that if  $B\subseteq \mathcal{F}(X,\mathbb{K})$ is an RKBS having a
isometric predual $B_{\ast }$, then $B\overset{\varphi }{\equiv }(B_{\ast
})^{\ast }$ and, by Proposition \ref{prop1}, we also have that $(B_{\ast
})^{\ast \ast }\overset{\varphi ^{\ast }}{\equiv }B^{\ast }$.

In the next result we characterize the existence of a feature map in an RKBS.

\begin{proposition}
\label{propo}
    Let $B\subseteq\mathcal{F}(X,\mathbb{K}) $ be an RKBS and let $B_\ast$ be a Banach space such that $B\overset{\varphi}{\equiv}(B_\ast)^\ast$. Then the following properties are equivalent:
    
    $\mathrm{(i)}$ For every $x\in X$, we have that $J_{w_x}\overset{\varphi^\ast}{\equiv}\delta_x$, for some $w_x\in B_\ast.$
    
    $\mathrm{(ii)}$ There exists a map $\Phi:X\to B_\ast$, such that $J_{\Phi(x)}\overset{\varphi^\ast}{\equiv}\delta_x$, for every $x\in X$.

    Moreover, if this properties are fulfilled, then $B_\ast=\overline{\mathrm{Lin}\{\Phi (x):x\in X\}}^{||\cdot||}$ and $$B=B_\Phi:=\{f_\mu(\cdot):=\mu(\Phi(\cdot)): \mu \in (B_\ast)^\ast\}.$$
\end{proposition}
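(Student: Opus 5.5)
The equivalence (i)$\Leftrightarrow$(ii) is essentially a reformulation, so I will dispose of it first. For (i)$\Rightarrow$(ii), I will use the axiom of choice: for each $x\in X$, pick one $w_x\in B_\ast$ with $J_{w_x}\overset{\varphi^\ast}{\equiv}\delta_x$ and set $\Phi(x):=w_x$. For (ii)$\Rightarrow$(i), I will take $w_x:=\Phi(x)$. Before this, I will make explicit what the relation means via Proposition \ref{prop1}. Here $\varphi:B\to(B_\ast)^\ast$ and $\varphi^\ast:(B_\ast)^{\ast\ast}\to B^\ast$, so $J_{w}\overset{\varphi^\ast}{\equiv}\delta_x$ means $\delta_x=J_w\circ\varphi$. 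That is,
\[
f(x)=\delta_x(f)=J_w(\varphi(f))=[\varphi(f)](w)\quad\text{for every } f\in B .
\]
This identity is what I will use in the rest of the proof.

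For the representation $B=B_\Phi$, I will first fix $f\in B$ and put $\mu:=\varphi(f)\in(B_\ast)^\ast$. The identity above gives $f(x)=\mu(\Phi(x))=f_\mu(x)$ for all $x$, so $f=f_\mu$. Conversely, for $\mu\in(B_\ast)^\ast$, surjectivity of $\varphi$ gives $f\in B$ with $\varphi(f)=\mu$, and then $f_\mu=f\in B$. Hence $B=\{f_\mu:\mu\in(B_\ast)^\ast\}$ as sets. Since $\varphi$ is an isometry, $\|f_\mu\|_B=\|\mu\|$, which agrees with the norm in Example \ref{pro}.

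There is a subtlety here. Example \ref{pro} defines $B_\Phi$ using $W_0^\ast$ with $W_0=\overline{\mathrm{Lin}\{\Phi(x)\}}$, not using $(B_\ast)^\ast$. So I also need to prove the density claim $B_\ast=W_0$, and this is the main step. I will argue by Hahn--Banach. Suppose $W_0\subsetneq B_\ast$. Then there is a nonzero $\mu\in(B_\ast)^\ast$ vanishing on $W_0$, and in particular $\mu(\Phi(x))=0$ for all $x$. Let $f:=\varphi^{-1}(\mu)\neq0$. Then $f(x)=\mu(\Phi(x))=0$ for every $x$, so $f$ is the zero function in $\mathcal{F}(X,\mathbb{K})$, contradicting $f\neq0$ and the injectivity of $\varphi$. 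Hence $W_0=B_\ast$.

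Once density holds, $W_0^\ast=(B_\ast)^\ast$, so the space just identified is exactly $B_\Phi$ of Example \ref{pro}, with the same norm. The only delicate point I expect is keeping track of the direction of the adjoint in Proposition \ref{prop1}, namely $w_1^\ast=w_2^\ast\varphi$. Here $w_2^\ast=J_w$ and $w_1^\ast=\delta_x$, which produces the identity $\delta_x=J_w\circ\varphi$ used throughout.
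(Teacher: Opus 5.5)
Your proof is correct and follows essentially the same route as the paper. You unpack $J_{\Phi(x)}\overset{\varphi^\ast}{\equiv}\delta_x$ as $\delta_x=J_{\Phi(x)}\circ\varphi$, obtain density of $\mathrm{Lin}\{\Phi(x):x\in X\}$ in $B_\ast$ from a Hahn--Banach annihilator argument, and check $f=f_{\varphi(f)}$. The only difference is in (i)$\Rightarrow$(ii): the paper avoids the axiom of choice by showing that $w_x$ is unique, since $\varphi$ is surjective and $J$ is injective, and it reuses this uniqueness later for $\Phi$ in Theorem~\ref{th216}.
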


\begin{proof}
    $\mathrm{(ii)\Longrightarrow (i)}$ This statement is immediate (set $w_x:=\Phi(x),$ for every $x\in X).$

    $\mathrm{(i)\Longrightarrow (ii)}$ First, we observe that if $J_{w_x}\overset{\varphi^\ast}{\equiv}\delta_x$, for some $w_x\in B_\ast,$ such an element $w_x $ must be unique. Indeed, if there exist $w_x,\widetilde{w}_x\in B_\ast$ such that $J_{w_x}\overset{\varphi^\ast}{\equiv}\delta_x$ and $J_{\widetilde{w}_x}\overset{\varphi^\ast}{\equiv}\delta_x$, then $J_{{w}_x}=J_{\widetilde{w}_x}=(\varphi^\ast)^{-1}(\delta_x)$, so that $J_{w_x-\widetilde{w}_x}=0$, which means that $w_x=\widetilde{w}_x$. Consequently, we can define $\Phi
:X\rightarrow B_{\ast }$ as $\Phi (x)=w_{x},$ for every $x\in X.$
    
    Finally, that  $B=B_{\Phi }$ follows from the fact that $%
    J_{\Phi (x)}\overset{\varphi ^{\ast }}{\equiv }\delta _{x}$. By Proposition \ref{prop1} this means that 
    $J_{\Phi (x)}\varphi =\delta _{x}$. We claim that the space ${\mathrm{Lin}%
\{\Phi (x):x\in X\}\subseteq }$ $B_{\ast }$ is such that $$({\mathrm{Lin}\{\Phi (x):x\in X\}})^\bot=\{0\}.$$ In fact, since $(B_\ast)^\ast=\varphi(B)$, we deduce that, if $f\in B$ is such that $\varphi(f)(\Phi(x))=0$ for every $x\in X$, then $J_{\Phi (x)}(\varphi(f))=0$ for every $x\in X$, so that $\delta_x(f)=0$, for every $x\in X$ and hence $f=0$, so that $\varphi(f)=0$ which proves the claim. By~\cite[Corollary~2.9]{conway1994course}, we have
$$B_\ast=\overline{\mathrm{Lin}\{\Phi (x):x\in X\}}^{\left\Vert \cdot
\right\Vert}.$$
    Thus, for every $f\in B$,   
    \begin{equation*}
    f_{\varphi (f)}(x):=\varphi (f)(\Phi (x))=J_{\Phi (x)}(\varphi (f))=\delta
    _{x}(f)=f(x)\text{ \quad } \hspace{0.1cm}(x\in X),
    \end{equation*}%
    so that $f=f_{\varphi (f)}$ and hence 
    \begin{equation*}
    B:=\{f:f\in B\}=\{f_{\varphi (f)}:f\in B\}=\{f_{\varphi (f)}:\varphi (f)\in
    (B_\ast)^{\ast }\}=\{f_{\mu }:\mu \in (B_\ast)^{\ast }\}=B_\Phi,
    \end{equation*} as desired.
\end{proof}

We have shown that if $B\subseteq \mathcal{F}(X,\mathbb{K})$ is an RKBS then $%
\delta _{x}\in B^{\ast }$ for every $x\in X$ and the possibilities are two:
either $B$ admits a predual $B_{\ast }$ such that  $B \, { \equiv} \, (B_{\ast })^{\ast } $ and $ \delta _{x}$ can be identified with a unique element in  $J(B_{\ast }) \subseteq (B_{\ast })^{\ast \ast } { \equiv}  B^{\ast},$  or no such predual exists.

The first option means that there exist  $\Phi
:X\rightarrow B_{\ast }$ with $B_{\ast }:=\overline{\mathrm{Lin}\{\Phi
(x):x\in X\}}^{\left\Vert \cdot \right\Vert }$ such that  $B=B_{\Phi }.$ This 
justifies the following definition.

\vspace{0.1cm}
\begin{definition}
\label{fea}
Let $B\subseteq \mathcal{F}(X,\mathbb{K})$ be an RKBS. We say that $B$ is a 
\textbf{featured RKBS }if $B=B_{\Phi }$ for some feature map  $\Phi
:X\rightarrow W\subseteq \mathcal{F}(X^{\prime },\mathbb{K}).$ This means
that 
\begin{equation*}
B=B_{\Phi }:=\{f_{\mu }(\cdot ):=\mu (\Phi (\cdot )):\mu \in W_{0}^{\ast }\},
\end{equation*}%
where $W_{0}:=\overline{\mathrm{Lin}\{\Phi (x):x\in X\}}^{\left\Vert \cdot
\right\Vert },$ and $\left\Vert f_{\mu }\right\Vert :=\left\Vert \mu
\right\Vert .$  

Note that $B=B_{\Phi } \overset{\varphi }{\equiv } W_{0}^*$ where $\varphi (f_\mu)=\mu$ (see Example \ref{pro}).

\end{definition}

The following result is derived from  Proposition~\ref{propo}.

\vspace{0.1cm}
\begin{corollary}
\label{thm216}
Let $B\subseteq \mathcal{F}(X,\mathbb{K})$ be an RKBS and, for every $x \in X$, let $\delta _{x}: B\rightarrow\mathbb{K}$ be the evaluation map given by $\delta_x (f):=f(x)$, for all $f\in B$.Then, $B$ is a featured RKBS if, and only if, $B$ has a predual $B_{\ast }$ such that  $B\overset{\varphi}{\equiv}(B_\ast)^\ast$ and, for every $x \in X,$ there exists $w_{x}\in B_{\ast
}$ such that $J_{w_{x}}\overset{\varphi^\ast}{\equiv} \delta_{x}$ (that is, $\delta_{x}=J_{w_{x}}\varphi$), where $J: B_{\ast }\rightarrow (B_{\ast })^{**} $ is the is the canonical embedding.
\end{corollary}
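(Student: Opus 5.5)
We prove the two implications separately. The converse direction is essentially Proposition~\ref{propo}, while the direct direction amounts to reading off the predual from Definition~\ref{fea}.

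\emph{Necessity.} Suppose $B$ is a featured RKBS, so $B=B_\Phi$ for some feature map $\Phi:X\to W\subseteq\mathcal{F}(X',\mathbb{K})$. Set $B_\ast:=W_0=\overline{\mathrm{Lin}\{\Phi(x):x\in X\}}^{\|\cdot\|}$, a closed subspace of a Banach space and hence Banach.

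By Definition~\ref{fea}, the map $\varphi:B\to W_0^\ast$ given by $\varphi(f_\mu)=\mu$ is an isometric isomorphism. It is well defined because $f_\mu=f_{\widetilde\mu}$ forces $\mu=\widetilde\mu$ on $\Phi(X)$, hence on $W_0$ by linearity and continuity. Thus $B\overset{\varphi}{\equiv}(B_\ast)^\ast$.

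Now take $w_x:=\Phi(x)\in B_\ast$. For every $f=f_\mu\in B$,
\begin{equation*}
(J_{w_x}\varphi)(f_\mu)=J_{\Phi(x)}(\mu)=\mu(\Phi(x))=f_\mu(x)=\delta_x(f_\mu).
\end{equation*}
So $\delta_x=J_{w_x}\varphi$, which by Proposition~\ref{prop1} means $J_{w_x}\overset{\varphi^\ast}{\equiv}\delta_x$.

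\emph{Sufficiency.} Assume $B\overset{\varphi}{\equiv}(B_\ast)^\ast$ and that for every $x$ there is $w_x\in B_\ast$ with $\delta_x=J_{w_x}\varphi$. This is exactly condition (i) of Proposition~\ref{propo}. Hence there is $\Phi:X\to B_\ast$ with $B_\ast=\overline{\mathrm{Lin}\{\Phi(x):x\in X\}}$ and $B=B_\Phi$ as sets, via $f=f_{\varphi(f)}$.

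Two points remain to match Definition~\ref{fea}.

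First, the norm on $B$ must agree with the norm $\|f_\mu\|=\|\mu\|$ of $B_\Phi$. Since $f=f_{\varphi(f)}$ and $\varphi$ is an isometry, $\|f\|=\|\varphi(f)\|$, which is precisely the $B_\Phi$ norm.

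Second, $\Phi$ must be a feature map in the sense of Definition~\ref{feature}, i.e.\ take values in a Banach space of functions on some set $X'$. This is the main obstacle, since $B_\ast$ is an abstract Banach space. I would resolve it by realising $B_\ast$ as a function space. Take $X':=B=(B_\ast)^\ast$ (or its unit ball), and identify $w\in B_\ast$ with the function $f\mapsto \varphi(f)(w)$ on $X'$.

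This identification is linear. It is injective by Hahn--Banach: if $\varphi(f)(w)=0$ for all $f$, then every functional in $(B_\ast)^\ast$ vanishes at $w$, so $w=0$. Transporting the norm of $B_\ast$ gives an isometric copy $\widetilde W\subseteq\mathcal{F}(X',\mathbb{K})$. Composing $\Phi$ with this isometry yields a genuine feature map $\widetilde\Phi$.

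Finally, $B_{\widetilde\Phi}=B_\Phi$. By Proposition~\ref{prop1} the duals of the two copies correspond via the adjoint, and evaluation through $\widetilde\Phi$ reproduces the same functions $\mu(\Phi(\cdot))$. Hence $B$ is a featured RKBS.
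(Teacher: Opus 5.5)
Your proof is correct and follows the paper's argument: necessity by setting $B_\ast:=W_0$ and $w_x:=\Phi(x)$ and checking $\delta_x=J_{\Phi(x)}\varphi$, and sufficiency via Proposition~\ref{propo}. The paper only says sufficiency ``follows directly'' from that proposition. Your explicit norm check, and your realization of the abstract predual $B_\ast$ as a Banach space of functions on $X'$ (so that $\Phi$ is a feature map in the sense of Definition~\ref{feature}), correctly fill in details the paper leaves implicit.
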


\begin{proof}
If $ B=B_{\Phi }\,\ $ for some feature map  $\Phi
:X\rightarrow W\subseteq \mathcal{F}(X^{\prime },\mathbb{K}),$ then   $B=B_{\Phi }\overset{\varphi}{\equiv}
W_{0}^{\ast },$ where $$W_{0}:=\overline{\mathrm{Lin}\{\Phi
(x):x\in X\}}^{\left\Vert \cdot \right\Vert },$$ and $\varphi (f_\mu) = \mu$ for every $f_\mu \in B_{\Phi }$. Furthermore, for every $ \mu \in W_{0}^{\ast }, \hspace{0.1cm} f_\mu \in B_{\Phi },$ and $x \in X$,  $$\delta _{x} (f_\mu) =f_\mu (x)= \mu(\Phi (x)) = J_{\Phi (x)}(\mu) = J_{\Phi (x)} (\varphi (f_\mu)),$$
so that $\delta _{x} = J_{\Phi (x)} \varphi$ which means by Proposition  \ref{prop1} that  $J_{\Phi (x)}\overset{\varphi^\ast}{\equiv}\delta _{x}$, for every $x \in X$. Finally, set $B_{\ast }:=W_{0}$ and $w_{x}:=\Phi (x)$ to conclude the proof of the necessary condition. 
The sufficient condition follows directly from Proposition~\ref{propo}.
\end{proof}

The next result summarizes all the equivalent facts we have discussed.

\begin{theorem}
\label{th216} 
Let $B\subseteq \mathcal{F}(X,\mathbb{K})$ be a Banach space.
Consider the following assertions:

$\mathrm{(i)}$ $B$ is an RKBS (that is, for every $x\in X,\,\ $the point
evaluation mapping $\delta _{x}:B\rightarrow \mathbb{K}$, defined by $\delta
_{x}(f)=f(x),$ for $f\in B,$ is continuous; and hence $\delta _{x}\in
B^{\ast }$).

$\mathrm{(ii)}$ $B$ is a featured RKBS (that is, there exists a feature map $%
\Phi :X\rightarrow W\subseteq \mathcal{F}(X^{\prime },\mathbb{K}),$ such
that $B=B_{\Phi }$)$.$

$\mathrm{(iii)}$ $B$ has a predual $B_{\ast }$ such that $B\overset{\varphi }%
{\equiv }(B_{\ast })^{\ast }$ and, for each $x\in X,$ there exists $%
w_{x}\in B_{\ast }$ such that $J_{w_{x}}\overset{\varphi ^{\ast }}{\equiv }%
\delta _{x}$ $.$

$\mathrm{(iv)}$\ $B$ has a predual $B_{\ast }$ such that $B\overset{\varphi}{\equiv }(B_{\ast })^{\ast }$ and there exists a feature map \mbox{$\Phi
:X\rightarrow B_{\ast }$} such that \break $B_{\ast }=\overline{\mathrm{Lin}\{\Phi
(x):x\in X\}}^{\left\Vert \cdot \right\Vert }$ and $J_{\Phi (x)}$ $\overset{%
\varphi ^{\ast }}{\equiv }\delta _{x},$ for every $x\in X.$

$\mathrm{(v)}$ $B$ has a predual $B_{\ast }$ such that $B\overset{\varphi }{%
\equiv }(B_{\ast })^{\ast }$ and there exists a kernel function $k:X\times
X^{\prime }\rightarrow \mathbb{K}$, such that \break$B_{\ast }:=\overline{\mathrm{Lin}%
\{k(x,\cdot ):x\in X\}}^{^{\left\Vert \cdot \right\Vert }}$ and $%
J_{k(x,\cdot )}$ $\overset{\varphi ^{\ast }}{\equiv }\delta _{x},$ for every 
$x\in X$.

$\mathrm{(vi)}$ There exists a kernel function $k:X\times X^{\prime
}\rightarrow \mathbb{K}$ such that $B=B_{\Phi }^{k}.$

Then $\mathrm{(ii)\Longleftrightarrow (iii)\Longleftrightarrow
(iv)\Longleftrightarrow (v)\Longleftrightarrow (vi)\Longrightarrow (i),}$
and if $B_{\ast }$ is reflexive then all these assertions are equivalent.
Moreover, if the above assertions are fulfilled then $\Phi $ and $k$ are
unique.
\end{theorem}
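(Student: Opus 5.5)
The plan is to close the chain of equivalences with the results already in hand, using one small additional device, and then handle the reflexive case and uniqueness separately. The equivalence $\mathrm{(ii)\Leftrightarrow(iii)}$ is exactly Corollary~\ref{thm216}. For $\mathrm{(iii)\Rightarrow(iv)}$ I invoke Proposition~\ref{propo}: from the elements $w_x$ it builds $\Phi(x):=w_x$, shows $B_\ast=\overline{\mathrm{Lin}\{\Phi(x):x\in X\}}^{\|\cdot\|}$, and gives $J_{\Phi(x)}\overset{\varphi^\ast}{\equiv}\delta_x$. The converse $\mathrm{(iv)\Rightarrow(iii)}$ is immediate by taking $w_x:=\Phi(x)$.

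\emph{The one technical point.} Definition~\ref{feature} requires a feature map to land in a Banach space of functions $W\subseteq\mathcal{F}(X',\mathbb{K})$, whereas $B_\ast$ is an abstract Banach space. I would realize $B_\ast$ concretely by taking $X':=(B_\ast)^\ast$ (or its closed unit ball) and identifying $w\in B_\ast$ with the function $w^\ast\mapsto w^\ast(w)$. This map is linear and injective by Hahn--Banach, so transporting the norm makes $B_\ast$ an isometric copy of a Banach subspace of $\mathcal{F}(X',\mathbb{K})$. All identifications are then transported through this copy, using Proposition~\ref{prop1} for the duals, so that $\varphi$ and the relations $\overset{\varphi^\ast}{\equiv}$ are preserved. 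I expect this bookkeeping step to be the main obstacle: it must be done carefully so that $J$ and $\varphi^\ast$ remain compatible after the identification.

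\emph{Passing to kernels.} With $B_\ast$ realized as a function space, $\mathrm{(iv)\Leftrightarrow(v)}$ follows from Proposition~\ref{equiker} by setting $k(x,\cdot):=\Phi(x)$, or conversely $\Phi(x):=k(x,\cdot)$; the spanning and reproducing conditions translate verbatim. For $\mathrm{(ii)\Leftrightarrow(vi)}$, Remark~\ref{kernelaso} gives $B_\Phi=B^k_\Phi$ whenever $\Phi(x)=k(x,\cdot)$, and Proposition~\ref{equiker} supplies the correspondence between feature maps and kernels in both directions. Finally, $\mathrm{(ii)\Rightarrow(i)}$ is Example~\ref{pro}, since $|f_\mu(x)|\le\|\Phi(x)\|\,\|\mu\|$.

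\emph{Reflexive case.} Here I show $\mathrm{(i)\Rightarrow(iii)}$. Suppose $B\overset{\varphi}{\equiv}(B_\ast)^\ast$ with $B_\ast$ reflexive. Then $J:B_\ast\to(B_\ast)^{\ast\ast}$ is onto, and Proposition~\ref{prop1} gives $(B_\ast)^{\ast\ast}\overset{\varphi^\ast}{\equiv}B^\ast$. Since each $\delta_x\in B^\ast$, the element $(\varphi^\ast)^{-1}(\delta_x)$ lies in $(B_\ast)^{\ast\ast}=J(B_\ast)$, so it equals $J_{w_x}$ for some $w_x\in B_\ast$, which is (iii). I would also remark that reflexivity of $B_\ast$ makes $B$ reflexive. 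Conversely, if $B$ itself is reflexive one may take $B_\ast:=B^\ast$ with $\varphi:=J_B$ and $w_x:=\delta_x$.

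\emph{Uniqueness.} For a fixed predual $(B_\ast,\varphi)$, uniqueness of $\Phi$ is the uniqueness of $w_x$ established in the proof of Proposition~\ref{propo}: $J_{w_x}=(\varphi^\ast)^{-1}(\delta_x)$ and $J$ is injective. Uniqueness of $k$ then follows because $k(x,\cdot)=\Phi(x)$ by the bijection of Proposition~\ref{equiker}. In both cases uniqueness is understood relative to the chosen predual and its identification.
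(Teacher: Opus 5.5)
Your proposal is correct and follows the paper's proof essentially step for step: Corollary~\ref{thm216} and Proposition~\ref{propo} give (ii)$\Leftrightarrow$(iii)$\Leftrightarrow$(iv), Proposition~\ref{equiker} and Remark~\ref{kernelaso} handle the kernel formulations, surjectivity of $J$ gives (i)$\Rightarrow$(iii) in the reflexive case, and injectivity of $J$ gives uniqueness. Realizing $B_\ast$ as a space of functions on $X'=(B_\ast)^\ast$ via $w\mapsto J_w$ is a worthwhile refinement, because the paper silently treats the abstract predual from Proposition~\ref{propo} as a subspace of some $\mathcal{F}(X',\mathbb{K})$; your remark that uniqueness holds only relative to the fixed pair $(B_\ast,\varphi)$ is also accurate.
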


\begin{proof} From Proposition \ref{propo} and Corollary  \ref{thm216} we have
$\mathrm{(ii)\iff (iii)\iff (iv)}$. Furthermore, the equivalence $\mathrm{(iv)\iff (v)}$ follows from Proposition \ref{equiker}, whereas
$\mathrm{(ii)\iff (vi)}$ is derived from Remark \ref{kernelaso}. Thus, $$\mathrm{(ii)\Longleftrightarrow (iii)\iff
(iv)\iff (v)\iff (vi).}$$ Trivially, $\mathrm{(ii)\Rightarrow (i).}$
Moreover, if $B_{\ast }$ is reflexive then $B_{\ast }\overset{J}{\equiv }%
(B_{\ast })^{\ast \ast }$ so that assertion $\mathrm{(i)\Rightarrow (iii)\ }$
is obvious since $\delta _{x}\in B^{\ast }\hspace{0.1cm}{\equiv }\hspace{0.1cm}(B_{\ast })^{\ast
\ast }=J(B_\ast).$

Finally, if $B\overset{\varphi }{\equiv }(B_{\ast })^{\ast }$ and $\Phi ,%
\widetilde{\Phi }:X\rightarrow B_{\ast }$ are such that such 
\begin{equation*}
B_{\ast }=\overline{\mathrm{Lin}\{\Phi (x):x\in X\}}^{^{\left\Vert \cdot
\right\Vert }}=\overline{\mathrm{Lin}\{\widetilde{\Phi }(x):x\in X\}}%
^{^{\left\Vert \cdot \right\Vert }}
\end{equation*}%
and $J_{\Phi (x)}$ $\overset{\varphi ^{\ast }}{\equiv }\delta _{x}\overset{%
\varphi ^{\ast }}{\equiv }J_{\widetilde{\Phi }(x)},$ for every $x\in X,$ then 
$J_{\Phi (x)}-J_{\widetilde{\Phi }(x)}=0,$ and hence $\Phi =\widetilde{\Phi }%
.$ Similarly, $k$ is also unique.
\end{proof}

We have established in Definition \ref{fea} that a Banach space $B\subseteq \mathcal{F}(X,\mathbb{K})$ is
a featured RKBS if and only if $B=B_{\Phi }$ for some feature map $\Phi
:X\rightarrow W \subseteq \mathcal{F}(X,\mathbb{K}').$ In this case we say that $\Phi $ is the\textbf{\ feature
map associated with }  $B=B_{\Phi }$ and that the corresponding kernel  $%
k:X\times X^{\prime }\rightarrow \mathbb{K},$ which satisfies that $B=B_{\Phi }^{k}=$ $%
B_{\Phi }$ is the \textbf{kernel function associated }to $B=B_{\Phi }$.
Moreover, the property $J_{\Phi (x)}$ $\overset{\varphi ^{\ast }}{\equiv }%
\delta _{x}$ (equivalently $J_{k(x,\cdot )}$ $\overset{\varphi ^{\ast }}{%
\equiv }\delta _{x}$) for every $x\in X,$ is called the \textbf{reproducing
property.}

\vspace{0.1cm}
\begin{remark}
As stated previously, $B\subseteq \mathcal{F}(X,\mathbb{K})$ is a
featured RKBS if and only if $B=B_{\Phi }$ for some feature map $\Phi
:X\rightarrow W,$ in which case 
\begin{equation*}
W_{0}=\overline{\mathrm{Lin}\{\Phi (x):x\in X\}}^{{\left\Vert \cdot
\right\Vert }}=B_{\ast }.
\end{equation*}%
In many applications, it may happen that the Banach space $W$ is well known, while the
predual $W_{0}=B_{\ast }$ is not. Motivated by this observation, we also define $B_{\Phi }$ via the space $W^{\ast }.$
This is done as follows:%
\begin{equation*}
B_{\Phi }:=\{f_{\mu }(\cdot ):=\mu (\Phi (\cdot )):\mu \in W^{\ast }\}
\end{equation*}%
where $\left\Vert f_{\mu }\right\Vert :=\inf \{\left\Vert \widetilde{\mu }%
\right\Vert :\widetilde{\mu }\in W^{\ast }$ with $f_{\mu }=f_{\widetilde{\mu 
}}\}.$ In fact, 
\begin{equation*}
\left\Vert f_{\mu }\right\Vert :=\inf \{\left\Vert \widetilde{\mu }%
\right\Vert :\widetilde{\mu }\in W^{\ast }\text{ with }f_{\mu }=f_{%
\widetilde{\mu }}\}=\left\Vert \mu +W_{0}^{\bot }\right\Vert =\left\Vert \mu
_{|W_{0}}\right\Vert .
\end{equation*}%
Note that $W^{\ast }/W_{0}^{\bot }\equiv W_{0}^{\ast }$ under the classical 
identification $\mu +W_{0}^{\bot }\equiv \mu _{|W_{0}}$ (see, for instance, \cite[Theorem 1.10.16]{megginson1998introduction}).
\end{remark}

As we know, a Banach space $B\subseteq \mathcal{F}(X,\mathbb{K})$ is a
RKBS if, and only if, $\delta _{x}\in B^{\ast },$ for every $x\in X$ and we
have shown that if $B$ has not a predual then $B$ is not a featured RKBS.
Consequently, there are reproducing kernel Banach\ spaces that are not
featured ones, in contrast to the case of RKHS (see the equivalence between $(\mathrm{i})$ and $(\mathrm{ii})$ in Theorem~\ref{Hilbert}).

To illustrate this point, we present the following example.
\begin{example}
\label{noequi}Let $B=C_{0}(\Omega )$, where $%
\Omega $ is a locally compact Hausdorff space that is not compact. Then, $B$ has not a predual and
nevertheless the evaluation mapping $\delta _{x}$ is continuous, for every $%
x\in X.$ Therefore, $\delta _{x}\in B^{\ast },$ for every $x\in X,$ and there does
not exist a Banach space $W$ $\ $with a feature map $\Phi :X\to
W $ \ such that $B=B_{\Phi }.$ This is because, if this happens, then $%
B=C_{0}(\Omega )$ is isometrically isomorphic to 
\begin{equation*}
(B_{\ast })^{\ast }=\left( \overline{\mathrm{Lin}\{\Phi (x):x\in X\}}\right)
^{\ast }.
\end{equation*}
Thus, since an isometric isomorphism between Banach spaces preserves the
extreme points of the closed unit ball, it follows that the closed unit ball
of $C_{0}(\Omega )\ $has extreme points, which is a contradiction. In fact, by the
Banach-Alaoglu theorem \cite[Theorem 3.15]{rudin1991functional}, the closed unit ball $%
\overline{B}(0,1)$ of $(B_{\ast })^{\ast }$ is w$^{\ast }-$compact and, by
the Krein-Milman theorem \cite[Theorem $3.23$]{rudin1991functional}, the set of extreme points of $\overline{B}(0,1)$
is not empty and $\overline{B}(0,1)$ is the weak$^{\ast }-$closed convex hull of them.

Let show now that the closed unit ball of $C_{0}(\Omega )$ has no extreme
points$.$ Consider $f\in C_{0}(\Omega )$ with $\left\Vert f\right\Vert
_{\infty }=1$ (if $\left\Vert f\right\Vert _{\infty }<1$, then $f$ is clearly not an
extreme point). Let $F_{1}:=\left\vert f\right\vert ^{-1}([0,\frac{1%
}{4}])$, and $F_{2}:=\left\vert f\right\vert ^{-1}([\frac{3}{4},1]).$ Then, $%
F_{1}$ and $F_{2}$ are closed subsets of $\Omega $ as $\left\vert
f\right\vert $ is continuous. \ Moreover, $F_{1}\neq \varnothing $ since
there exists a compact set $K\subseteq \Omega $ such that $\left\vert
f(x)\right\vert <\frac{1}{4},$ for every $x\in \Omega \backslash K.$ Also $%
K\,\neq \varnothing $ as $\left\Vert f\right\Vert _{\infty }=1.$ Similarly, $%
F_{2}\neq \varnothing $ as $\left\Vert f\right\Vert _{\infty }=1.$
Obviolusly, $F_{1}\cap F_{2}=\varnothing $ and the Urysohn's Lemma \cite[p.
28]{naimark1972normed} allows us to find $g:\Omega\to \lbrack 0,\frac{1}{4}]$ such
that $g(x)=\frac{1}{4}$ for every $x\in F_{1}$ and $\ g(x)=0$ for every $%
x\in F_{2}.$ It follows that $\left\vert (f\pm g)(x)\right\vert \leq 1$, for every $x\in \Omega.$ Consequently, $%
\left\Vert f\pm g\right\Vert _{\infty }=1$, and%
\begin{equation*}
f=\frac{1}{2}(f+g)+\frac{1}{2}(f-g),
\end{equation*}%
which proves that $f$ is not an extreme point of the unit
ball of $C_{0}(\Omega ).$ We conclude that $C_{0}(\Omega )$ is not
isometrically isomorphic to $(B_{\ast })^{\ast }$, for any Banach space $%
B_{\ast }.$
\end{example}


\begin{remark}
The predual of a Banach space do not need to be unique. For instance, $c$
and $c_{0}$ are non-isomorphic preduals of $l_{1}$ (see  \cite%
{brown1976uniqueness, rao}). In this sense, $B\subseteq \mathcal{F}(X,\mathbb{K})$
can be a featured Banach space for different feature maps related to
different preduals. However, if $B_{\ast }$ and $\widetilde{B}_{\ast }$ \
are preduals of $B$ (which do not need to be isomorphic) with $B \overset{ \varphi}{\equiv} (B_{\ast })^*$ and $B \overset{\widetilde { \varphi}}{\equiv} (\widetilde{B}_{\ast })^*,$ and if $\Phi
\,:X\to B_{\ast }$ and $\widetilde{\Phi }\,:X\to \widetilde{%
B_{\ast }}$ are such that 
\begin{equation*}
B_{\ast }:=\overline{\mathrm{Lin}\{\Phi (x):x\in X\}}^{||\cdot||}\text{ and }\widetilde{%
B_{\ast }}:=\overline{\mathrm{Lin}\{\widetilde{\Phi }(x):x\in X\}}^{||\cdot||}
\end{equation*}%
then, $B_{\Phi }$ $\equiv (B_{\ast
})^{\ast }$ and $B_{\widetilde{\Phi }}$ $\equiv (\widetilde{B_{\ast }}%
)^{\ast },$ so that $B_{\Phi }$ $\equiv B\equiv B_{\widetilde{\Phi }}$. 

Moreover, if $\delta_x \overset{\varphi ^{\ast}}{\equiv} J_{\Phi(x)}$ and  $\delta_x \overset{\widetilde{\varphi} ^{\ast}}{\equiv}J_{\widetilde{\Phi}(x)}$ (which means that $\delta_x =J_{\Phi(x)}\varphi= J_{\widetilde{\Phi}(x)} \widetilde{\varphi}$),  for every $x\in X$, then $B=B_{\Phi}= B_{\widetilde{\Phi}}$.
\end{remark}

We know that if a Banach space $B\subseteq \mathcal{F}(X,\mathbb{K})$ is a
featured RKBS then $B$ has a predual $B_{\ast }$ and \thinspace $B=B_{\Phi }$
for some feature map $\Phi :X\rightarrow B_{\ast }$ \ such that $$ B_{\ast }:=\overline{\mathrm{Lin}\{\Phi (x):x\in X\}}^{||\cdot||}$$ (see Theorem \ref%
{th216}). If, additionally, $B_{\ast }$ is an RKBS then we obtain the
following result.

\begin{theorem}
\label{special}Let $B_{\Phi }$ $\subseteq \mathcal{F}(X,\mathbb{K})$ be a
featured RKBS with predual $B_{\ast }\subseteq $ $\mathcal{F}(X^{\prime },%
\mathbb{K)}$. Let $\Phi :X\rightarrow B_{\ast }$ the corresponding feature
map and $k:X\times X^{\prime }\rightarrow \mathbb{K}$ the associated kernel.
If $B_{\ast }$ is an RKBS then, for every $x^{\prime }\in X^{\prime }$ we
have that $k(\cdot ,x^{\prime })\in B_{\Phi }$ and $\delta _{x^{\prime }}\in
(B_{\ast })^{\ast },$ being  $f_{ \delta
_{x^{\prime }}} = k(\cdot ,x^{\prime })$. Moreover,  
\begin{equation*}
 B_{\Phi }=\overline{\mathrm{Lin}%
\{k(\cdot ,x^{\prime }):x^{\prime }\in X^{\prime }\}}^{w\ast } \equiv \overline{\mathrm{Lin}\{\delta _{x^{\prime }}:x^{\prime }\in X^{\prime }\}}%
^{w\ast }=(B_{\ast })^{\ast },
\end{equation*}
with $k(\cdot ,x^{\prime })\equiv \delta
_{x^{\prime }}$.  
\end{theorem}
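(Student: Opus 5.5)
Since $B_{\ast}\subseteq\mathcal{F}(X',\mathbb{K})$ is an RKBS, each point evaluation $\delta_{x'}\colon B_\ast\to\mathbb{K}$, $\delta_{x'}(w)=w(x')$, is continuous, so $\delta_{x'}\in (B_\ast)^\ast$. By Theorem~\ref{th216} and Definition~\ref{fea}, $B_\Phi=\{f_\mu:\mu\in(B_\ast)^\ast\}$ with $B_\ast=W_0=\overline{\mathrm{Lin}\{\Phi(x):x\in X\}}^{\|\cdot\|}$, and $B_\Phi\overset{\varphi}{\equiv}(B_\ast)^\ast$ via $\varphi(f_\mu)=\mu$. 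Taking $\mu=\delta_{x'}$, we get for every $x\in X$
\[
f_{\delta_{x'}}(x)=\delta_{x'}(\Phi(x))=\Phi(x)(x')=k(x,x'),
\]
using Proposition~\ref{equiker} ($\Phi(x)=k(x,\cdot)$). Hence $k(\cdot,x')=f_{\delta_{x'}}\in B_\Phi$ and $\varphi(k(\cdot,x'))=\delta_{x'}$, that is, $k(\cdot,x')\equiv\delta_{x'}$. Since $\varphi$ is linear, it maps $\mathrm{Lin}\{k(\cdot,x')\}$ onto $\mathrm{Lin}\{\delta_{x'}\}$.

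The main step is $\overline{\mathrm{Lin}\{\delta_{x'}:x'\in X'\}}^{w\ast}=(B_\ast)^\ast$. I would use the bipolar theorem: the weak$^\ast$-closure of a subspace $M\subseteq (B_\ast)^\ast$ equals $({}^{\perp}M)^{\perp}$, where ${}^{\perp}M=\{w\in B_\ast: m(w)=0\ \forall m\in M\}$. If $w\in B_\ast$ satisfies $\delta_{x'}(w)=w(x')=0$ for all $x'\in X'$, then $w=0$ as a function, because $B_\ast$ is a space of functions on $X'$. Thus ${}^{\perp}M=\{0\}$ and $({}^\perp M)^\perp=(B_\ast)^\ast$. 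Equivalently, in Hahn--Banach form: if some $\mu_0$ lay outside the weak$^\ast$-closure, a weak$^\ast$-continuous functional, which is evaluation at some $w\in B_\ast$, would separate $\mu_0$ from it. That $w$ would then vanish on every $\delta_{x'}$, so $w=0$, which is a contradiction.

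Finally, I transport this equality to $B_\Phi$. The weak$^\ast$ topology on $B_\Phi$ is understood as the one induced through $\varphi$ from $\sigma((B_\ast)^\ast,B_\ast)$; this is the natural meaning given $B_\Phi\equiv(B_\ast)^\ast$. Under that convention $\varphi$ is a weak$^\ast$-homeomorphism, so it maps $\overline{\mathrm{Lin}\{k(\cdot,x')\}}^{w\ast}$ onto $\overline{\mathrm{Lin}\{\delta_{x'}\}}^{w\ast}=(B_\ast)^\ast$. Hence the former closure is all of $B_\Phi$, which gives the stated chain of identifications. The only delicate point I expect is making explicit that the weak$^\ast$ topology on $B_\Phi$ is defined via $\varphi$ (concretely, $f_{\mu_\alpha}\to f_\mu$ iff $\mu_\alpha(w)\to\mu(w)$ for all $w\in B_\ast$). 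Once that is fixed, everything reduces to the annihilator argument above.
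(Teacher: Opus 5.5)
Your proposal is correct and follows essentially the same route as the paper. You identify $f_{\delta_{x'}}=k(\cdot,x')$ through $\Phi(x)=k(x,\cdot)$, and then apply the bipolar theorem to $\{\delta_{x'}:x'\in X'\}$, whose pre-annihilator in $B_\ast$ is $\{0\}$ because $B_\ast$ consists of functions on $X'$. Your explicit remark that the weak$^\ast$ topology on $B_\Phi$ is the one transported via $\varphi$ makes precise a point the paper leaves implicit.
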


\begin{proof}
That $B_{\ast }$ is an RKBS means that $\delta _{x^{\prime }}\in (B_{\ast
})^{\ast }$ for every $x^{\prime }\in X^{\prime },$ where $$B_{\ast }:=W_{0}:=\overline{\mathrm{Lin}\{\Phi (x):x\in
X\}}^{{\left\Vert \cdot
\right\Vert }}.$$ By Definition~\ref{fea}, we have $B_{\Phi }=\{f_{\mu _{0}}(\cdot ):=\mu _{0}(\Phi (\cdot )):\mu _{0}\in
(B_{\ast })^{\ast }=(W_{0})^{\ast }\}$ with $\left\Vert f_{\mu _{0}}\right\Vert =\left\Vert \mu _{0}\right\Vert .$
Under the identification $(B_{\ast })^{\ast }=W_{0}^{\ast }$ $%
\equiv B_{\Phi }$ given by $$\mu _{0}\in W_{0}^{\ast }\equiv f_{\mu
_{0}}(\cdot )\in B_{\Phi },$$ we have that 
$\delta _{x^{\prime }}\in W_{0}^{\ast }\equiv f_{\delta _{x^{\prime }}}(\cdot
)=k(\cdot ,x^{\prime })\in B_{\Phi },$ and consequently 
\begin{equation*}
\mathrm{Lin}\{\delta _{x^{\prime }}:x^{\prime }\in X^{\prime }\}\subseteq
(B_{\ast })^{\ast }=W_{0}^{\ast }\equiv \mathrm{Lin}\{k(\cdot ,x^{\prime
}):x^{\prime }\in X^{\prime }\}\subseteq B_{\Phi },
\end{equation*}%
and we claim that%
\begin{equation}
\overline{\mathrm{Lin}\{k(\cdot ,x^{\prime }):x^{\prime }\in
X^{\prime }\}}^{w\ast }=B_{\Phi } \equiv \overline{\mathrm{Lin}\{\delta _{x^{\prime }}:x^{\prime }\in X^{\prime }\}}%
^{w\ast }=(B_{\ast })^{\ast } .  \label{wea}
\end{equation}

To prove the claim, note that the identification $\delta _{x^{\prime
}}\equiv f_{\delta _{x^{\prime }}}(\cdot )$ is obvious. Moreover, $f_{\delta
_{x^{\prime }}}(x)=$ $\delta _{x^{\prime }}(\Phi (x))$ and, since $\Phi
(x)=k(x,\cdot )$ by Proposition \ref{equiker}, we have that 
\begin{equation*}
f_{\delta _{x^{\prime }}}(x)=\delta _{x^{\prime }}(\Phi (x))=\delta
_{x^{\prime }}(k(x,\cdot ))=k(x,x^{\prime })=\delta _{x}(k(\cdot ,x^{\prime
}))=[k(\cdot ,x^{\prime })](x),
\end{equation*}%
for every $x\in X.$ Therefore, $f_{\delta _{x^{\prime }}}=k(\cdot ,x^{\prime
})\in B_{\Phi }.$ On the other hand, let%
\begin{equation*}
V_{0}:=\{\delta _{x^{\prime }}:x^{\prime }\in X^{\prime }\}\subseteq
W_{0}^{\ast }=(B_{\ast })^{\ast }.
\end{equation*}%
Then, 
\begin{equation*}
^{\perp }V_{0}:=\{g\in B_{\ast }:\delta _{x^{\prime }}(g)=g(x^{\prime })=0,%
\text{ for every }x^{\prime }\in X^{\prime }\}=\{g\in B_{\ast }:g=0\}=\{0\}.
\end{equation*}%
From the bipolar theorem \cite[Theorem 1.8.]{conway1994course}, we obtain that 
\begin{equation*}
B_{\Phi }=\{0\}^{\perp }=(^{\perp }V_{0})^{\perp }=\overline{\mathrm{Lin}%
V_{0}}^{w\ast }.
\end{equation*}%
Therefore (\ref{wea}) follows, concluding the proof. 
\end{proof}

\begin{definition}
Let $B_{\Phi }$ $\subseteq \mathcal{F}(X,\mathbb{K})$ be the RKBS\ associated
with a feature map $\Phi :X\rightarrow B_{\ast }$ where  $$B_{\ast }:=\overline{%
\mathrm{Lin}\{\Phi (x):x\in X\}}^{\left\Vert \cdot \right\Vert }.$$ The space 
$B_{\Phi }$ is said to be a \textbf{special featured RKBS} if its predual $%
B_{\ast }\subseteq $ $\mathcal{F}(X^{\prime },\mathbb{K)}$ is an RKBS.
\end{definition}

As showed in Theorem \ref{special}, if $B_{\Phi }$ is a special featured RKBS%
$\ $then, $$B_{\Phi }=\overline{\mathrm{Lin}\{k(\cdot ,x^{\prime }):x^{\prime
}\in X^{\prime }\}}^{w\ast }.$$

\begin{remark}
A Banach space $B$ (with a predual  $B_{\ast }$) may have a subspace $M$ that is $w^{\ast
}-$dense but not norm-dense  in $B$. For instance, considering $%
l_{\infty }\equiv (l_{1})^{\ast }$ we have that $\overline{c_{0}}^{w^{\ast
}}=l_{\infty }$ meanwhile $\overline{c_{0}}^{\left\Vert \cdot \right\Vert
}=c_{0}.$ Similarly, taking $L_{\infty }([0,1])\equiv (L_{1}([0,1]))^{\ast
} $ we have that $\overline{C([0,1]))}^{\left\Vert \cdot \right\Vert
}=C([0,1]))$ and, nevertheless, $\overline{C([0,1]))}^{w^{\ast }}=L_{\infty
}([0,1])$. Note that by Goldstine's Theorem \cite[Theorem~1 of Subsection 17.2.2]{kadets2018course}, every 
Banach space is $w^\ast-$dense in its second dual.
\end{remark}

The previous theorem asserts  that, \ if $B_{\Phi }$ is a especial featured space, 
then $\ B_{\Phi }$, viewed as a dual Banach space, is the smallest $w^*$-closed  Banach space containing $k(\cdot
,x^{\prime })$, for every $x^{\prime }\in X^{\prime }$, and in which the point
evaluation maps $\delta _{x}$ are continuous, for all $x \in X$.

\section{Norm Interpolation and Regularization in Featured RKBS}
\label{sec:rkbs}

In this section, we formulate supervised learning problems in the setting of featured reproducing kernel Banach spaces. From a machine learning perspective, learning from a finite dataset is typically posed as either a regularized empirical risk minimization problem or, in the interpolation regime, as the search for a minimal-norm function consistent with the observed data. In reproducing kernel Hilbert spaces, such formulations are well understood and lead to finite-dimensional solutions through classical representer theorems; in Banach spaces, however, the lack of Hilbertian geometry fundamentally changes this picture.

We show that featured RKBSs provide a natural function-space setting in which learning can still be interpreted as minimal-norm interpolation or regularization, while making explicit the additional structural assumptions required in the Banach case. We establish existence results using functional-analytic arguments and clarify how the geometry of the underlying Banach norm influences the solution set. In contrast to the Hilbert setting, representer-type solutions are no longer automatic. Nevertheless, we provide necessary and sufficient conditions for their existence, which yield generic representation theorems in the context of RKBs. These theorems are then used in subsequent sections to address training processes.

We emphasize that featured RKBSs provide a structural framework for learning formulations, while the existence of finite representer solutions requires additional assumptions developed later.

Let $W\subseteq \mathcal{F}(X^{\prime },\mathbb{K})$ be a Banach space, and $%
\Phi :X\rightarrow W$ be a feature map. Assume that $$W=\overline{%
\operatorname{Lin}\{\Phi (x):x\in X\}}^{^{\left\Vert \cdot \right\Vert }},$$ is 
not restrictive. Let $B_{\Phi }\subseteq \mathcal{F}(X,\mathbb{K})$ 
be the  RKBS 
associated with $\Phi $, that is, 
\begin{equation*}
B_{\Phi }:=\{f_{\mu }(\cdot )=\mu (\Phi (\cdot )):\mu \in W^{\ast }\},
\end{equation*}%
with $\left\Vert f_{\mu }\right\Vert :=\left\Vert \mu \right\Vert ,$ for
every $f_{\mu }\in B_{\Phi }.$ The kernel map associated with the feature map~$\Phi$ is the function
\mbox{$k : X \times X' \to \mathbb{K}$} given
by $k(x,x^{\prime }):=\delta _{x^{\prime }}(\Phi (x)),$ for every $x\in X$
and $x^{\prime }\in X^{\prime }$. In this way, the space $B_{\Phi }$ can
also be expressed in terms of $k$, as follows 
\begin{equation*}
B_{\Phi }:=\{f_{\mu }(\cdot )=\mu (k(x,\cdot )):\mu \in W^{\ast }\}.
\end{equation*}

Given the dataset $\mathcal{D}_{m}=\{(x_{i},y_{i})\}_{i=1}^{m} \subseteq (X,\mathbb{K})$, let $%
\mathcal{L}:B_{\Phi }\rightarrow \mathbb{K}^{m}$ be the mapping given for every $f_{\mu } \in B_{\Phi }, $ by  
\begin{equation*}
\mathcal{L}(f_{\mu })=\big(f_{\mu }(x_{1}),\dots ,f_{\mu }(x_{m})\big).
\end{equation*}%
Set $\boldsymbol{y}=(y_{1},\dots ,y_{m})\in \mathbb{K}^{m}$ and define 
\begin{equation*}
\mathcal{M}_{{\boldsymbol{y}}}:=\{f_{\mu }\in B_{\Phi }:\mathcal{L}(f_{\mu })=%
\boldsymbol{y}=(y_{1},\dots ,y_{m})\in \mathbb{K}^{m}\}.
\end{equation*}%
Then, the MNI problem in $B_{\Phi }$ associated with the dataset $\mathcal{D}%
_{m}$ is the following 
\begin{equation*}
 \operatorname*{arg\,min}\{\Vert f_{\mu }\Vert :f_{\mu }\in \mathcal{M}_{\boldsymbol{y}}\}.
\end{equation*}

This is what is meant, within this framework, by \emph{learning} a function
in $B_{\Phi }$ from the dataset $\mathcal{D}_{m}$. To solve this problem
define the Banach space 
\begin{equation*}
W^{\mathrm{data}}:=\mathrm{Lin}\{k(x_{1},\cdot ),\dots ,k(x_{m},\cdot
)\}\subseteq W \subseteq \mathcal{F}(X^{\prime },\mathbb{K}).
\end{equation*}%
Without loss of generality, we may assume that $\{k(x_{1},\cdot ),\dots
,k(x_{m},\cdot )\}$ are linearly independent (otherwise, we can work with a maximal linearly independent subset). The data map 
\begin{equation*}
\mu^{\mathrm{data}}:W^{\mathrm{data}}\rightarrow \mathbb{K}
\end{equation*}%
is defined as the unique linear map such that 
\begin{equation*}
\mu^{\mathrm{data}}(k(x_{k},\cdot ))=y_{k},\quad \text{for }k=1,\dots ,m.
\end{equation*}%
Note that  $f_{\mu }\in \mathcal{M}_{\boldsymbol{y}}$ if, and only if, $\mu$  is an extension of $%
\mu^{\mathrm{data}}$ to $W$ (as $\mu |_{W^{\mathrm{data}}}=\mu^{%
\mathrm{data}}$ means that $f_{\mu }(x_k)=y_k,$ for every $k=1,\dots ,m$).  

The Hahn-Banach Theorem  \cite[Corollary~6.5]%
{conway1994course} guarantees the existence of a \textbf{Hahn-Banach extension
of }$\mu^{\mathrm{data}}$ to $W$. This is a map $$\hat{\mu}^{\mathrm{data}}: W\rightarrow \mathbb{K}$$ such that $$\hat{\mu}^{\mathrm{data}} |_{W^{\mathrm{data}}}=\mu^{%
\mathrm{data}} \quad \text{and}\quad\Vert \hat{\mu}^{\mathrm{data}}\Vert =\Vert \mu^{\mathrm{data}}\Vert. $$ Thus,  $f_{\hat{\mu}^{\mathrm{data}}}\in \mathcal{M}_{\boldsymbol{y}}$ with 
$\Vert f_{\hat{\mu}^{\mathrm{data}}}\Vert =\Vert \mu^{\mathrm{data}}\Vert $. Since this extension has minimal norm, we obtain the following result. 

\vspace{0.1cm}
\begin{theorem}[MNI-Problem in $B_{\Phi }$]
\label{mnip} Let $W\subseteq \mathcal{F}(X^{\prime },\mathbb{K})$ be a
Banach space, and  $\Phi :X\rightarrow W$ a mapping such that $W=\overline{%
\mathrm{Lin}\{\Phi (x):x\in X\}}^{^{\left\Vert \cdot \right\Vert }}.$  Let $k:X\times X^{\prime }\rightarrow \mathbb{K}$ be the kernel associated with $\Phi$, and $%
B_{\Phi }\subseteq \mathcal{F}(X,\mathbb{K})$ the RKBS associated with $%
\Phi $. Given a dataset 
\begin{equation*}
\mathcal{D}_{m}=\{(x_{i},y_{i})\}_{i=1}^{m}
\end{equation*}%
consider the corresponding MNI problem 
\begin{equation}
\operatorname*{arg\,min}\{\Vert f_{\mu }\Vert :f_{\mu }\in \mathcal{M}_{\boldsymbol{y}}\}.
\label{MNI}
\end{equation}%
Then, $f_{\mu }\in B_{\Phi }$ is a solution of~\eqref{MNI} if, and only if,
$\mu $ is a Hahn-Banach extension of the associated data map $\mu^{%
\mathrm{data}}:W^{\mathrm{data}}:=\mathrm{Lin}\{k(x_{1},\cdot ),\dots ,k(x_{m},\cdot
)\}\rightarrow \mathbb{K}$, defined by 
\begin{equation*}
\mu^{\mathrm{data}}(k(x_{k},\cdot ))=y_{k},\quad \text{for }k=1,\dots ,m.
\end{equation*}%
Moreover, the set $\mathcal{S}_{\boldsymbol{y}}$ of solutions of~\eqref{MNI}
 is non-empty and convex.
\end{theorem}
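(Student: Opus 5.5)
The plan is to reduce everything to the identification $f_\mu\mapsto\mu$ between $B_\Phi$ and $W^\ast$, which is isometric by construction ($\|f_\mu\|=\|\mu\|$). First I would check that this map is well defined and injective. If $f_\mu=f_{\nu}$, then $\mu(\Phi(x))=\nu(\Phi(x))$ for every $x\in X$. By linearity and continuity, $\mu=\nu$ on $\overline{\mathrm{Lin}\{\Phi(x):x\in X\}}^{\|\cdot\|}=W$. Next, since $\Phi(x_k)=k(x_k,\cdot)$ by Proposition~\ref{equiker}, we have $f_\mu(x_k)=\mu(k(x_k,\cdot))$. Hence $f_\mu\in\mathcal{M}_{\boldsymbol{y}}$ if and only if $\mu|_{W^{\mathrm{data}}}=\mu^{\mathrm{data}}$, using linearity and the (assumed) linear independence of the $k(x_k,\cdot)$. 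So the MNI problem \eqref{MNI} is equivalent to minimizing $\|\mu\|$ over all extensions $\mu\in W^\ast$ of $\mu^{\mathrm{data}}$.

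The key inequality is that every extension $\mu$ satisfies $\|\mu\|\ge\|\mu^{\mathrm{data}}\|$, because the supremum defining $\|\mu\|$ runs over a larger unit ball than the one for $\|\mu^{\mathrm{data}}\|$. Here $\mu^{\mathrm{data}}$ is continuous because $W^{\mathrm{data}}$ is finite dimensional. The Hahn--Banach theorem (\cite[Corollary~6.5]{conway1994course}) provides an extension $\hat\mu^{\mathrm{data}}$ attaining equality. Therefore the minimal value of \eqref{MNI} is exactly $\|\mu^{\mathrm{data}}\|$, and $\mathcal{S}_{\boldsymbol{y}}\neq\varnothing$. For the equivalence: if $f_\mu$ is a solution, then $\mu$ extends $\mu^{\mathrm{data}}$ and $\|\mu\|=\|\mu^{\mathrm{data}}\|$, so $\mu$ is a Hahn--Banach extension. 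Conversely, a Hahn--Banach extension gives some $f_\mu\in\mathcal{M}_{\boldsymbol{y}}$ whose norm equals the lower bound, so $f_\mu$ is a minimizer.

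For convexity, take solutions $f_{\mu_1},f_{\mu_2}$ and $t\in[0,1]$. Then $tf_{\mu_1}+(1-t)f_{\mu_2}=f_{t\mu_1+(1-t)\mu_2}$, and $t\mu_1+(1-t)\mu_2$ still extends $\mu^{\mathrm{data}}$. Its norm is at most $t\|\mu_1\|+(1-t)\|\mu_2\|=\|\mu^{\mathrm{data}}\|$ and, by the lower bound, at least $\|\mu^{\mathrm{data}}\|$. Hence it is again a Hahn--Banach extension, so the convex combination is a solution.

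There is no serious obstacle. The only step needing care is the reduction to linearly independent $k(x_k,\cdot)$, and the observation that this reduction is harmless. If some $k(x_j,\cdot)$ is a linear combination of the others, the data must be consistent: the same combination of the $y_k$ must give $y_j$. Otherwise $\mathcal{M}_{\boldsymbol{y}}$ is empty and $\mu^{\mathrm{data}}$ is not well defined. I would state that consistency is implicitly assumed, exactly as in the construction of $\mu^{\mathrm{data}}$ preceding the theorem. Under that assumption, the interpolation constraints on the dependent points follow automatically from those on a maximal independent subset.
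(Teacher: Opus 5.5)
Your proof is correct and follows the same route as the paper: identify $\mathcal{M}_{\boldsymbol{y}}$ with the set of extensions of $\mu^{\mathrm{data}}$ to $W$, then use the Hahn--Banach theorem to get a norm-preserving extension. You also make explicit several things the paper leaves implicit: the lower bound $\|\mu\|\ge\|\mu^{\mathrm{data}}\|$ for every extension, the injectivity of $\mu\mapsto f_\mu$, the convexity argument, and the data-consistency assumption behind the ``without loss of generality'' reduction to linearly independent $k(x_k,\cdot)$.
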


According to Hadamard's classical criteria, a problem is well-posed if it
satisfies three conditions: existence of a solution, uniqueness of the
solution, and stability (i.e., the solution depends continuously on the data
so that small perturbations produce small changes in the solution), see \cite%
{Hadamard1923,uesbook}. In learning problems, stability often fails, 
as even small perturbations in the training data may lead to drastic changes in the solution. Consequently, this ill-posedness necessitates the introduction of regularization.

To regularize the MNI problem~\eqref{MNI}, one usually adds a
regularization term to the data-fidelity term as follows: 
\begin{equation}
\operatorname*{arg\,min} \left\{ Q_{\boldsymbol{y}}(\mathcal{L}(f_{\mu }))+\lambda _{0}\,\varphi
\!\left( \left\Vert f_{\mu }\right\Vert \right) :f_{\mu }\in B_{\Phi
}\right\} \quad \text{(Regularization Problem)},  \label{regu}
\end{equation}%
where $Q_{\boldsymbol{y}}:\mathbb{K}^{m}\rightarrow \mathbb{R}^{+}_{0}$ is a loss
function (with $Q_{\boldsymbol{y}}(\mathbf{y})=0$), $\varphi :\mathbb{R}^{+}\rightarrow \mathbb{R}^{+}_0$ is a
regularization function, and $\lambda _{0}>0$ is a regularization parameter. 
In general, both $Q_{\mathbf{y}}$ and $\varphi $ \ convex and coercive. Also 
$Q_{\mathbf{y}}$ is lower semicontinuous meanwhile $\varphi $ is continuous
and nondecreasing.

Note that the Regularization Problem is more general and flexible than the MNI Problem, since it no longer requires the strict constraint $\mathcal{L}(f_{\mu})=\boldsymbol{y}$. This allows for better adaptation to contexts where the data
are noisy or cannot be exactly interpolated. Recall that\textbf{\
overfitting }occurs when a model learns the training data too well,
including noise or irregularities that do not belong to the underlying
pattern. Such models often perform poorly on unseen data, as they capture
artifacts rather than general structure.

A solution to the Regularization Problem also exists. Indeed, if $f_{\mu
}^{0}\in B_{\Phi }$ is a solution of the associated MNI-problem, $ \operatorname*{arg\,min}\{\Vert f_{\mu }\Vert :f_{\mu }\in \mathcal{M}_{\boldsymbol{y}}\},$ then $%
\left\Vert f_{\mu }^{0}\right\Vert =\left\Vert \mu^{\mathrm{data}}\right\Vert $
$\ $and it follows that a solution of (\ref{regu}) can not have a norm
greater than $\left\Vert \mu^{\mathrm{data}}\right\Vert $. 
Consequently, if $\overline{B_{(W^{\mathrm{data}})^\ast}}(0, \left\Vert \mu^{\mathrm{data}}\right\Vert)$ denotes the closed ball of $(W^{\mathrm{data}})^\ast$ centered at zero with radius \ $\left\Vert \mu^{\mathrm{data}}\right\Vert$ then, the function $$\mathcal{R}_{\boldsymbol{y}}:\overline{B_{(W^{\mathrm{data}})^\ast}} \left( 0, \left\Vert \mu^{\mathrm{data}}\right\Vert \right)\rightarrow \mathbb{R}^{+}_{0}$$
given by $$\mathcal{R}_{\boldsymbol{y}}(\widetilde{\mu} )=Q_{\boldsymbol{y}}((\widetilde{\mu}(k(x_1,\cdot)),\cdots, \widetilde{\mu} (k(x_m,\cdot)))+\lambda _{0}\,\varphi \!\left( \left\Vert \widetilde{\mu}  \right\Vert
\right) ,$$ for every $\widetilde{\mu} \in \overline{B_{(W^{\mathrm{data}})^\ast}}(0, \left\Vert \mu^{\mathrm{data}}\right\Vert) $ attains its minimum \cite[Lemma~2, p.~62]{pareto}. If $\widetilde{\mu} _0: W^\text{data} \rightarrow\mathbb{K}$ is a functional at which $\mathcal{R}_\mathbf{y}$ attains its minimum, then any Hahn-Banach extension of $\mu_0$ is a solution of the problem (\ref{regu}). Therefore, the next results follows:

\begin{theorem}
\label{mni-re} 
Let $W\subseteq \mathcal{F}(X^{\prime },\mathbb{K})$ be a
Banach space, and  $\Phi :X\rightarrow W$ where $W=\overline{%
\mathrm{Lin}\{\Phi (x):x\in X\}}^{^{\left\Vert \cdot \right\Vert }}.$  Let $k:X\times X^{\prime }\rightarrow \mathbb{K}$ be the kernel associated with $\Phi$, and $%
B_{\Phi }\subseteq \mathcal{F}(X,\mathbb{K})$  the RKBS associated with $%
\Phi $. Given a dataset $\mathcal{D}_{m}=\{(x_{i},y_{i})\}_{i=1}^{m}$, consider 
 the regularization problem given by 
\begin{equation}
\operatorname*{arg\,min} \left\{ Q_{\boldsymbol{y}}(\mathcal{L}(f_{\mu }))+\lambda _{0}\,\varphi
\!\left( \left\Vert f_{\mu }\right\Vert \right) :f_{\mu }\in B_{\Phi
}\right\} .  \label{regula}
\end{equation}%
Define the data functional  $\mu^{\mathrm{data}}:W^{\mathrm{data}}:=\mathrm{Lin}\{k(x_{1},\cdot ),\dots ,k(x_{m},\cdot
)\}\rightarrow \mathbb{K}$, by 
\begin{equation*}
\mu^{\mathrm{data}}(k(x_{k},\cdot ))=y_{k},\quad \text{for }k=1,\dots ,m.
\label{datam}
\end{equation*}%
Then, the function $\overline{B_{(W^{\mathrm{data}})^\ast}}(0, \left\Vert \mu^{\mathrm{data}}\right\Vert)\rightarrow \mathbb{R}_0^+$
given by 
$$\mathcal{R}_{\boldsymbol{y}}(\widetilde{\mu} )=Q_{\boldsymbol{y}}((\widetilde{\mu}  (k(x_1,\cdot)),\cdots, \widetilde{\mu}  (k(x_m,\cdot)))+\lambda _{0}\,\varphi \!\left( \left\Vert \widetilde{\mu}  \right\Vert
\right) ,$$
 attains its minimum. Furthermore, the set of solutions of the minimization problem (\ref{regula}) consists of all functions  $f_{\widetilde{\mu}} \in B_{\Phi}$ for which  $\widetilde{\mu}  \in W^*$ is a Hahn-Banach extension of a minimizer   $\widetilde{\mu}_{0}\in \overline{B_{(W^{\mathrm{data%
}})^\ast}}(0,\left\Vert \mu^{\mathrm{data}}\right\Vert)$ of  $\mathcal{R}_{\boldsymbol{y}}$. 
\end{theorem}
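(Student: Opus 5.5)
The plan is to reduce the infinite-dimensional problem \eqref{regula} to a finite-dimensional one on $(W^{\mathrm{data}})^\ast$, exactly as in the MNI case (Theorem~\ref{mnip}). Set $J(\mu):=Q_{\boldsymbol{y}}(\mathcal{L}(f_\mu))+\lambda_0\varphi(\|\mu\|)$ for $\mu\in W^\ast$. The key observation is that $\mathcal{L}(f_\mu)=(\mu(k(x_1,\cdot)),\dots,\mu(k(x_m,\cdot)))$ depends only on the restriction $\nu:=\mu|_{W^{\mathrm{data}}}$. Moreover $\|\nu\|\le\|\mu\|$, and $\varphi$ is nondecreasing. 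Writing $\mathcal{R}_{\boldsymbol{y}}$ for the same formula extended to all of $(W^{\mathrm{data}})^\ast$, this gives
\begin{equation*}
J(\mu)\ \ge\ \mathcal{R}_{\boldsymbol{y}}(\mu|_{W^{\mathrm{data}}}).
\end{equation*}
Equality holds whenever $\mu$ is a Hahn--Banach extension of its restriction.

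\textbf{Step 1 (existence on the ball).} Since $W^{\mathrm{data}}$ has dimension at most $m$, its dual is finite-dimensional. Hence the closed ball $\overline{B_{(W^{\mathrm{data}})^\ast}}(0,\|\mu^{\mathrm{data}}\|)$ is compact. The map $\widetilde\mu\mapsto(\widetilde\mu(k(x_i,\cdot)))_i\in\mathbb{K}^m$ is linear, hence continuous. Since $Q_{\boldsymbol{y}}$ is lower semicontinuous and $\varphi$ and the norm are continuous, $\mathcal{R}_{\boldsymbol{y}}$ is lower semicontinuous on the ball. By Weierstrass' theorem (the cited Lemma of \cite{pareto}) it therefore attains its minimum at some $\widetilde\mu_0$.

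\textbf{Step 2 (the ball suffices).} For $\nu\in(W^{\mathrm{data}})^\ast$ with $\|\nu\|>\|\mu^{\mathrm{data}}\|$, we use $Q_{\boldsymbol{y}}\ge0$ and monotonicity of $\varphi$:
\begin{equation*}
\mathcal{R}_{\boldsymbol{y}}(\nu)\ \ge\ \lambda_0\varphi(\|\nu\|)\ \ge\ \lambda_0\varphi(\|\mu^{\mathrm{data}}\|)\ =\ \mathcal{R}_{\boldsymbol{y}}(\mu^{\mathrm{data}}),
\end{equation*}
where the last equality holds because $Q_{\boldsymbol{y}}(\boldsymbol{y})=0$. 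Hence $\min_{\text{ball}}\mathcal{R}_{\boldsymbol{y}}=\inf_{(W^{\mathrm{data}})^\ast}\mathcal{R}_{\boldsymbol{y}}$.

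\textbf{Step 3 (lifting to $W^\ast$).} Combining the displayed inequality with Step 2 gives $\inf_{W^\ast}J\ge\mathcal{R}_{\boldsymbol{y}}(\widetilde\mu_0)$. Now take a Hahn--Banach extension $\widetilde\mu$ of $\widetilde\mu_0$ (\cite[Corollary~6.5]{conway1994course}). It satisfies $\|\widetilde\mu\|=\|\widetilde\mu_0\|$ and has the same data values, so $J(\widetilde\mu)=\mathcal{R}_{\boldsymbol{y}}(\widetilde\mu_0)$. Thus $f_{\widetilde\mu}$ solves \eqref{regula}, and every Hahn--Banach extension of a ball minimizer is a solution.

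\textbf{Step 4 (converse, the expected obstacle).} Let $f_\mu$ be a solution and set $\nu=\mu|_{W^{\mathrm{data}}}$. Then
\begin{equation*}
\min\mathcal{R}_{\boldsymbol{y}}\ =\ J(\mu)\ \ge\ \mathcal{R}_{\boldsymbol{y}}(\nu)\ \ge\ \min\mathcal{R}_{\boldsymbol{y}},
\end{equation*}
so both inequalities are equalities. Equality in the first forces $\varphi(\|\mu\|)=\varphi(\|\nu\|)$.

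To conclude that $\|\mu\|=\|\nu\|$ (so that $\mu$ is a Hahn--Banach extension of $\nu$), and that $\nu$ lies in the ball and minimizes there, one needs $\varphi$ to be strictly increasing. Indeed, if $\varphi$ is merely nondecreasing and constant on an interval, non-norm-preserving extensions can also be minimizers. Under strict increase, $\|\nu\|\le\|\mu^{\mathrm{data}}\|$ also follows from the chain in Step 2: if $\|\nu\|>\|\mu^{\mathrm{data}}\|$, the inequality $\varphi(\|\nu\|)\ge\varphi(\|\mu^{\mathrm{data}}\|)$ becomes strict, so $\mathcal{R}_{\boldsymbol{y}}(\nu)$ would exceed the minimum. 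I would therefore make this standing assumption explicit (it is the usual one in representer-type results). Alternatively, without it, I would state the characterization as: the solution set contains all such Hahn--Banach extensions, and every solution has a restriction attaining the same minimal value of $\mathcal{R}_{\boldsymbol{y}}$.
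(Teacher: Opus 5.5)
Your Steps 1--3 are the paper's argument: restrict to the compact ball in the finite-dimensional space $(W^{\mathrm{data}})^\ast$, minimize the lower semicontinuous $\mathcal{R}_{\boldsymbol{y}}$ there by Weierstrass, and lift a minimizer by Hahn--Banach. You are more careful than the paper at two points.

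First, the paper justifies restricting to the ball by claiming that a solution cannot have norm larger than $\|\mu^{\mathrm{data}}\|$. That claim does not follow when $\varphi$ is merely nondecreasing. Your Step 2 replaces it with the correct statement that the minimum over the ball equals the infimum over all of $(W^{\mathrm{data}})^\ast$. Second, you make explicit the bound $J(\mu)\ge\mathcal{R}_{\boldsymbol{y}}(\mu|_{W^{\mathrm{data}}})$. This is what actually shows that a lifted minimizer is optimal against all of $W^\ast$; the paper leaves it implicit.

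The paper never argues the reverse inclusion, namely that every solution has this form. Your caveat in Step 4 is not a proof artifact: under the paper's standing assumptions the characterization is false. Here is a counterexample.
\begin{itemize}
\item Take $X'=\{1,2\}$ and $W=\mathbb{R}^2$ with the Euclidean norm, with $\Phi(x_1)=e_1$ and $\Phi(x_2)=e_2$.
\item Use the single datum $(x_1,\tfrac12)$, with $Q_{\boldsymbol{y}}(z)=|z-\tfrac12|^2$.
\item Take $\varphi(t)=\max\{0,t-1\}$, which is convex, continuous, nondecreasing and coercive.
\end{itemize}
The unique minimizer of $\mathcal{R}_{\boldsymbol{y}}$ on the ball is $\mu^{\mathrm{data}}$, and its only Hahn--Banach extension is $(\tfrac12,0)$. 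Yet every $\mu=(\tfrac12,t)$ with $\tfrac14+t^2\le 1$ has objective value $0$, so it is a solution. Such solutions have norm up to $1>\|\mu^{\mathrm{data}}\|$, which also refutes the paper's norm bound.

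So the version you propose is the right form of the theorem. When $\varphi$ is strictly increasing you get the full characterization, and your Step 4 proves it correctly. Otherwise only the inclusion ``Hahn--Banach extensions of minimizers are solutions'' holds.
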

\noindent
The Hahn–Banach theorem guarantees the existence of solutions to the minimal-norm interpolation problems under consideration. However, in general, the explicit construction of the desired extensions is unknown, except in certain specific cases (for instance in Hilbert spaces, via the Riesz--Fr\'{e}chet
theorem),  as the Hahn-Banach theorem is merely an existence theorem for the desired  functionals. In the next section, we investigate when a solution
can be built from the elements of $W^{\mathrm{data}}.$

\section{Representer theorems in special featured RKBS}
\label{sec:rep_theorems}

Representer theorems play a central role in kernel-based learning by reducing infinite-dimensional optimization problems to finite-dimensional ones. In reproducing kernel Hilbert spaces, such results follow directly from the Hilbertian structure and hold for broad classes of regularized learning problems. In Banach spaces, however, representer theorems are no longer guaranteed and their validity depends on additional geometric and duality properties of the underlying function space.

In this section, we investigate representer theorems in the setting of special featured RKBSs. We show that when a featured RKBS admits a suitable predual structure and satisfies additional assumptions—such as dataset-dependent conditions ensuring compatibility between evaluation functionals and the Banach geometry—solutions to minimal-norm interpolation and regularization problems admit finite kernel representations. These results precisely delineate when classical kernel-based learning principles extend to Banach spaces and when they fundamentally fail.

The classical Representer Theorem for RKHS  \cite{kimeldorf1971some} states that, 
given a dataset $$\mathcal{D}_{m}:=\{(x_{i},y_{i})\}_{i=1}^{i=%
m},$$ the solution of the associated MNI-norm problem in an RKHS always
exists, is unique and can be written as a finite linear combination of
functions that are kernel evaluations at the training points;\ that is $f^{
\mathrm{sol}}(\cdot ):=\sum_{i=1}^{m}\alpha _{i}k(\cdot
,x_{i}).$

If $B_{\Phi
}\subseteq \mathcal{F}(X,\mathbb{K})$ is a special featured RKBS,  that is a featured RKBS whose predual $B_{\ast }\subseteq 
\mathcal{F}(X^{\prime },\mathbb{K})$ is an RKBS then, by Theorem \ref{special}, we
know that $k(\cdot ,x^{\prime })\in B_{\Phi }$ and  also 
$\delta _{x^{\prime }}\in (B_{\ast })^{\ast },$ with $B_{\Phi } \equiv  (B_{\ast })^{\ast }$ and $k(\cdot ,x^{\prime
})\equiv \delta _{x^{\prime }}.$ Indeed,%
\begin{equation*}
B_{\Phi }=\overline{\mathrm{Lin}\{k(\cdot ,x^{\prime }):x^{\prime }\in
X^{\prime }\}}^{w\ast }\equiv \overline{\mathrm{Lin}\{\delta _{x^{\prime
}}:x^{\prime }\in X^{\prime }\}}^{w\ast }=(B_{\ast })^{\ast }.
\end{equation*}
It is therefore natural to ask under what conditions a solution $f_{\mu }\in $ $%
B_{\Phi }$ of the MNI-problem (\ref{MNI}) is of the form 
$$f^{\mathrm{sol}}(\cdot
)= \sum_{i=1}^{k}\beta _{i}k(\cdot ,x_{i}^{\prime }).$$
That is, to determine the conditions under which a solution of (\ref{MNI}), which is known to exist in  the space \break$B_{\Phi }=\overline{\mathrm{Lin}\{k(\cdot ,x^{\prime
}):x^{\prime }\in X^{\prime }\}}^{w\ast }$, actually belongs to $%
\mathrm{Lin}\{k(\cdot ,x^{\prime }):x^{\prime }\in X^{\prime }\}.$ In the following result, we characterize when this occurs.

For simplicity, throughout this section, we assume that the base field is $\mathbb{K}=\mathbb{R}$. (The extension of the following results to the complex case is straightforward).  

\begin{theorem}[Representer theorem in featured RKBSs]

\label{finM}Let $B_{\Phi }\subseteq \mathcal{F}(X,\mathbb{R})$ be a
special featured RKBS and \break $k:X\times X^{\prime }\rightarrow \mathbb{R}$ the corresponding kernel function. Given a dataset $\mathcal{D}_{%
m}:=\{(x_{i},y_{i})\}_{i=1}^{m}\,\ $consider the
associated MNI\ problem  
\begin{equation}
\label{MNI-final}
\arg \min \{\left\Vert f_{\mu }\right\Vert :f_{\mu }\in \mathcal{M}_{y}\}.
\end{equation}%
This problem has a solution of the type $$f^{\mathrm{sol}}(\cdot) := \sum_{i=1}^{k} \beta_i\, k(\cdot, x_i'),$$
for some  $ x_1', \dots, x_k' \in X'$, if and only if there exist
$\alpha_1, \dots, \alpha_{m} \in \mathbb{R}$ with

\begin{equation}
    \label{eq5}
    \left\|
        \sum_{i=1}^{m} \alpha_i\, k(x_i, \cdot)
    \right\| \leq 1
\end{equation}

and $\beta_1, \dots, \beta_k \in  \mathbb{R}$ satisfying that
\begin{equation}
\label{eq6}
    (\alpha_1 \cdots \alpha_{m})
\begin{pmatrix}
k(x_1, x_1') & \cdots & k(x_1, x_k') \\
\vdots & \ddots & \vdots \\
k(x_{m}, x_1') & \cdots & k(x_{m}, x_k')
\end{pmatrix}
\begin{pmatrix}
\beta_1 \\ \vdots \\ \beta_k
\end{pmatrix}
= \| f^{\mathrm{sol}}\|,
\end{equation}
and 
\begin{equation}
\label{eq7}
    \begin{pmatrix}
    k(x_1, x_1') & \cdots & k(x_1, x_k') \\
    \vdots & \ddots & \vdots \\
    k(x_{m}, x_1') & \cdots & k(x_{m}, x_k')
    \end{pmatrix}
    \begin{pmatrix}
    \beta_1 \\ \vdots \\ \beta_k
    \end{pmatrix}
    =
    \begin{pmatrix}
    y_1 \\ \vdots \\ y_m
    \end{pmatrix}.
    \end{equation}

Moreover, $f^{\mathrm{sol}}(\cdot) := \sum_{i=1}^{k} \beta_i\, k(\cdot, x_i')$ attains its norm at $\sum_{i=1}^{m} \alpha_i k(x_i, \cdot)$.

\end{theorem}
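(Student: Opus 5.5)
The plan is to translate everything into the dual pairing $B_\Phi\equiv (B_\ast)^\ast=W^\ast$ given by Theorem \ref{special} and then apply the Hahn--Banach characterization of Theorem \ref{mnip}. Here $W=B_\ast=\overline{\mathrm{Lin}\{k(x,\cdot):x\in X\}}^{\|\cdot\|}$ and $\Phi(x)=k(x,\cdot)$. A candidate $f^{\mathrm{sol}}=\sum_{i=1}^k\beta_i k(\cdot,x_i')$ corresponds, under $k(\cdot,x')\equiv\delta_{x'}$, to the functional
\[
\mu:=\sum_{i=1}^k\beta_i\,\delta_{x_i'}\in W^\ast ,
\]
so that $f^{\mathrm{sol}}=f_\mu$ and $\|f^{\mathrm{sol}}\|=\|\mu\|$. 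Two elementary computations will be used throughout. First, for each $j$,
\[
f_\mu(x_j)=\mu(k(x_j,\cdot))=\sum_{i}\beta_i k(x_j,x_i').
\]
Hence $f_\mu\in\mathcal M_{\boldsymbol y}$ is exactly condition \eqref{eq7}, which is equivalent to $\mu|_{W^{\mathrm{data}}}=\mu^{\mathrm{data}}$. Second, for $w=\sum_j\alpha_j k(x_j,\cdot)\in W^{\mathrm{data}}$,
\[
\mu(w)=\sum_{j,i}\alpha_j k(x_j,x_i')\beta_i ,
\]
which is the left-hand side of \eqref{eq6}.

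For necessity, I will assume that $f^{\mathrm{sol}}=f_\mu$ solves \eqref{MNI-final}. Then \eqref{eq7} holds, and by Theorem \ref{mnip} $\mu$ is a Hahn--Banach extension of $\mu^{\mathrm{data}}$, so $\|\mu\|=\|\mu^{\mathrm{data}}\|$. Since $W^{\mathrm{data}}$ is finite dimensional, its closed unit ball is compact. Therefore the supremum defining $\|\mu^{\mathrm{data}}\|$ is attained at some $w=\sum_j\alpha_jk(x_j,\cdot)$ with $\|w\|\le 1$. Replacing $w$ by $-w$ if necessary (we are over $\mathbb R$), I can take $\mu^{\mathrm{data}}(w)=\|\mu^{\mathrm{data}}\|$. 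This $w$ gives \eqref{eq5}. Since $\mu$ extends $\mu^{\mathrm{data}}$, we get $\mu(w)=\mu^{\mathrm{data}}(w)=\|\mu\|=\|f^{\mathrm{sol}}\|$, which is \eqref{eq6}. It also shows that $f^{\mathrm{sol}}$ attains its norm at $w$, which gives the final assertion.

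For sufficiency, I will take $\alpha,\beta$ satisfying \eqref{eq5}--\eqref{eq7} and put $w=\sum_j\alpha_jk(x_j,\cdot)$. By \eqref{eq7}, $\mu$ extends $\mu^{\mathrm{data}}$, so $\|\mu\|\ge\|\mu^{\mathrm{data}}\|$. On the other hand, \eqref{eq5} and \eqref{eq6} give
\[
\|\mu\|=\|f^{\mathrm{sol}}\|=\mu(w)=\mu^{\mathrm{data}}(w)\le\|\mu^{\mathrm{data}}\|\,\|w\|\le\|\mu^{\mathrm{data}}\|.
\]
Hence $\|\mu\|=\|\mu^{\mathrm{data}}\|$, so $\mu$ is a Hahn--Banach extension of $\mu^{\mathrm{data}}$, and Theorem \ref{mnip} shows that $f^{\mathrm{sol}}$ solves \eqref{MNI-final}. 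The same chain of (in)equalities shows that the norm is attained at $w$. If $\|\mu^{\mathrm{data}}\|=0$, everything degenerates consistently, with $\alpha=0$ and $\mu=0$.

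The main obstacle I expect is the bookkeeping of the identifications rather than any hard estimate. Specifically, one has to check that $\|f^{\mathrm{sol}}\|$ computed in $B_\Phi$ equals $\|\sum\beta_i\delta_{x_i'}\|_{W^\ast}$. This relies on Theorem \ref{special}, $f_{\delta_{x'}}=k(\cdot,x')$, together with linearity of $\mu\mapsto f_\mu$. One also has to check that the norm of $\mu^{\mathrm{data}}$ is attained, which is where the finite dimensionality of $W^{\mathrm{data}}$ and the restriction to real scalars, used to fix the sign, enter. The linear independence assumption on the $k(x_i,\cdot)$ is not needed for this argument.
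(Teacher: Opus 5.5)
Your proposal is correct and follows essentially the same route as the paper. You identify $f^{\mathrm{sol}}$ with $\mu=\sum_i\beta_i\delta_{x_i'}$ via Theorem~\ref{special}, read \eqref{eq7} as $\mu|_{W^{\mathrm{data}}}=\mu^{\mathrm{data}}$ and \eqref{eq6} as $\mu(w)=\|\mu\|$, and apply the Hahn--Banach characterization of Theorem~\ref{mnip} in both directions. Two points the paper leaves implicit are spelled out in your version: the norming element $w$ exists because the unit ball of the finite-dimensional space $W^{\mathrm{data}}$ is compact, and $\|\mu\|=\mu^{\mathrm{data}}(w)\le\|\mu^{\mathrm{data}}\|$.
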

\begin{proof}
By Theorem \ref{special}, we have $k(\cdot, x_i')=f_{\delta_{x_i'}}$, as $B_{\Phi}$ is a special featured RKBS. Therefore, 
$$ f^{\mathrm{sol}} (\cdot ):= \sum_{i=1}^{k} \beta_{i}k(\cdot,{x_i'})=\sum_{i=1}^{k} \beta_{i}  f_{\delta_{x_i'}}= f_{\sum_{i=1}^{k} \beta _{i} \delta_{x_i'} }.
$$
If $\ f^{\mathrm{sol}}(\cdot ):=\sum_{i=1}^{k}\beta _{i}k(\cdot
,x_{i}^{\prime })$ is a solution of (\ref{MNI-final}), then  $\sum_{i=1}^{k}\beta
_{i}k(x_{k},x_{i}^{\prime })=y_{k},$ as $\ f^{\mathrm{sol}%
}(x_{j})=y_{j},$ for every $j=1\cdots, m.$ Hence,   (\ref{eq7}) holds.
Moreover, by Theorem  \ref{mnip}, the functional $\sum_{i=1}^{k} \beta _{i} \delta_{x_i'} \in W^*$ is a Hahn-Banach extension the data map  $$\mu^{%
\mathrm{data}}:W^{\mathrm{data}}:=\mathrm{Lin}\{k(x_{1},\cdot ),\dots ,k(x_{m},\cdot
)\}\rightarrow \mathbb{R},$$ defined in (\ref{datam}). Consequently, there
exists $v_{0}:=\sum_{i=1}^{m}\alpha _{i}k(x_{i},\cdot)\in W^{\mathrm{data}}$, such that $\left\Vert v_{0} \right\Vert \leq 1$ and 
$$
\sum_{i=1}^{k} \beta _{i} \delta_{x_i'}(v_0) = \left\lvert \sum_{i=1}^{k} \beta _{i} \delta_{x_i'}(v_0) \right\rvert =\left\Vert \sum_{i=1}^{k} \beta _{i} \delta_{x_i'} \right\Vert = \left\Vert f^{\mathrm{sol}}\right\Vert.
$$
Since $\sum_{i=1}^{k} \beta _{i} \delta_{x_i'}(v_0)=\sum_{i=1}^{k} \beta _{i} \delta_{x_i'}(\sum_{i=1}^{m}\alpha _{i}k(x_{i},\cdot))= \sum_{i,j} \alpha _{i}  \beta _{j} k(x_{i},x_j'),$ we conclude that (\ref{eq6}) is also satisfied.

Conversely, by (\ref{eq7}) we have that $\sum_{i=1}^{k} \beta _{i} \delta_{x_i'}$ is an extension of $\mu^{\mathrm{data}}$. At the same time,  by 
(\ref{eq5}) and (\ref{eq6}) we deduce that  $%
v_{0}:=\sum_{i=1}^{m}\alpha _{i}k(x_{i},\cdot )\in W^{%
\mathrm{data}}$ is such that $\left\Vert v_{0} \right\Vert \leq 1$ and $ \sum_{i=1}^{k} \beta _{i} \delta_{x_i'}(v_0) = \left\Vert \sum_{i=1}^{k} \beta _{i} \delta_{x_i'} \right\Vert = \left\Vert f^{\mathrm{sol}}\right\Vert$, so that $\sum_{i=1}^{k} \beta _{i} \delta_{x_i'}$ is a
Hahn-Banach extension of $\mu^{\mathrm{data}}$. By Theorem  \ref{mnip}, we conclude that $f^{\mathrm{sol}}(\cdot )= f_{\sum_{i=1}^{k} \beta _{i} \delta_{x_i'}}$ is a solution of the MNI problem (\ref{MNI-final}), as desired. The rest is clear.
\end{proof}

\paragraph{Interpretation and implications.}
Theorem~\ref{finM} clarifies when Banach-space learning problems admit finite kernel representations, despite the lack of Hilbert structure. The existence of a suitable predual and compatibility with the dataset ensure that solutions of minimal-norm interpolation can be expressed as finite linear combinations of kernel evaluations. This result explains why special featured RKBSs support kernel-style learning algorithms and provides the structural basis for the vector-valued and neural-network constructions developed in subsequent sections.

These conditions are not artifacts of the proof technique, but reflect genuine geometric obstructions in general Banach spaces.

In order to find sufficient conditions for having a solution of the type $$
f^{\mathrm{sol}}(\cdot ):= \sum_{i=1}^{k}\beta _{i}k(\cdot
,x_{i}^{\prime }),$$ for some $x_{1}^{\prime },$ $\cdots $ $,x_{k}^{\prime
}\in X^{\prime },$ we introduce the next definition.

\begin{definition}
Let $B_{\Phi }\subseteq \mathcal{F}(X,\mathbb{R})$ be a featured RKBS and
let $k:X\times X^{\prime }\rightarrow \mathbb{R\,\ }$be the corresponding
kernel function. We say that a dataset $\mathcal{D}_{m%
}:=\{(x_{i},y_{i})\}_{i=1}^{m}\,\ $is\textbf{\ {nice}} if there
exist $x_{1}^{\prime },\cdots ,x_{m}^{\prime }\in X^{\prime }$
such that the matrix%
\begin{equation*}
(k(x_{i},x_{j}^{\prime }))_{ij}\in M_{m\times m}(%
\mathbb{R})
\end{equation*}%
is invertible.
\end{definition}

Note that nice datasets  are those $\mathcal{D}_{m%
}:=\{(x_{i},y_{i})\}_{i=1}^{m}$ for which there exists elements $x_{1}^{\prime
},\cdots ,x_{m}^{\prime }\in X^{\prime }$ such that the
functionals in $(W^{\mathrm{data}})^{\ast }=(\mathrm{Lin}\{k(x_{1},\cdot
),\cdots ,k(x_{m},\cdot )\})^{\ast }$ are uniquely determined by
its evaluations in $x_{1}^{\prime },\cdots ,x_{m}^{\prime }.$
More precisely, the interest of working with nice datasets is the following
one.

\begin{proposition}
\label{anterior}
Let $B_{\Phi }\subseteq \mathcal{F}(X,\mathbb{R})$ be a special featured RKBS with associated kernel function  $k:X\times X^{\prime }\rightarrow \mathbb{R}.$ Given a dataset $\mathcal{D}_{m}:=\{(x_{i},y_{i})\}_{i=1}^{m},\,$\, and arbitrary   $x_{1}^{\prime },\cdots ,x_{%
m}^{\prime }\in X^{\prime }$, we have that the matrix $$
(k(x_{i},x_{j}^{\prime }))_{ij}\in M_{m\times m}(
\mathbb{R})$$ is invertible if, and only if,  $\{\delta _{x_{1}^{\prime
}|W^\text{data}},\cdots ,\delta _{x_{m}^{\prime }|W^\text{data}}\}$ is a basis of 
\begin{equation*}
\ (W^{\mathrm{data}})^{\ast }=(\mathrm{Lin}\{k(x_{1},\cdot ),\cdots ,k(x_{%
m},\cdot )\})^{\ast }=(\mathrm{Lin}\{\Phi (x_{1} ),\cdots
,\Phi (x_{m})\})^{\ast }.
\end{equation*}
\end{proposition}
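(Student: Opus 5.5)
The plan is to reduce everything to finite-dimensional linear algebra on $W^{\mathrm{data}}$. First, use the standing assumption of Section~\ref{sec:rkbs} that $k(x_1,\cdot),\dots,k(x_m,\cdot)$ are linearly independent. Then $W^{\mathrm{data}}$ has dimension $m$ and $\{k(x_1,\cdot),\dots,k(x_m,\cdot)\}$ is a basis of it. Consequently $(W^{\mathrm{data}})^{\ast}$ is also $m$-dimensional; it is the algebraic dual, since every linear functional on a finite-dimensional normed space is continuous. Since $B_\Phi$ is special featured, $B_\ast$ is an RKBS, so each $\delta_{x_j'}$ belongs to $(B_\ast)^\ast$. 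Hence its restriction $\delta_{x_j'}|_{W^{\mathrm{data}}}$ is a well-defined element of $(W^{\mathrm{data}})^{\ast}$. Moreover,
\[
\delta_{x_j'}|_{W^{\mathrm{data}}}\big(k(x_i,\cdot)\big)=k(x_i,x_j').
\]
The identification $\mathrm{Lin}\{k(x_i,\cdot)\}=\mathrm{Lin}\{\Phi(x_i)\}$ is immediate from $\Phi(x)=k(x,\cdot)$ (Proposition~\ref{equiker}).

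Next, since we have $m$ functionals in an $m$-dimensional space, being a basis is equivalent to linear independence. So it suffices to show that linear independence of the restricted functionals is equivalent to invertibility of $A=(k(x_i,x_j'))_{ij}$. Take $\gamma\in\mathbb{R}^m$ with $\sum_j\gamma_j\delta_{x_j'}|_{W^{\mathrm{data}}}=0$. This holds if and only if the functional vanishes on every basis vector $k(x_i,\cdot)$, that is, $\sum_j\gamma_j k(x_i,x_j')=0$ for all $i$, i.e.\ $A\gamma=0$. Hence the restrictions are linearly independent exactly when $\ker A=\{0\}$, which for a square matrix means $A$ is invertible.

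An equivalent way to phrase this, which I may present instead, is through the dual basis. Let $\{e_i^\ast\}$ be the dual basis of $\{k(x_i,\cdot)\}$. Then $\delta_{x_j'}|_{W^{\mathrm{data}}}=\sum_i k(x_i,x_j')e_i^\ast$, so $A$ is the coordinate matrix of the family $\{\delta_{x_j'}|_{W^{\mathrm{data}}}\}$, and a family of $m$ vectors is a basis exactly when its coordinate matrix is invertible.

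There is no serious obstacle. The only point needing care is the reliance on the linear independence of the $k(x_i,\cdot)$. Without it, $\dim W^{\mathrm{data}}<m$, the matrix $A$ has linearly dependent rows and so cannot be invertible, while $m$ functionals cannot form a basis of a space of dimension less than $m$. Both sides of the equivalence would then fail, so the statement remains true, and I would remark on this for completeness.
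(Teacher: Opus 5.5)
Your proposal is correct and is essentially the paper's proof. Both arguments test $\sum_j\gamma_j\delta_{x_j'}|_{W^{\mathrm{data}}}=0$ on the basis $\{k(x_i,\cdot)\}$ of $W^{\mathrm{data}}$, which reduces linear independence of the restricted evaluations to $\ker A=\{0\}$, and both use the dimension count $m$ to pass between independence and being a basis. The only difference is that the paper does not invoke the standing assumption that the $k(x_i,\cdot)$ are linearly independent: it derives this inside each direction (from invertibility of $A$, resp.\ from $\dim (W^{\mathrm{data}})^\ast=m$), which is what your closing remark supplies. That remark is valid, as is the paper's converse, when $\{\delta_{x_j'}|_{W^{\mathrm{data}}}\}$ is read as an indexed family of $m$ functionals.
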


\begin{proof}
If $(k(x_{i},x_{j}^{\prime }))_{ij}\in M_{m\times m}(%
\mathbb{R})$ is invertible, then the vectors 
\begin{equation*}
\{(k(x_{1},x_{1}^{\prime }),\cdots ,k(x_{1},x_{m}^{\prime })),\cdots
,(k(x_{m},x_{1}^{\prime }),\cdots ,k(x_{m},x_{m}^{\prime }))\}
\end{equation*}%
are linearly independent. It follows that the functions $\{k(x_{1},\cdot
),\cdots ,k(x_{m},\cdot )\}$ also are linearly independent and consequently 
they determine a basis of $W^{\mathrm{data}}.$ Furthermore, if $\alpha
_{i}\in \mathbb{R}$, for $i= 1,\cdots ,m$, then we have that $\psi :=\alpha
_{1}\delta _{x_{1}^{\prime }|W^\text{data}}+\cdots +\alpha _{m}\delta _{x_{%
m}^{\prime }|W^\text{data}}=0$ if, and only if, $\psi (k(x_{k},\cdot ))=0,$ for $%
k=1,\cdots ,m.$ This means that $(k(x_{i},x_{j}^{\prime }))_{ij}%
\boldsymbol{\alpha }^T=0$, where $\boldsymbol{\alpha }=(\alpha _{1},\cdots,\alpha _{%
m})\in M_{1\times m}(\mathbb{R)}$. Since  $%
(k(x_{i},x_{j}^{\prime }))_{ij}$ is invertible it follows that  $\boldsymbol{\alpha }$ $=0$
and hence $\{\delta _{x_{1}^{\prime }|W^\text{data}},\cdots ,\delta _{x_{m%
}^{\prime }|W^\text{data}}\}$ are linear independent and therefore  determine a basis of $\
(W^{\mathrm{data}})^{\ast }.$

Conversely, if $\{\delta _{x_{1}^{\prime }|W^\text{data}},\cdots ,\delta _{x_{m%
}^{\prime }|W^\text{data}}\}$ is a basis of $(W^{\mathrm{data}})^{\ast }$ then we have that $\dim
((W^{data})^{\ast })=m$, and therefore, $\dim (W^{\mathrm{data}})=%
m$. Moreover, $$W^{\mathrm{data}}=\mathrm{Lin}%
\{k(x_{1},\cdot ),\cdots ,k(x_{m},\cdot )\}$$ so that 
$\{k(x_{1},\cdot ),\cdots ,k(x_{m},\cdot )\}$ is a basis. If $%
(k(x_{i},x_{j}^{\prime }))_{ij}$ is not invertible, then there exists a
non-zero $\boldsymbol{\alpha}\in M_{1\times m}(\mathbb{R})$ such that $(k(x_{i},x_{j}^{\prime }))\boldsymbol{\alpha }^{T}=0$. This means that $\sum_{i=1}^{m}\alpha _{i}\delta _{x_{i}^{\prime
}|W^\text{data}}(k(x_{j},\cdot ))=0,$ for every $j=1,\cdots ,m,$ so that $%
\sum_{i=1}^{m}\alpha _{i}\delta _{x_{i}^{\prime }|W^\text{data}}=0,$ a
contradiction.
\end{proof}

If the dataset $\mathcal{D}_{m}:=\{(x_{i},y_{i})\}_{i=1}^{i=
m}$ is nice, then the function  $\mu^{\mathrm{data}}\in (W^{\mathrm{data}
})^{\ast }$ given by $ \mu^{\mathrm{data}}(k(x_{k},\cdot ))=y_{k},$ for $k=1,\cdots ,
m,$ is such that $ \mu^{\mathrm{data}}:=\sum_{i=1}^{m
}\beta _{i}\delta _{x_{i}^{\prime }|W^\text{data}}$ for a unique $\boldsymbol{\beta }:=(\beta
_{1}$ $\cdots $ $\beta _{m})\in M_{1\times m}(\mathbb{R}).$ Because of this it makes sense to ask when
the extension of $\mu^{\mathrm{data}} $ given by 
\begin{equation*}
\widehat{\mu}^{\mathrm{data}}:=\sum_{i=1}^{m}\beta _{i}\delta _{x_{i}^{\prime }}:W:=%
\overline{\mathrm{Lin}\{k(x,\cdot ):x\in X\}}^{\left\Vert \cdot \right\Vert
}\rightarrow \mathbb{R}
\end{equation*}
is a Hahn-Banach extension of $\mu^{\mathrm{data}} $  and, consequently, whether $f_{\widehat{\mu}^{\mathrm{data}}} $ is a
solution of the MNI problem we are dealing with. Obviously, this is the case whenever
 $\left\Vert \mu^{\mathrm{data}}\right\Vert $ is the biggest possible norm of an extension $\widehat{\mu}^{\mathrm{data}}$,
as we show next.

\vspace{0.1cm}
\begin{corollary}
\label{nice1}Let $B_{\Phi }\subseteq \mathcal{F}(X,\mathbb{R})$ be a special
featured RKBS and \mbox{$k:X\times X^{\prime }\rightarrow \mathbb{R\,\ }$}be
the corresponding kernel function. Let $\mathcal{D}_{%
m}:=\{(x_{i},y_{i})\}_{i=1}^{m}\,$ be a nice data set and let $%
x_{1}^{\prime },\cdots ,x_{m}^{\prime }\in X^{\prime }$ be such
that the matrix \break $M:=(k(x_{i},x_{j}^{\prime }))_{ij}\in M_{m\times 
m}(\mathbb{R})$ is invertible. If $\boldsymbol{\beta }:=(\beta _{1} \dots\beta _{m}) \in  M_{1\times m}(\mathbb{R}) $ is such that $\boldsymbol{\beta }^{T}:=M^{-1}\boldsymbol{y}^{T},$
where $\boldsymbol{y=}(y_{1},\cdots ,y_{m}),$ then the set of
solutions of the problem%
\begin{equation}
    \arg \min \{\left\Vert f_{\mu }\right\Vert :f_{\mu }\in \mathcal{M}_{y}\}
    \label{otravez}
\end{equation}
is  $\mathcal{S}_{y}:=\{f_{\mu }\in B_{\Phi }: \mu \text{ is a
Hahn-Banach extension of }\mu^{\mathrm{data}}:=\sum_{i=1}^{%
m}\beta _{i}\delta_{x'_{i}|W^\text{data}}\}.$

Consequently, if $\left\Vert \mu^{\mathrm{data}}\right\Vert
=\sum_{i=1}^{m}\left\vert \beta _{i}\right\vert
\left\Vert \delta_{x'_{i}}\right\Vert $, then $f^{\mathrm{sol}%
}(\cdot ):=\sum_{i=1}^{m}\beta _{i}k(\cdot
,x_{i}^{\prime })$ is a solution of (\ref{otravez}).
\end{corollary}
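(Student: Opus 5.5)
The first assertion is an application of Theorem~\ref{mnip}. Since the dataset is nice, Proposition~\ref{anterior} shows that $\{\delta_{x'_1|W^{\mathrm{data}}},\dots,\delta_{x'_m|W^{\mathrm{data}}}\}$ is a basis of $(W^{\mathrm{data}})^\ast$. In particular $\{k(x_1,\cdot),\dots,k(x_m,\cdot)\}$ is linearly independent, so $\mu^{\mathrm{data}}$ is well defined. We then check that the data map equals $\sum_{i}\beta_i\delta_{x'_i|W^{\mathrm{data}}}$ with $\boldsymbol\beta^T=M^{-1}\boldsymbol y^T$. Both are linear functionals on $W^{\mathrm{data}}$, so it suffices to compare them on the basis $k(x_j,\cdot)$, $j=1,\dots,m$. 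There
\[
\sum_{i=1}^m\beta_i\,\delta_{x'_i}(k(x_j,\cdot))=\sum_{i=1}^m k(x_j,x'_i)\beta_i=(M\boldsymbol\beta^T)_j=y_j=\mu^{\mathrm{data}}(k(x_j,\cdot)).
\]
Hence the two functionals coincide. Theorem~\ref{mnip} then says that $f_\mu$ solves (\ref{otravez}) if and only if $\mu$ is a Hahn--Banach extension of this functional, which is exactly the description of $\mathcal S_y$.

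For the ``consequently'' part, set $\widehat\mu:=\sum_{i=1}^m\beta_i\delta_{x'_i}\in W^\ast=(B_\ast)^\ast$. This is legitimate because $B_\Phi$ is special, so each $\delta_{x'_i}$ is continuous on $W=B_\ast$. The functional $\widehat\mu$ restricts to $\mu^{\mathrm{data}}$ on $W^{\mathrm{data}}$, which gives $\|\widehat\mu\|\ge\|\mu^{\mathrm{data}}\|$. The triangle inequality gives $\|\widehat\mu\|\le\sum_i|\beta_i|\,\|\delta_{x'_i}\|$. Under the hypothesis $\|\mu^{\mathrm{data}}\|=\sum_i|\beta_i|\,\|\delta_{x'_i}\|$ these two bounds squeeze together, so $\|\widehat\mu\|=\|\mu^{\mathrm{data}}\|$ and $\widehat\mu$ is a Hahn--Banach extension. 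By the first part, $f_{\widehat\mu}$ is a solution.

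It remains to identify $f_{\widehat\mu}$. Theorem~\ref{special} gives $f_{\delta_{x'_i}}=k(\cdot,x'_i)$, and $\mu\mapsto f_\mu$ is linear, so $f_{\widehat\mu}=\sum_i\beta_i k(\cdot,x'_i)=f^{\mathrm{sol}}$.

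The only point needing care is the sandwich $\|\mu^{\mathrm{data}}\|\le\|\widehat\mu\|\le\sum_i|\beta_i|\|\delta_{x'_i}\|$. The lower bound holds because every extension has norm at least that of the restriction, and the upper bound is the triangle inequality. Everything else is bookkeeping with results already established in the paper.
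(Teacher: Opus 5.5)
Your proof is correct and follows the paper's route. You identify $\mu^{\mathrm{data}}$ with $\sum_i\beta_i\delta_{x'_i|W^{\mathrm{data}}}$ and apply Theorem~\ref{mnip}; for the second part you combine the triangle inequality with the norm hypothesis and use $f_{\delta_{x'_i}}=k(\cdot,x'_i)$ from Theorem~\ref{special}. Two steps are spelled out more fully than in the paper: your direct basis check that $M\boldsymbol{\beta}^T=\boldsymbol{y}^T$ gives the identification, and the lower bound $\|\mu^{\mathrm{data}}\|\le\|\sum_i\beta_i\delta_{x'_i}\|$, which the paper uses only tacitly in its ``so that'' step.
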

\begin{proof}

Let $\mu^{\mathrm{data}}\in (W^{\mathrm{data}
})^{\ast }$ be the map given by $ \mu^{\mathrm{data}}(k(x_{k},\cdot ))=y_{k},$ for $k=1,\cdots ,m.$ From Proposition~\ref{anterior} it follows that $ \mu^{\mathrm{data}}:=\sum_{i=1}^{m}\beta _{i}\delta _{x_{i}^{\prime }|W^\text{data}}$ for a unique vector $\boldsymbol{\beta }:=(\beta
_{1}\dots\beta _{m})$ in $ M_{1\times m}(\mathbb{R})$. Obviously $\boldsymbol{\beta }$ is determined by the equality $\boldsymbol{\beta }^{T}:=M^{-1}\boldsymbol{y}^{T}.$

Since $k(\cdot ,x_{i}^{\prime })= f_{\delta_{x'_{i}}}$ by  Theorem \ref{special}, if $f^{\mathrm{sol}}(\cdot ):=\sum_{i=1}^{m}\beta
_{i}k(\cdot ,x_{i}^{\prime })$, then 
$$
\left\Vert f^{\mathrm{sol}} \right\Vert=\left\Vert f_{\sum_{i=1}^{m}\beta _{i} \delta_{x'_{i}}} \right\Vert =\left\Vert \sum_{i=1}^{m}\beta _{i} \delta_{x'_{i}}\right\Vert \leq\sum_{i=1}^{m}\left\vert \beta _{i}\right\vert
\left\Vert \delta_{x'_{i}}\right\Vert =\left\Vert \mu^{\mathrm{data}}\right\Vert,
$$ 
so that $\left\Vert f^{\mathrm{sol}} \right\Vert=\left\Vert \mu^{\mathrm{data}}\right\Vert$. This proves that $f^{\mathrm{sol}}$ is a solution of (\ref{otravez}) by Theorem \ref{mnip}.
\end{proof}

The above result is helpful for describing the algorithm that we will use in last
section of this paper. Indeed, this algorithm will be based in the next
corollary.

For any $\alpha\in \mathbb{R}$ define $\mathrm{sign}(\alpha):=+1$ if $\alpha >0,$ $\ 
\mathrm{sign}(\alpha ):=-1$ if $\alpha<0$ and $\mathrm{sign}(0):=0$. 

If $
\boldsymbol{\alpha }:=\boldsymbol{(}\alpha _{1},\cdots ,\alpha _{m})\in 
 M_{1\times m}(\mathbb{R})$ define 
\begin{equation*}
\mathrm{sign}(\boldsymbol{\alpha }):=\boldsymbol{(}\mathrm{sign}(\alpha _{1}),\cdots
,\mathrm{sign}(\alpha _{m}))\in \Delta^{m},
\end{equation*}
where $\Delta :=\{-1,0,1\}$ and $\Delta ^{m}:=\Delta \times 
\overset{m}{\cdots }\times \Delta .$

\vspace{0.1cm}
\begin{definition}
\label{admisible}
Let $B_{\Phi }\subseteq \mathcal{F}(X,\mathbb{R})$ be a featured
RKBS and let $\mathcal{D}_{%
m}:=\{(x_{i},y_{i})%
\}_{i=1}^{m}$ be a dataset. We say that sign vector $\boldsymbol{s}:=(s_{1},\cdots ,s_{%
m})\in \Delta^{m}$ is \textbf{admissible }if $$\left\Vert \sum_{i=1}^{{m%
}}s _{i}k(x_{i},\cdot )\right\Vert \leq 1.$$ 
\end{definition}

\begin{corollary}
\label{nice2}
Let $B_{\Phi}\subseteq \mathcal{F}(X,\mathbb{R})$ be a special
featured RKBS and \(\mbox{$k : X \times X' \to \mathbb{R}$}\) be
the corresponding kernel function. Given a \textit{nice} dataset $\mathcal{D}_{%
m}:=\{(x_{i},y_{i})\}_{i=1}^{m},$ let $%
x_{1}^{\prime },\cdots ,x_{m}^{\prime }\in X^{\prime }$ be such
that the matrix \break$M:=(k(x_{i},x_{j}^{\prime }))_{ij}\in M_{m%
\times m}(\mathbb{R})$ is diagonal, and satisfies that $$
\left\vert k(x_{i},x_{i}^{\prime })\right\vert =\left\Vert k(\cdot
,x_{i}^{\prime }))\right\Vert , \hspace{0.2cm}\text{for every} \hspace{0.2cm}i\in \{1,\dots ,m\}.$$ Let \(\mbox{$\boldsymbol{\beta } := (\beta_1, \dots, \beta_m)\in  M_{1\times m}(\mathbb{R})$}\)
be such that $%
\boldsymbol{\beta }^{T}:=M^{-1}\boldsymbol{y}^{T},$ where $\boldsymbol{y=}(y_{1},\cdots ,y_{%
m}).$ If $\mathrm{sign}(\boldsymbol{y })$ is admissible then  $\
f^{\mathrm{sol}}(\cdot ):=\sum_{i=1}^{m}\beta
_{i}k(\cdot ,x_{i}^{\prime })$ is a solution of $$\arg \min \{\left\Vert
f_{\mu }\right\Vert :f_{\mu }\in \mathcal{M}_{y}\}$$
and $\left\Vert f^{\mathrm{sol}}\right\Vert =\sum_{i=1}^{m%
}\left\vert \beta _{i}\right\vert \left\Vert k(\cdot ,x_{i}^{\prime
})\right\Vert .$
\end{corollary}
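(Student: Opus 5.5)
The plan is to apply Theorem~\ref{finM} directly, taking $k=m$, the given points $x_1',\dots,x_m'$, the coefficients $\boldsymbol\beta$, and the sign vector $\boldsymbol{\alpha}:=\mathrm{sign}(\boldsymbol y)$ as the norming element. Alternatively, I would argue through Theorem~\ref{mnip}: exhibit an upper bound and a matching lower bound for the norm of every interpolant.

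\emph{Interpolation and upper bound.} Since the dataset is nice and $M$ is diagonal and invertible, every $k(x_i,x_i')\neq 0$. Hence $\beta_i=y_i/k(x_i,x_i')$, and $M\boldsymbol\beta^T=\boldsymbol y^T$ is exactly condition (\ref{eq7}). By Theorem~\ref{special}, $k(\cdot,x_i')=f_{\delta_{x_i'}}$, so
\[
f^{\mathrm{sol}}=f_{\sum_i\beta_i\delta_{x_i'}} \in \mathcal M_{\boldsymbol y}.
\]
The identification $B_\Phi\equiv W^\ast$ is isometric, so $\|\delta_{x_i'}\|=\|k(\cdot,x_i')\|$. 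The triangle inequality then gives
\[
\|f^{\mathrm{sol}}\|\le \sum_{i=1}^m|\beta_i|\,\|k(\cdot,x_i')\|=\sum_{i=1}^m|\beta_i|\,|k(x_i,x_i')|=\sum_{i=1}^m|y_i|,
\]
where the middle equality uses the hypothesis $|k(x_i,x_i')|=\|k(\cdot,x_i')\|$.

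\emph{Lower bound.} Put $v_0:=\sum_i s_i k(x_i,\cdot)\in W^{\mathrm{data}}$ with $\boldsymbol s=\mathrm{sign}(\boldsymbol y)$. Admissibility gives $\|v_0\|\le 1$. Take any $f_\mu\in\mathcal M_{\boldsymbol y}$, so $\mu(k(x_i,\cdot))=y_i$. Then
\[
\|f_\mu\|=\|\mu\|\ge |\mu(v_0)|=\Big|\sum_i s_i y_i\Big|=\sum_i|y_i|.
\]
Thus every interpolant has norm at least $\sum_i|y_i|$, which equals $\|\mu^{\mathrm{data}}\|$. Combining this with the upper bound yields
\[
\|f^{\mathrm{sol}}\|=\sum_i|\beta_i|\,\|k(\cdot,x_i')\|,
\]
so $f^{\mathrm{sol}}$ is a minimizer by Theorem~\ref{mnip}. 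Equivalently, the computation
\[
\boldsymbol s\,M\,\boldsymbol\beta^T=\sum_i s_i y_i=\|f^{\mathrm{sol}}\|
\]
is precisely condition (\ref{eq6}) with $\boldsymbol\alpha=\boldsymbol s$, while (\ref{eq5}) is admissibility, so Theorem~\ref{finM} applies as well.

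The only delicate point is the norm bookkeeping. I need to check that $\|\delta_{x_i'}\|$ computed in $W^\ast$ coincides with $\|k(\cdot,x_i')\|$ in $B_\Phi$; this follows from Definition~\ref{fea} and Theorem~\ref{special}. I also need the identity $|y_i|=|\beta_i|\,|k(x_i,x_i')|$, which is where the diagonal structure of $M$ is essential. Once these are in place, the upper and lower bounds meet exactly.
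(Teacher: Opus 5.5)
Your proposal is correct and follows the paper's route: apply Theorem~\ref{finM} with $\boldsymbol{\alpha}=\mathrm{sign}(\boldsymbol{y})$. Admissibility supplies (\ref{eq5}), and the diagonal, invertible $M$ supplies (\ref{eq6}) and (\ref{eq7}).

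Your explicit sandwich is worth keeping. On one side, $\sum_i|y_i|\le\|f_\mu\|$ for every interpolant. On the other, $\|f^{\mathrm{sol}}\|\le\sum_i|\beta_i|\,\|k(\cdot,x_i')\|=\sum_i|y_i|$. Together these justify the norm identity that the paper's displayed chain only asserts. Your computation $\boldsymbol{s}M\boldsymbol{\beta}^T=\sum_i s_iy_i$ is also cleaner: it avoids the paper's intermediate term $\sum_i s_i|k(x_i,x_i')|\beta_i$, which tacitly assumes the diagonal entries $k(x_i,x_i')$ are positive.
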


\begin{proof}
If  $\mathrm{sign}(\boldsymbol{y })= (s _{1},\cdots ,s _{%
m})\in \Delta^{m},$ then $\left\Vert
\sum_{i=1}^{m}s _{i}k(x_{i},\cdot )\right\Vert\leq
1,$ as $ \mathrm{sign}(\boldsymbol{y })$ is admissible. Moreover, $ f^{\mathrm{sol}}$ attains its norm in $\sum_{i=1}^{m}s _{i}k(x_{i},\cdot ),$ as 
\begin{equation*}
\boldsymbol{\alpha }M\boldsymbol{\beta }^{T}=\sum_{i=1}^{%
m} s_i\left\vert
k(x_{i},x_{i}^{\prime })\right\vert \beta _{i}  =\sum_{i=1}^{m%
}\left\vert \beta _{i}\right\vert \left\Vert k(\cdot ,x_{i}^{\prime
}))\right\Vert =\left\Vert f^{\mathrm{sol}}\right\Vert .
\end{equation*}%
Therefore, Theorem \ref{finM} applies with $\boldsymbol{\alpha }=\mathrm{sign}(\boldsymbol{y })= (s _{1},\cdots ,s _{%
m}),$ completing the proof.  
\end{proof}

\section{Vector Valued RKBS}
\label{sec:vector_valued}
\textcolor{blue}{
}Many modern machine learning problems involve vector-valued outputs, including multi-task learning, structured prediction, and multi-output neural networks. While vector-valued reproducing kernel Hilbert spaces provide a well-established framework for such problems, their Banach-space counterparts remain less explored. In this section, we extend the theory of featured RKBSs to vector-valued function spaces, thereby enabling the study of multi-output learning problems under non-Hilbertian norms.

We introduce vector-valued featured RKBSs and develop the associated kernel constructions, showing how they naturally decompose learning problems into coordinate-wise components while preserving a unified functional-analytic structure. This formulation provides the mathematical basis for interpreting multi-output models, including neural networks with multiple outputs, within a kernel-based Banach-space framework.


To this end, we consider Banach spaces $B\subseteq \mathcal{F}(X,\mathbb{K}%
^{m}),$ where 
\begin{equation*}
\mathcal{F}(X,\mathbb{K}^{m}):=\{f:X\rightarrow \mathbb{K}^{m}:f\text{ }\ 
\text{is a function}\}. 
\end{equation*}%
As usual, this means that the Banach space $B$ is a vector subspace of $\mathcal{F}(X,\mathbb{K}%
^{m})$ (so that $\mathbb{K}$ is the base field of $B$).

Denote by $\mathcal{L}(B,\mathbb{K}^{m})$ the mapping of all linear maps
from $B$ to $\mathbb{K}^{m}.$ According to \cite[Definition 2.1]{LIN2021101514}, we have the
following definition.

\vspace{0.1cm}
\begin{definition}
A \textbf{vector valued RKBS}$\,$\ is a Banach space $B\subseteq \mathcal{F}%
(X,\mathbb{K}^{m})$ such that
the mapping $\delta :X\to \mathcal{L}(B,\mathbb{K}^{m})$ given by $%
\delta _{x}(f)=f(x)$ is continuous for every $x\in X.$
\end{definition}

It is well known that the linear map  $\delta _{x}:B\to \mathbb{K}^{m}$ is
continuous if and only if there exists \thinspace $C_{x}\geq 0$ such that 
\begin{equation*}
\left\vert \pi _{i}\delta _{x}(f)\right\vert :=\left\vert \pi
_{i}f(x)\right\vert \leq C_{x}\left\Vert f\right\Vert ,\text{ for every }%
f\in B\text{ and every }i=1,\dots ,m.
\end{equation*}%
where $\pi _{i}:\mathbb{K}^{m}\to \mathbb{K}\,\ $\ is the canonical
proyection over the $i$-th coordinate. \ In other words, the continuty of $%
\delta _{x}$ is equivalent to the continuity of $\pi _{i}\delta _{x},$ for
every $i=1,\dots ,m.$ Consequently, we have the following result.

\vspace{0.1cm}
\begin{proposition}
Let $B\subseteq \mathcal{F}(X,\mathbb{K}^{m})\,\ $be a Banach space and
let 
\begin{equation*}
B_{i}:=\{\pi _{i}f:X\to \mathbb{K}:f\in B\}\subseteq \mathcal{F}(X,%
\mathbb{K}),\quad (i=1,\dots ,m).
\end{equation*}%
Then, $B$ is a vector valued RKBS if, and only if, $B_{i}$ is an RKBS,$\,\ $\
for every $i=1,\dots ,m$.
\end{proposition}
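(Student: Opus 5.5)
The plan is to reduce everything to the coordinatewise inequality stated just before the proposition. The one point that needs care is the norm on $B_i$, which the statement leaves implicit. I will equip $B_i$ with the quotient norm induced by the linear surjection
$$P_i:B\to B_i,\qquad P_i(f):=\pi_i f,$$
namely
$$\|g\|_{B_i}:=\inf\{\|f\|: f\in B,\ \pi_i f=g\}.$$
Under this norm $B_i$ is identified with $B/N_i$, where
$$N_i:=\ker P_i=\{f\in B:\pi_i f(x)=0 \text{ for all } x\in X\}.$$
This is a genuine norm making $B_i$ a Banach space exactly when $N_i$ is closed in $B$, since the quotient of a Banach space by a closed subspace is Banach and $P_i$ induces a linear bijection $B/N_i\to B_i$.

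For the forward direction, assume $B$ is a vector valued RKBS. Then each functional $\pi_i\delta_x:B\to\mathbb{K}$ is continuous, so
$$N_i=\bigcap_{x\in X}\ker(\pi_i\delta_x)$$
is closed, and $B_i\equiv B/N_i$ is a Banach space. To get continuity of point evaluation on $B_i$, take $g\in B_i$ and any $f\in B$ with $\pi_i f=g$. The coordinatewise inequality gives
$$|g(x)|=|\pi_i f(x)|\le C_x\|f\|,$$
and taking the infimum over all such $f$ yields $|g(x)|\le C_x\|g\|_{B_i}$. Hence $B_i$ is an RKBS.

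For the converse, assume each $B_i$ is an RKBS with the quotient norm, with constants $C^{(i)}_x$. By definition of the quotient norm, $\|\pi_i f\|_{B_i}\le\|f\|$. Therefore
$$|\pi_i\delta_x(f)|=|(\pi_i f)(x)|\le C^{(i)}_x\|\pi_i f\|_{B_i}\le C^{(i)}_x\|f\|.$$
Setting $C_x:=\max_i C^{(i)}_x$ gives the coordinatewise inequality, and by the remark preceding the proposition $\delta_x:B\to\mathbb{K}^m$ is continuous for every $x$. So $B$ is a vector valued RKBS.

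The main obstacle is the converse direction, where we must already know that $B_i$ is a Banach space, i.e. that $N_i$ is closed. I would make this explicit as the standing convention: the hypothesis "$B_i$ is an RKBS" includes that the quotient norm is a Banach norm. Indeed, if $\|\cdot\|_{B_i}$ is a norm, then $\|\pi_i f\|_{B_i}=\operatorname{dist}(f,N_i)$ vanishes exactly on $N_i$, which forces $N_i=\overline{N_i}$. Alternatively, if the authors intend some other Banach norm on $B_i$, the same argument works provided $P_i$ is bounded, with the bound $\|\pi_i f\|_{B_i}\le\|f\|$ replaced by $\|P_i\|\,\|f\|$. I would state this boundedness assumption explicitly rather than leave it implicit.
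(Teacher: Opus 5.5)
Your proof is correct and is essentially the paper's argument. The paper derives the proposition directly from the remark that $\delta_x$ is continuous if and only if every $\pi_i\delta_x$ is, and it silently treats continuity of $\pi_i\delta_x$ on $B$ as the statement that $B_i$ is an RKBS without ever specifying a norm on $B_i$; your quotient norm $\|g\|_{B_i}=\inf\{\|f\|: f\in B,\ \pi_i f=g\}$, together with the closedness of $N_i=\bigcap_{x}\ker(\pi_i\delta_x)$, is exactly what makes that identification rigorous.

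Your closing aside needs one caveat. For an arbitrary Banach norm on $B_i$ with $P_i$ bounded, the forward direction is not literally the same infimum argument. It instead follows from the open mapping theorem, which makes any such norm equivalent to the quotient norm.
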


Let $B\subseteq \mathcal{F}(X,\mathbb{K}^{m})\,\ $be a vector valued RKBS. This means that $\pi _{i}\delta _{x}\in B^{\ast },$ for
every $i=1,\dots ,m.$ A possible reason why this occurs is that $\pi _{i}\delta _{x}\in J(w_x^i)$ for some $w_x^i\in B_\ast$,  for every $i=1,\dots,m.$ As we have shown already (see Proposition~\ref{propo}), $w_x^i$ has to be unique, which gives rise to a function $\Phi _{i}:X\rightarrow
B_{\ast }$ such that $J_{\Phi _{i}(x)}\equiv\pi _{i}\delta _{x}$. This motivates the following definition.

\vspace{0.1cm}
\begin{definition}
Let $B\subseteq \mathcal{F}(X,\mathbb{K}^{m})$ a Banach space with a predual 
$B_{\ast }\subseteq \mathcal{F}(X^{\prime },\mathbb{K}),$ such that $(B_{\ast })^{\ast }\equiv B$.  A \textbf{vector valued feature map on} $(B,B_{\ast })$ is a map 
$$\Phi :=(\Phi
_{1},\dots ,\Phi _{m}),$$
where  $\Phi _{i}:X\rightarrow B_{\ast }$, \ for
every $i=1,\dots ,m$, satisfying that 
\begin{equation*}
B_{\ast } =\overline{\mathrm{Lin}\{\Phi _{i}(x):x\in X,\text{ }i=1,\dots
,m\}}^{\|\cdot\|}.
\end{equation*}%
The \textbf{vector valued featured RKBS associated}  with $\Phi =(\Phi_1, \cdots \Phi_m)$ is
\begin{equation}
\label{vect}
B_{\Phi }:=\{f_{\mu }:\mu \in (B_{\ast })^{\ast }\}\subseteq \mathcal{F}(X,%
\mathbb{K}^{m}),
\end{equation}%
where, for every $x\in X$,%
\begin{equation*}
f_{\mu }(x):=\mu (\Phi (x))=(\mu (\Phi _{1}(x)),\cdot \cdot \cdot ,\mu (\Phi
_{m}(x)))\in \mathbb{K}^{m},
\end{equation*}%
is a Banach space provided with the norm $\left\Vert f_{\mu
}\right\Vert :=\left\Vert \mu \right\Vert .$ In fact, 
 $B_{\Phi }\equiv (B_{\ast })^{\ast }\equiv B.$ 
\end{definition}

\begin{remark}
According to Definition \ref{feature}, in the scalar-valued setting a
feature map is map $\Phi :X\rightarrow W\subseteq \mathcal{F}(X^{\prime },%
\mathbb{K}),$ where $X$ and $X^{\prime }$ are non-empty sets and $W$ is a
Banach space. Moreover, the RKBS associated with \textbf{\ }$\Phi $ is the space $%
B_{\Phi }$ described in Example \ref{pro}, and $B_{\ast }:=\overline{\mathrm{%
Lin}\{\Phi (x):x\in X\}}^{\Vert \cdot \Vert }$ is a predual of $B_{\Phi }.$
Therefore, in view of Definition \ref{vect}, more precisely we can say that $\Phi $ is a feature map on $(B_{\Phi },B_{\ast
}).$
\end{remark}

Every vector valued feature map has associated a kernel function, as we show
next.

\vspace{0.1cm}
\begin{definition}
\label{vvkf}
Let $B\subseteq \mathcal{F}(X,\mathbb{K}^{m})$ be a Banach space with a predual 
$B_{\ast }\subseteq \mathcal{F}(X^{\prime },\mathbb{K}).$ A \textbf{%
vector valued kernel function on } $(B,B_{\ast })$ is a map $$k :=(k_{1},\dots
,k_{m}):X\times X^{\prime }\rightarrow \mathbb{K}^{m}$$ such that 
\begin{equation*}
B_{\ast } =\overline{\mathrm{Lin}\{k_{i}(x,\cdot ):x\in X,\text{ }i=1,\dots
,m\}}^{\|\cdot\|}
\end{equation*}%
and $k_{i}(x,x')=\delta _{x^{\prime
}}(k_{i}(x,\cdot )),$ for every $x\in X,$  $x^{\prime }\in X^{\prime },\,\ $%
and $i=1,\dots ,m.$
\end{definition}

The proof of the next result is straightforward. 

\vspace{0.1cm}
\begin{proposition}
\label{phik}
Let $B\subseteq \mathcal{F}(X,\mathbb{K}^{m})$ be a Banach space with a predual 
\mbox{$B_{\ast }\subseteq \mathcal{F}(X^{\prime },\mathbb{K})$}. Every vector valued feature map $\Phi :=(\Phi _{1},\dots ,\Phi _{m})$ on $(B,B_{\ast })$
has associated a unique vector valued kernel function on $(B,B_{\ast }),$
say $k=(k_{1},\dots ,k_{m}):X\times X^{\prime }\rightarrow \mathbb{K}^{m},$
and \textit{vice versa}, satisfying that  $k_{i}(x,\cdot ):=\Phi _{i}(x),$ for every $%
i=1,\dots ,m$.
\end{proposition}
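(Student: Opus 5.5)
The plan mirrors the scalar case, Proposition~\ref{equiker}, applied coordinatewise, with one extra check: the closed-span condition required in Definition~\ref{vvkf}. There are two directions plus uniqueness.

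\emph{From feature map to kernel.} Let $\Phi=(\Phi_1,\dots,\Phi_m)$ be a vector valued feature map on $(B,B_\ast)$. For each $i$ and each $x\in X$ we have $\Phi_i(x)\in B_\ast\subseteq\mathcal{F}(X',\mathbb{K})$, so $\Phi_i(x)$ is a function on $X'$. Define
\[
k_i(x,x'):=\delta_{x'}(\Phi_i(x))=[\Phi_i(x)](x'), \qquad k:=(k_1,\dots,k_m).
\]
With this choice $k_i(x,\cdot)=\Phi_i(x)$ as elements of $\mathcal{F}(X',\mathbb{K})$. The identity $k_i(x,x')=\delta_{x'}(k_i(x,\cdot))$ then holds tautologically. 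The spanning condition
\[
B_\ast=\overline{\mathrm{Lin}\{k_i(x,\cdot):x\in X,\ i=1,\dots,m\}}^{\|\cdot\|}
\]
is exactly the spanning condition assumed for $\Phi$, because the two generating sets coincide.

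\emph{From kernel to feature map.} Conversely, given a vector valued kernel $k$ on $(B,B_\ast)$, set $\Phi_i(x):=k_i(x,\cdot)$. Each $\Phi_i(x)$ belongs to $B_\ast$ because it lies in the generating set of $B_\ast$ in Definition~\ref{vvkf}. The closed span of $\{\Phi_i(x)\}$ is $B_\ast$ for the same reason as before, so $\Phi$ is a vector valued feature map on $(B,B_\ast)$.

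\emph{Uniqueness.} Suppose $\widetilde k$ is another vector valued kernel with $\widetilde k_i(x,\cdot)=\Phi_i(x)$. Then for every $x'$,
\[
\widetilde k_i(x,x')=\delta_{x'}(\widetilde k_i(x,\cdot))=\delta_{x'}(\Phi_i(x))=k_i(x,x').
\]
Symmetrically, a feature map satisfying $\Phi_i(x)=k_i(x,\cdot)$ is forced to be the one constructed above. The two constructions are mutually inverse, which gives the bijective correspondence.

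The only point needing care is that $\delta_{x'}$ here means pointwise evaluation of a function in $\mathcal{F}(X',\mathbb{K})$. No continuity of $\delta_{x'}$ on $B_\ast$ is needed, since $B_\ast$ is not assumed to be an RKBS. I expect no genuine obstacle, as everything is definitional.
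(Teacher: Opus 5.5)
Your proposal is correct and takes the same route the paper intends: the paper only calls the proof straightforward, and the argument is the coordinatewise version of Proposition~\ref{equiker}, setting $\Phi_i(x):=k_i(x,\cdot)$ and conversely $k_i(x,x'):=\delta_{x'}(\Phi_i(x))$, with the closed-span condition carried over because the two generating sets coincide. Your extra remarks are accurate: uniqueness holds because a function on $X\times X'$ is determined by its sections, and $\delta_{x'}$ is used here only as pointwise evaluation, so no continuity on $B_\ast$ is required.
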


Let $k$ be a vector valued kernel function on $(B,B_{\ast }).$ The previous
result allows us to express the space $B_{\Phi }$ defined in (\ref{vect}) in
terms of the associated  kernel. That is, to consider the space
\begin{equation}
B_{\Phi }^{k}:=\{\widehat{f}_{\mu }:\mu \in (B_{\ast })^{\ast }\}\subseteq 
\mathcal{F}(X,\mathbb{K}^{m}),
\end{equation}%
where, for every $x\in X$,
\begin{equation*}
\widehat{f}_{\mu }(x):=\mu (k(x,\cdot ))=(\mu (k_{1}(x)),\cdot \cdot \cdot
,\mu (k_{m}(x,\cdot )))\in \mathbb{K}^{m}.
\end{equation*}
We say that $B_{\Phi }^{k}$ is the \ \textbf{RKBS associated with the vector valued
kernel function} $k.$ Clearly, $B_{\Phi }^{k}$ is a Banach space provided
with the norm $\left\Vert \widehat{f_{\mu }}\right\Vert :=\left\Vert \mu
\right\Vert .$ In fact, $B_{\Phi }=B_{\Phi }^{k}\equiv (B_{\ast })^{\ast
}\equiv B$, under the identification $$f_{\mu }=\widehat{f}_{\mu }\equiv \mu
\equiv \varphi (\mu ),$$ whenever $(B_{\ast })^{\ast }\overset{\varphi }{%
\equiv }B.$

The following result will be useful for characterizing vector-valued featured RKBS. The definition of such spaces will be introduced afterwards.

\begin{theorem}
\label{t224} Let $B\subseteq \mathcal{F}(X,\mathbb{K}^{m})\,\ $be a Banach
space and let $B_{\ast }\subseteq \mathcal{F}(X^{\prime },\mathbb{K}^{m})$
be a predual of $B,$ with $B\overset{{\varphi }}{\equiv }(B_{\ast })^{\ast }$.
Then, the following assertions are equivalent.

$\mathrm{(i)}$\ For every $x\in X$, \ and every  $i=1,\dots ,m,$ there exists 
$w_{x}^{i}\in B_{\ast }$ such that  $\pi _{i}\delta _{x}\overset{{\varphi }^*}\equiv J_{w_{x}^{i}}\in J(B_{\ast })$.

$\mathrm{(ii)}$\ For every $i=1,\dots ,m$, there exist a  map  $\Phi_{i}:X\rightarrow B_{\ast }$  such that $J_{\Phi _{i}(x)} \overset{{\varphi }^*}\equiv \pi
_{i}\delta _{x},$ for every $x\in X.$

$\mathrm{(iii)}$ There exists a vector valued feature map $\Phi :=(\Phi
_{1},\dots ,\Phi _{m})$ on $(B, B_{\ast })$, such that $B=B_{\Phi }.$

$\mathrm{(iv)}$\ There exists $k=(k_{1},...,k_{m}):X\times X^{\prime
}\rightarrow \mathbb{K}^{m}$, such that $k_{i}(x,\cdot )\in B_{\ast }$ and $\pi _{i}\delta _{x}  \overset{{\varphi }^*}\equiv J_{k_{i}(x,\cdot )},$ for every $x\in X$ and  $i=1,...,m$.

$\mathrm{(v)}$ There exists a \ vector valued kernel function on $(B,B_{\ast })$%
, say $k:X\times X^{\prime }\rightarrow \mathbb{K}^{m}$, such that $%
B=B_{\Phi }^{k}.$

If the above assertions are fulfilled then $\Phi $ and $k$ are unique.
\end{theorem}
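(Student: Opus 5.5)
We prove the equivalences along the cycle (i)$\Rightarrow$(ii)$\Rightarrow$(iii)$\Rightarrow$(iv)$\Rightarrow$(v)$\Rightarrow$(i), mimicking Proposition~\ref{propo} and Theorem~\ref{th216} coordinatewise.

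\textbf{(i)$\Rightarrow$(ii).} Fix $i$ and $x$. The element $w_x^i$ is unique because $J$ is injective and $\varphi^\ast$ is bijective: if $J_{w}=J_{\tilde w}=(\varphi^\ast)^{-1}(\pi_i\delta_x)$, then $w=\tilde w$. So we may set $\Phi_i(x):=w_x^i$.

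\textbf{(ii)$\Rightarrow$(iii).} The main point is to show
\[
B_\ast=\overline{\mathrm{Lin}\{\Phi_i(x):x\in X,\ i=1,\dots,m\}}^{\|\cdot\|},
\]
which is what makes $\Phi=(\Phi_1,\dots,\Phi_m)$ a vector valued feature map. We argue as in Proposition~\ref{propo} via annihilators.
\begin{itemize}
\item Let $\mu\in(B_\ast)^\ast$ vanish on every $\Phi_i(x)$, and write $\mu=\varphi(f)$.
\item The relation $J_{\Phi_i(x)}\varphi=\pi_i\delta_x$ (Proposition~\ref{prop1}) gives $\pi_i f(x)=\mu(\Phi_i(x))=0$ for all $i$ and $x$.
\item Hence $f=0$, so $\mu=0$.
\item Therefore the annihilator of the span is trivial, and \cite[Corollary~2.9]{conway1994course} yields norm density.
\end{itemize}
It remains to check $B=B_\Phi$. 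For $f\in B$,
\[
f_{\varphi(f)}(x)=\bigl(\varphi(f)(\Phi_1(x)),\dots,\varphi(f)(\Phi_m(x))\bigr)=\bigl(\pi_1f(x),\dots,\pi_mf(x)\bigr)=f(x).
\]
Since $\varphi$ is onto $(B_\ast)^\ast$, this gives $B=\{f_\mu:\mu\in(B_\ast)^\ast\}=B_\Phi$. The norms agree because $\varphi$ is an isometry.

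\textbf{(iii)$\Rightarrow$(iv).} Put $k_i(x,\cdot):=\Phi_i(x)$, that is, $k_i(x,x')=\delta_{x'}(\Phi_i(x))$, as in Proposition~\ref{phik}. We must verify $\pi_i\delta_x=J_{k_i(x,\cdot)}\varphi$. This is where the identification in $B=B_\Phi$ enters: for $f=f_\mu$ with $\mu=\varphi(f)$ we have $\pi_i f(x)=\mu(\Phi_i(x))=J_{\Phi_i(x)}(\varphi(f))$. We take the identification $\varphi(f_\mu)=\mu$ implicit in $B_\Phi\equiv(B_\ast)^\ast$, and we expect this bookkeeping to be the main obstacle.

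\textbf{(iv)$\Rightarrow$(v).} The density of $\mathrm{Lin}\{k_i(x,\cdot)\}$ in $B_\ast$ follows by the same annihilator argument used in (ii)$\Rightarrow$(iii). So $k$ is a vector valued kernel function on $(B,B_\ast)$. The equality $B=B^k_\Phi$ then follows as before, with $\Phi_i(x):=k_i(x,\cdot)$.

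\textbf{(v)$\Rightarrow$(i).} Assume $B=B^k_\Phi$ via $\varphi(\widehat f_\mu)=\mu$. Then
\[
\pi_i\delta_x(\widehat f_\mu)=\mu(k_i(x,\cdot))=J_{k_i(x,\cdot)}(\mu),
\]
so $w_x^i:=k_i(x,\cdot)$ works.

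\textbf{Uniqueness.} If $\Phi$ and $\widetilde\Phi$ both satisfy the conditions, then $J_{\Phi_i(x)}=(\varphi^\ast)^{-1}(\pi_i\delta_x)=J_{\widetilde\Phi_i(x)}$. Injectivity of $J$ gives $\Phi_i=\widetilde\Phi_i$. Uniqueness of $k$ then follows from Proposition~\ref{phik}.
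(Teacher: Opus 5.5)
Your proof is correct and follows essentially the paper's proof. It uses the same three ingredients: uniqueness of $w_x^i$ via injectivity of $J$, the annihilator argument giving density of $\mathrm{Lin}\{\Phi_i(x)\}$ in $B_\ast$, and the identity $f=f_{\varphi(f)}$. The paper merely arranges them as (i)$\Leftrightarrow$(ii) and (ii)$\Rightarrow$(iii)$\Rightarrow$(i), and cites Proposition~\ref{phik} for (iv) and (v) instead of running your single cycle.

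The convention you flag in (iii)$\Rightarrow$(iv), reading $B=B_\Phi$ as $\varphi(f_\mu)=\mu$, is exactly what the paper assumes tacitly (``$B=B_\Phi$ as $f=f_{\varphi(f)}$'' in its step (iii)$\Rightarrow$(i)). So it is an interpretation of the statement rather than a gap, and it is genuinely needed: for an unrelated isometry $\varphi$, the equality $B=B_\Phi$ alone does not force $\pi_i\delta_x\circ\varphi^{-1}\in J(B_\ast)$.
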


\begin{proof}
    $\mathrm{(i)\iff (ii)} $ This assertion follows directly from Proposition~\ref{propo}.

    $\mathrm{(ii)\Longrightarrow (iii)}. $ Let $\mu\in (B_\ast)^\ast$ be such that $\mu (\Phi{_i}(x))=0$, for every $i=1,...,m,$ and each $x \in X$. Thus,  $J_{\Phi_{i}(x)}(\mu)=0$. Let  $f\in B$ be such that $\varphi(f)=\mu$ and let $x \in X$.  Since $J_{\Phi_i(x)} \overset{{\varphi }^*}\equiv \pi_{i} \delta_x$, we have $$J_{\Phi_i(x)}(\mu) = J_{\Phi_i(x)}(\varphi (f))= \pi_{i} \delta_x (f)=0, \quad \text{for all } i=1, \cdots m,$$ so that $f(x)=0$. It follows that  $f=0$ and hence $\mu=0$. Thus, by  \cite[Corollary 6.14]{conway1994course},  $$B_{\ast }:=\overline{\mathrm{Lin}\{\Phi _{i}(x):x\in X,\text{ }i=1,\dots
,m \}}^{\|\cdot\|}.$$ This proves that $\Phi$ is a vector valued feature map on $(B,B_\ast)$. That $B=B_\Phi$ follows from the fact that, \break$f(x)=f_{\varphi(f)}(x)$, for every $x \in X$ and every $f\in B$. Indeed, 
\begin{align*}
f_{\varphi(f)}(x) 
  &= ([\varphi(f)](\Phi_1(x)), \dots, [\varphi(f)](\Phi_m(x))) 
   = (J_{\Phi_1(x)}(\varphi(f)), \dots, J_{\Phi_m(x)}(\varphi(f))) \\
  &= (\pi_1 \delta_x(f), \dots, \pi_m \delta_x(f)) 
   = (f_1(x), \dots, f_m(x)) = f(x).
\end{align*}

 $\mathrm{(iii)\Longrightarrow (i)}.$ Since $B=B_\Phi$ as $f=f_{\varphi(f)}$, for every $f\in B$, we have that $$ \pi_i\delta_x(f)=\pi_i\delta_x(f_{\varphi(f)})=J_{\Phi_i(x)}(\varphi(f)).$$ Thus, $\pi_i\delta_x=J_{\Phi_i(x)}\varphi$, and hence $\pi_i\delta_x\equiv J_{\Phi_i(x)}$, for every $x\in X $ and $i=1,...,m$. 

 The equivalences  $\mathrm{(ii)\Longrightarrow (iv)}$ and  $\mathrm{(iii)\Longrightarrow (v)}$ follows from Proposition~\ref{phik}.

 The uniqueness of $\Phi$ and $k$ is clear using the same argument given in Theorem~\ref{th216}.
    
\end{proof}

\begin{definition}
Let $B\subseteq \mathcal{F}(X,\mathbb{K}^{m})\,\ $be a Banach space, with predual $B_\ast\subseteq\mathcal{F}(X',\mathbb{K})$. We say
that $B$ is a \textbf{vector valued featured RKBS} if there exists a vector valued feature map $\Phi :=(\Phi
_{1},\dots ,\Phi _{m})$ on $(B,B_{\ast })$, such that $B=B_{\Phi }$.
\end{definition}

Obviously, each one of the equivalent assertions stated in Theorem~\ref{t224} provides a characterization of the notion of the vector valued featured RKBS.

\vspace{0.1cm}
\begin{theorem}
\label{weak}
    Let $B\subseteq \mathcal{F}(X,\mathbb{K}^{m})$ be a vector valued featured RKBS, $B_{\ast }\subseteq \mathcal{F}(X^{\prime },\mathbb{K})$ a predual of $B$  and  \break$\Phi :=(\Phi_{1},\dots ,\Phi _{m})$ a vector valued feature map on $(B,B_{\ast })$ such that $B=B_{\Phi }$. 
    If $B_\ast$ is an RKBS, then $k(\cdot,x')\in B$, $\delta_x'\in (B_\ast)^\ast$ and $f_{\delta_{x'}}=k(\cdot,x')$. Moreover, 
    $$ \overline{\mathrm{Lin}\{k(\cdot,x'):x'\in X'\}}^{w^*}=B.$$
\end{theorem}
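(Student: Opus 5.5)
The plan is to mimic the proof of Theorem~\ref{special}, now in the vector valued setting. Here the kernel is $k=(k_1,\dots,k_m)$, with $k_i(x,\cdot)=\Phi_i(x)\in B_\ast$ by Proposition~\ref{phik}. We write $k(\cdot,x')$ for the $\mathbb{K}^m$-valued function $x\mapsto (k_1(x,x'),\dots,k_m(x,x'))$.

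\textbf{Step 1: the identities $\delta_{x'}\in(B_\ast)^\ast$ and $f_{\delta_{x'}}=k(\cdot,x')$.} Since $B_\ast\subseteq\mathcal{F}(X',\mathbb{K})$ is an RKBS, each evaluation $\delta_{x'}:B_\ast\to\mathbb{K}$ is continuous, so $\delta_{x'}\in(B_\ast)^\ast$. Because $B=B_\Phi$, the element $f_{\delta_{x'}}\in B_\Phi=B$ is defined. For every $x\in X$ we compute
\[
f_{\delta_{x'}}(x)=\bigl(\delta_{x'}(\Phi_1(x)),\dots,\delta_{x'}(\Phi_m(x))\bigr)=\bigl(k_1(x,x'),\dots,k_m(x,x')\bigr)=k(x,x'),
\]
using Definition~\ref{vvkf}. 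Hence $f_{\delta_{x'}}=k(\cdot,x')\in B$.

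\textbf{Step 2: density in the weak$^\ast$ topology.} Under the identification $B=B_\Phi\equiv(B_\ast)^\ast$ given by $f_\mu\equiv\mu$, Step~1 shows that $\mathrm{Lin}\{k(\cdot,x'):x'\in X'\}$ corresponds to $\mathrm{Lin}\,V_0$, where $V_0:=\{\delta_{x'}:x'\in X'\}\subseteq(B_\ast)^\ast$. The preannihilator of $V_0$ is
\[
{}^{\perp}V_0=\{g\in B_\ast:g(x')=0 \text{ for all } x'\in X'\}=\{0\},
\]
because elements of $B_\ast$ are functions on $X'$. The bipolar theorem \cite[Theorem 1.8]{conway1994course} then gives
\[
\overline{\mathrm{Lin}\,V_0}^{w\ast}=({}^{\perp}V_0)^{\perp}=\{0\}^{\perp}=(B_\ast)^\ast.
\]

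\textbf{Step 3: transferring back to $B$.} The weak$^\ast$ topology on $B$ is by definition the one transported from $(B_\ast)^\ast$ via $\varphi$ (equivalently via $f_\mu\mapsto\mu$). So the density in Step~2 becomes $\overline{\mathrm{Lin}\{k(\cdot,x'):x'\in X'\}}^{w^\ast}=B$.

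The only delicate point is the bookkeeping of this identification. One must check that the isometry $B=B_\Phi\to(B_\ast)^\ast$, $f_\mu\mapsto\mu$, is the one defining the weak$^\ast$ topology on $B$, and that it is consistent with $\varphi$. This holds because Theorem~\ref{t224} gives $f=f_{\varphi(f)}$, so $\varphi(f_\mu)=\mu$; the rest is routine.
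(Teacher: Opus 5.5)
Your proposal is correct and is essentially the paper's own proof. It uses the same pointwise computation $f_{\delta_{x'}}(x)=\bigl(\delta_{x'}(\Phi_1(x)),\dots,\delta_{x'}(\Phi_m(x))\bigr)=k(x,x')$, then ${}^{\perp}V_0=\{0\}$ for $V_0=\{\delta_{x'}:x'\in X'\}$ together with the bipolar theorem, transported to $B$ through the identification $f_\mu\equiv\mu$ of $B=B_\Phi$ with $(B_\ast)^\ast$. Your closing consistency check with $\varphi$ goes slightly beyond the paper and is not needed: the paper just uses $(B_\ast)^\ast\equiv B_\Phi=B$ via $f_\mu\equiv\mu$, and that natural identification already fixes the weak$^\ast$ topology on $B_\Phi$.
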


\begin{proof}
    If $B_\ast$ is an RKBS, then $\delta_{x'}\in (B_\ast)^\ast$, for every $x'\in X'$, and hence $f_{\delta_{x'}}\in B_\Phi=B$. If $x\in X$, then $$f_{\delta_{x'}}(x):=\delta_{x'}(k(x,\cdot))=k(\cdot,x')(x).$$
    So that $k(\cdot,x')=f_{\delta_{x'}}$, for every $x'\in X'$. Let $V_0:=\{\delta_{x'}:x'\in X'\}\subseteq(B_\ast)^\ast$. Then, 
    $$^\bot V_0=\{g\in B_\ast: \delta_{x'}(g)=0, \hspace{0.1cm}x'\in X'\}=\{g\in B_\ast: g(x')=0, \hspace{0.1cm}x'\in X'\}=\{0\}.$$
    From the bipolar theorem \cite[1.8. Bipolar Theorem]{conway1994course}, we obtain that
    $$(^\bot V_0)^\bot=  \overline{\mathrm{Lin}\{V_0\}}^{w^*}=(B_\ast)^\ast.$$
    Since $\delta_{x'}\equiv f_{\delta_{x'}}=k(\cdot,x')$, as $(B_\ast)^\ast\equiv B=B_\Phi$, it follows that  $$ \overline{\mathrm{Lin}\{k(\cdot,x'):x'\in X'\}}^{w^*}=B,$$
    as desired.
\end{proof}
\begin{definition}
Let $B=B_{\Phi }\subseteq \mathcal{F}(X,\mathbb{K}^{m})$ be a vector valued
featured RKBS, associated with the vector valued
feature map $\Phi :=(\Phi _{1},\dots ,\Phi _{m})$ on  $(B,B_{\ast })$, where the Banach space $B_\ast\subseteq\mathcal{F}(X',\mathbb{K})$ is a predual of $B$. We
say that $B$ is a \textbf{special vector valued RKBS}\ if $B_{\ast }$ is a
RKBS. Consequently, $$\overline{\mathrm{Lin}\{k(\cdot ,x^{\prime }):x^{\prime
}\in X^{\prime }\}}^{w^{\ast }}=B=B_{\Phi }.$$
\end{definition}

\begin{remark}
If $W_{1},\dots ,W_{m}$ are Banach spaces then $W_{1}\times \dots \times
W_{m}$ becomes a Banach space provided with the norm 
\begin{equation*}
\left\Vert (w_{1},\dots ,w_{m})\right\Vert _{\infty }=\max \{\left\Vert
w_{1}\right\Vert ,\dots ,\left\Vert w_{m}\right\Vert \},\text{ \ \ \ }%
(w_{1},\dots ,w_{m})\in W\times \dots \times W.
\end{equation*}%
We denote this Banach space by $W_{1}\times \overset{\left\Vert \cdot
\right\Vert _{\infty }}{\dots }\times W_{m}.$ The topological dual of this
space, namely $(W_{1}\times \overset{\left\Vert \cdot \right\Vert _{\infty }}{\dots 
}\times W_{m})^{\ast },$ \ is isometrically isomorphic to $W_{1}^{\ast
}\times \overset{\left\Vert \cdot \right\Vert _{1}}{\dots }\times
W_{m}^{\ast },$ that is the space $W_{1}^{\ast }\times \dots \times
W_{m}^{\ast }$ \ provided with the norm 
\begin{equation*}
\left\Vert (\mu _{1},\dots ,\mu _{m})\right\Vert
_{1}=\sum_{i=1}^{m}\left\Vert \mu _{i}\right\Vert ,\text{ for every }%
(\mu _{1},\dots ,\mu _{m})\in W_{1}^{\ast }\times \dots \times W_{m}^{\ast
}.
\end{equation*}%
In other words,  $(W_{1}\times \overset{\left\Vert \cdot \right\Vert _{\infty }}{%
\dots }\times,W_{m})^{\ast }{\equiv} $ $W_{1}^{\ast }\times \overset{%
\left\Vert \cdot \right\Vert _{1}}{\dots }\times W_{m}^{\ast },$ where the
isometric isomorphism \break$\varphi :W_{1}^{\ast }\times \overset{%
\left\Vert \cdot \right\Vert _{1}}{\dots }\times W_{m}^{\ast }\rightarrow
(W_{1}\times \overset{\left\Vert \cdot \right\Vert _{\infty }}{\dots }%
\times W_{m})^{\ast }$ is given by 
\begin{equation*}
[ \varphi (\mu _{1},\dots ,\mu _{m})](w_{1},\dots,w_{m}):=\sum_{i=1}^{m}\mu_{i}(w_{i})\in \mathbb{K}.
\end{equation*}%
Similarly, $(W_{1}\times \overset{\left\Vert \cdot \right\Vert _{\infty }}{%
\dots }\times W_{m})^{\ast \ast }\equiv $ $W_{1}^{\ast \ast }\times \overset%
{\left\Vert \cdot \right\Vert _{\infty }}{\dots }\times W_{m}^{\ast \ast }$
 \cite[Theorem 1.10.13]{megginson1998introduction}. In this case, the isometric isomorphism is $\psi
:W_{1}^{\ast \ast }\times \overset{\left\Vert \cdot \right\Vert _{\infty }}{%
\dots }\times W_{m}^{\ast \ast }\rightarrow (W_{1}\times \overset{%
\left\Vert \cdot \right\Vert _{\infty }}{\dots }\times W_{m})^{\ast \ast }$
is given by%
\begin{equation*}
\lbrack \psi (w_{1}^{\ast \ast },\dots ,w_{m}^{\ast \ast })](\mu
):=\sum_{i=1}^{m}w_{i}^{\ast \ast }(\mu _{i})\in \mathbb{K}\text{,}
\end{equation*}%
for every $\mu \in (W_{1}\times \overset{\left\Vert \cdot \right\Vert
_{\infty }}{\dots }\times W_{m})^{\ast }$ and every $(w_{1}^{\ast \ast
},\dots ,w_{m}^{\ast \ast })\in W_{1}^{\ast \ast }\times \overset{%
\left\Vert \cdot \right\Vert _{\infty }}{\dots }\times W_{m}^{\ast \ast }$,
whenever $\mu \equiv (\mu _{1},\dots ,\mu _{m}).$

Finally note that the canonical embedding $$J:W_{1}\times \overset{\left\Vert
\cdot \right\Vert _{\infty }}{\dots }\times W_{m}\rightarrow (W_{1}\times 
\overset{\left\Vert \cdot \right\Vert _{\infty }}{\dots }\times
W_{m})^{\ast \ast }\equiv W_{1}^{\ast \ast }\times \overset{\left\Vert
\cdot \right\Vert _{\infty }}{\dots }\times W_{m}^{\ast \ast }$$ is just $$J\equiv (J^{1},\dots ,J^{m}),$$
where $J^{i}:W_{i}\rightarrow W_{i}^{\ast \ast }$ is the canonical embedding
associated with  $W_{i}.$ This means that, if $\mu \equiv (\mu _{1},\dots ,\mu
_{m})\,\ $then, 
\begin{equation*}
J_{(w_{1},\dots ,w_{m})}(\mu ):=(J_{w_{1}}^{1},\dots ,J_{w_{m}}^{m})(\mu
_{1},\dots ,\mu _{m})=\sum_{i=1}^{m}J_{w_{i}}^{i}(\mu
_{i})=\sum_{i=1}^{m}\mu _{i}(w_{i})\in \mathbb{K},
\end{equation*}%
for every $(w_{1},\dots ,w_{m})\in W_{1}\times \overset{\left\Vert \cdot
\right\Vert _{\infty }}{\dots }\times W_{m}.$ Consequently, $$J_{(0,\dots
w_{i},\dots ,0)}(\mu )=J_{w_{i}}^{i}(\mu _{i})=\mu _{i}(w_{i})\in \mathbb{K}.
$$
\end{remark}

Obtaining examples of vector-valued featured RKBS is a rather straightforward task, as we show in the next example.

\vspace{0.1cm}
\begin{example}
\label{prototipo}(\textbf{Prototype of a vector valued featured RKBS}). Let $%
X$ be a nonempty set, $W_{i}\subseteq \mathcal{F}(X',\mathbb{K})\,$\ a Banach
space and $\Phi _{i}^0:X\rightarrow W_{i}$ a feature map, for every \ $%
i=1,\dots ,m.$ It is not restrictive to assume that%
\begin{equation*}
W_{i}:=\overline{\mathrm{Lin}\{\Phi _{i}^0(x):x\in X\}}^{\|\cdot\|}.
\end{equation*}%
We now define 
\begin{equation*}
B_{\Phi ^0}:=\{f_{\mu }(\cdot ):\mu \equiv (\mu _{1},\dots ,\mu _{m})\in
W_{1}^{\ast }\times \overset{\left\Vert \cdot \right\Vert _{1}}{\dots }%
\times W_{m}^{\ast }\}\subseteq \mathcal{F}(X,\mathbb{K}^{m}),
\end{equation*}%
where, for every $x\in X,$ 
\begin{equation*}
f_{\mu }(x) :=\mu (\Phi^0 (x))=(\mu _{1}(\Phi _{1}^0(x)),\dots ,\mu _{m}(\Phi
_{m}^0(x))),
\end{equation*}
provided with the norm%
\begin{equation*}
\|f_{\mu }\| :=\sum_{i=1}^{m}\left\Vert \mu
_{i}+W_{i}^{\bot }\right\Vert =\sum_{i=1}^{m}\| \mu
_{i|W_{i}}\| .
\end{equation*}%
Then, $B_{\Phi^0 }$ is a Banach space and $B_{\ast }:=W_{1}\times \overset{%
\left\Vert \cdot \right\Vert _{\infty }}{\dots }\times W_{m}$ is a predual
of $B_{\Phi^0 }.$ In fact, considering the featured RKBS 
\begin{equation*}
B_{\Phi_i^0}:=\{\pi _{i}f_{\mu }(\cdot ) :=\mu _{i}(\Phi _{i}^0(\cdot )):\mu
_{i}\in W_{i}^{\ast }\}\subseteq \mathcal{F}(X,\mathbb{K})
\end{equation*}%
endowed with the norm $\left\Vert \pi _{i}f_{\mu }\right\Vert =\left\Vert
\mu _{i}\right\Vert $ we obtain that 
\begin{equation*}
B_{\Phi ^0}\equiv B_{\Phi _{1}^0}\times \overset{\left\Vert \cdot \right\Vert
_{1}}{\dots }\times B_{\Phi _{m}^0}.
\end{equation*}%
Moreover $B_{\Phi^0 }$ is a vector valued featured RKBS. Indeed, the map $%
\Phi _{i}^0:X\rightarrow W_{i}$ can be identify with the map 
\begin{equation*}
{\Phi _{i}}:X\rightarrow B_{\ast }:=W_{1}\times \overset{\left\Vert
\cdot \right\Vert _{\infty }}{\dots }\times W_{m}
\end{equation*}
given by 
\begin{equation*}
{\Phi }_{i}(x):=(0,\dots ,\Phi^0 _{i}(x),\dots 0)\in B_{\ast
}:=W_{1}\times \overset{\left\Vert \cdot \right\Vert _{\infty }}{\dots }%
\times W_{m}.
\end{equation*}%
Thus, $$J{\Phi }_{i}(x)=(0,\dots ,J ^{i}\Phi^0 _{i}(x),\dots 0)\in
(B_{\ast })^{\ast \ast }\equiv W_{1}^{\ast \ast }\times \overset{\left\Vert
\cdot \right\Vert _{\infty }}{\dots }\times W_{m}^{\ast \ast }.$$
By the above remark, if $\mu =(\mu _{1},\dots ,\mu _{m})\in W_{1}^{\ast
}\times \overset{\left\Vert \cdot \right\Vert _{1}}{\dots }\times
W_{m}^{\ast }$ we have that 
\begin{equation*}
J_{{\Phi }_{i}(x)}(\mu )=J_{\Phi^0 _{i}(x)}(\mu _{i})=\mu _{i}(\Phi^0
_{i}(x))=\pi _{i}\delta _{x}(f_{\mu })\quad (f_{\mu }\in B_{\Phi^0 }),
\end{equation*}%
so that $J_{{\Phi }_{i}(x)}\equiv \pi _{i}\delta _{x},$ for every $%
x\in X$, and every $i=1,\dots ,m.$ Consequently, $B_{\Phi ^0}=B_\Phi$, which means that $B_{\Phi ^0}$ is a vector
valued featured RKBS. Note that 
\begin{equation*}
B_{\Phi }\equiv (W_{1}\times \overset{\left\Vert \cdot \right\Vert _{\infty }%
}{\dots }\times W_{m})^{\ast }\equiv W_{1}^{\ast }\times \overset{%
\left\Vert \cdot \right\Vert _{1}}{\dots }\times W_{m}^{\ast }.
\end{equation*}%
Therefore $B_{\Phi }^{\ast }\equiv $ $W_{1}^{\ast \ast }\times 
\overset{\left\Vert \cdot \right\Vert _{\infty }}{\dots }\times W_{m}^{\ast
\ast }$ and $B_{\ast }:=W_{1}\times \overset{\left\Vert \cdot \right\Vert
_{\infty }}{\dots }\times W_{m}$ is a predual of $B_{\Phi }.$

We conclude that, given $m$ Banach spaces $W_{i}\subseteq \mathcal{F}(X,%
\mathbb{K})\,$\ and associated features maps $\Phi _{i}^0:X\rightarrow W_{i},$ with $%
W_{i}:=\overline{\mathrm{Lin}\{\Phi^0 _{i}(x):x\in X\}},$ for every \ $%
i=1,\dots ,m,$ the space 
\begin{equation*}
B_{\Phi }\equiv B_{\Phi^0 _{1}}\times \overset{\left\Vert \cdot \right\Vert
_{1}}{\dots }\times B_{\Phi^0 _{m}}\subseteq \mathcal{F}(X,\mathbb{K}^{m})
\end{equation*}%
is a vector valued featured RKBS, which we refer to as the\textbf{\
prototype of a vector valued featured RKBS}.

Moreover, if the evaluation map $\delta _{x^{\prime }}:W_{i}\rightarrow 
\mathbb{K}$ is continuous for every $x^{\prime }\in X^{\prime }$, and $i=1 \dots ,m,$ then, by Theorem~\ref{weak}, the
function $k(\cdot ,x^{\prime })$ is such that$$k(\cdot,x')=(\delta_{x'}(\Phi_1^0(\cdot)),...,\delta_{x'}(\Phi_m^0(\cdot)) \in B_{\Phi^0}=B_\Phi,$$
and $$\overline{\mathrm{Lin}\{k(\cdot,x'):x'\in X'\}}^{w^*}=B_{\Phi^0}=B_\Phi.$$
\end{example}

\section{Neural Networks as Vector Valued RKBS}
\label{neuronas}

Neural networks are among the most expressive and widely used models in modern machine learning, yet their theoretical analysis often relies on either parameter-space arguments or infinite-width kernel limits. In this section, we adopt a complementary function-space perspective by showing that fixed-architecture neural networks can be naturally embedded into special vector-valued featured reproducing kernel Banach spaces.

From this viewpoint, training a neural network corresponds to solving a minimal-norm interpolation or regularization problem in an appropriate function space, where the norm is induced by the chosen parameter norm. This interpretation bridges kernel methods and neural networks without appealing to infinite-width limits, and it clarifies how architectural choices and parameter norms induce specific function-space geometries. We illustrate the framework with explicit examples, demonstrating how neural networks fit naturally into the RKBS formalism developed in the previous sections. Although our analysis focuses on fixed architectures, the framework accommodates arbitrary finite architectures and norms on the parameter space.

In what follows, we aim to apply the theoretical framework developed in the previous sections to the setting of deep neural networks. 
With this objective in mind, we introduce the notion of a deep neural network, following the presentation in \cite[Chapter 4]{librod}, 
but adapting the definitions to our notation for the sake of consistency and clarity.

Before presenting the definition of a deep neural network, we need to define its fundamental component: 
the neuron function.

\vspace{0.1cm}
\begin{definition}
A \textbf{neuron function} $f : \mathbb{R}^s \to \mathbb{R}$ is a mapping of the form
\begin{equation}
\label{neurona}
    f(\mathbf{x}) = \sigma(\omega \cdot \mathbf{x} + b), \hspace{1cm}  (\mathbf{x} \in \mathbb{R}^s)
\end{equation}
where $\sigma : \mathbb{R} \to \mathbb{R}$ is a continuous non-linear function called the $\textbf{activation function}$, 
$\omega \in \mathbb{R}^s$ is a $\textbf{weight}$ vector, and $b \in \mathbb{R}$ is a scalar,  called the $\textbf{bias}$. 
Here, $\omega \cdot \mathbf{x}$ denotes the inner product on $\mathbb{R}^s$.

\end{definition}

Since the output of the artificial neuron is a scalar, we can combine several neurons to create a vector-valued function called a \textbf{layer function}.

\vspace{0.1cm}
\begin{definition} 
\label{capan}
A \textbf{layer function} $g : \mathbb{R}^s \to \mathbb{R}^n$ is a mapping of the form
\[
    g(\mathbf{x}) = (f_1(\mathbf{x}), f_2(\mathbf{x}), \ldots, f_n(\mathbf{x})),
\]
where each $f_i : \mathbb{R}^s \to \mathbb{R}$ is an \textbf{artificial neuron} as in~\eqref{neurona}, 
with its own weight vector $w_i = (w_{i1}, \ldots, w_{is})$ and bias $b_i$, for each $i = 1, \ldots, n$.
\end{definition}

\begin{remark}
In general, each neuron $f_i$ may have its own activation function $\varphi_i$. 
However, in practice it is common to assume that all neurons within the same layer share the same activation function, 
which simplifies both the notation and the implementation.
\end{remark}

\vspace{0.1cm}
\begin{remark}
\label{l}
The first layer, the \textit{input} layer, is usually not considered a proper layer function. 
Thus, if a network is said to have $l$ layer functions, there are actually $l+1$ layers in total. 
In practice, the terms ``layer functions'' and ``layers'' are often used interchangeably, 
so when a network is said to have $k$ layers, the input layer is not counted, 
and the total number of layers is $k+1$, including both the hidden layers and the output layer.
\end{remark}

The Definition~\ref{capan} can be equivalently reformulated in matrix form as
\begin{equation}
    g(\mathbf{x}) = \bar{\sigma}(W \mathbf{x} + \boldsymbol{b}),
\end{equation}
where $W \in \mathbb{R}^{n \times s}$ is the \textbf{weight matrix}, 
$\boldsymbol{b} \in \mathbb{R}^n$ is the \textbf{bias vector}, and 
$\bar{\sigma} : \mathbb{R}^n \to \mathbb{R}^n$ is the \textbf{vector activation function} defined by
\begin{equation}
    \bar{\sigma}(z_1, \dots, z_n) = \big(\sigma (z_1), \dots, \sigma(z_n)\big),
\end{equation}
for a scalar activation function $\sigma$ as in Definition~\ref{neurona}. In this formulation, the $i$-th pre-activation value is given by
\begin{equation}
    z_i = w_{i1} x_1 + \dots + w_{is} x_s + b_i, \qquad i = 1, \ldots, n,
\end{equation}
so that the $i$-th component of $g(\mathbf{x})$ is $f_i(\mathbf{x}) = \sigma(z_i)$.

As stated in Remark~\ref{l}, the parameter $l$, which represents the number of layer functions, 
determines the depth of the network. 
If $l=1$ or $l=2$, the network has at most one hidden layer and is referred to as a shallow artificial neural network (ANN). 
In what follows, we focus mainly on the case $l \geq 3$, that is, networks with at least two hidden layers 
in addition to the input and output layers. Such architectures are commonly known as \textbf{deep neural networks} (DNN).

A deep neural network is fully characterized by its layer widths and by the collection of trainable parameters 
\[
   \{(W^{(k)}, b^{(k)}): k=1,\ldots,l\},
\]
where $W^{(k)} \in \mathbb{R}^{m_k \times m_{k-1}}$ and $\boldsymbol{b}^{(k)} \in \mathbb{R}^{m_k}$ denote, respectively, 
the weight matrix and bias vector of the $k$-th layer.

In this context, to represent all parameters in a unified way, we introduce the parameter space
\[
\Theta := \prod_{k=1}^l [\mathcal{M}_{m_k\times m_{k-1}}(\mathbb{R})\times \mathcal{M}_{m_k\times 1}(\mathbb{R})],
\]
so that each $\theta \in \Theta$ corresponds to a family $\{(W^{(k)}, \boldsymbol{b}^{(k)})\}_{k=1}^l$.
\vspace{0.1cm}
\begin{remark}
Note that $m_0 := s$ and $m_l := t$. 
\end{remark}

\vspace{0.1cm}
\begin{definition}
\label{anng}
    A \textbf{neural network}  is a function $f_\theta^\text{net}:\mathbb{R}^s \to \mathbb{R}^t$ of the form
    \begin{equation}
        f_\theta^\text{net}(\mathbf{x}) = \big(g^{(l)} \circ g^{(l-1)} \circ \dots \circ g^{(1)}\big)(\mathbf{x}), \quad l \geq 1,
    \end{equation}
    where each $g^{(i)}:\mathbb{R}^{m_{i-1}} \to \mathbb{R}^{m_i}$ is the layer function of the $i$-th layer (see~\eqref{capan}), 
    with its own weight matrix $W^{(i)}$ and bias vector $\boldsymbol{b}^{(i)}$. If $l\geq 3,$
then we say that $f_{\theta }^{\mathrm{net}}$ is a \textbf{deep neural
network.}
\label{redneu}
\end{definition}

Now, let us consider the class of all  neural networks with a fixed architecture, 
consisting of $l$ layers, $s$ neurons in the input layer, and $t$ neurons in the output layer. 
We denote this class by
\[
    \big\{ f_\theta^{\text{net}} : \mathbb{R}^s \to \mathbb{R}^t \;\big|\; \theta \in \Theta \big\} 
   \;\subseteq\; \mathcal{F}(\mathbb{R}^s,\mathbb{R}^t),
\]
where $\Theta$ is the corresponding parameter space.

Let $W_i \subseteq \mathcal{F}(\Theta,\mathbb{R})$ be a Banach space. 
For $\mathbf{x}\in\mathbb{R}^s$ and $\theta\in\Theta$, we define 
\begin{equation}
    \label{kernel}
    k:\mathbb{R}^s\times \Theta\to \mathbb{R}^t \hspace{0.2cm}\text{as}\hspace{0.2cm}
    k(\mathbf{x},\theta) := f^{\mathrm{net}}_\theta(\mathbf{x})\,\xi(\theta).
\end{equation}

Writing $k=(k_1,\ldots,k_t)$, the function $\xi:\Theta \to \mathbb{R}^+$ is chosen so that, for every $\mathbf{x}\in\mathbb{R}^s$ 
and each $i=1,\ldots,t$, the function $k_i(\mathbf{x},\cdot)$ belongs to $W_i$. For instance, if we take $W_i=C_0(\Theta)$, the space of continuous functions vanishing at infinity, we have to 
choose $\xi$ such that, for every $\mathbf{x}\in\mathbb{R}^s$,
\[
\theta \in \Theta \mapsto f^{\mathrm{net}}_\theta(\mathbf{x})\,\xi(\theta)\ \in \mathbb{R}^t  \text{ is continuous } \text{ and } 
\lim_{\|\theta\|\to\infty}\big\|f^{\mathrm{net}}_\theta(\mathbf{x})\,\xi(\theta)\big\|=0,
\]
or equivalently,
\[
\lim_{\|\theta\|\to\infty} \pi_if^{\mathrm{net}}_\theta(\mathbf{x})\,\xi(\theta)=0
\quad\text{for all } i=1,\ldots,t.
\]

Let $W:=W_1\times\overset{||\cdot||_\infty}{\dots}\times W_t$ be the Banach space endowed with the product norm $\|(w_1,\ldots,w_t)\|_{\infty}:=\max_{i=1 \cdots t}\|w_i\|$ and set
\[
    B_* \;:=\; \overline{\mathrm{Lin}\big\{\,(k_1(\mathbf{x},\cdot),\ldots,k_t(\mathbf{x},\cdot)):\ \mathbf{x}\in\mathbb{R}^s\,\big\}}^{\lVert \cdot \rVert_\infty}
    \ \subseteq\ W,\hspace{0.2cm}
    \text{and} \hspace{0.2cm}
    B \equiv (B_{*})^{\,*}.
\]

With these definitions, $k$ becomes a vector valued kernel function on $(B,B_{*})$ 
in the sense of Definition~\ref{vvkf}. Equivalently, for each $i=1,\ldots,t$, let 
\[
    W_i:=\overline{\mathrm{Lin}\{\,k_i(\mathbf{x},\cdot):\ \mathbf{x}\in\mathbb{R}^s\,\}}^{\lVert \cdot \rVert_\infty}
    \quad\text{and let}\quad 
    B_{\Phi_i}^k \equiv (W_i)^{*}.
\]
Then,  a vector valued RKBS associated with the kernel
function defined in (\ref{kernel}) is the space $B_{\Phi }^{k}$ given by the product 
\[
    B_\Phi^k \;:=\; B_{\Phi_1}^k \times \overset{\left\Vert \cdot \right\Vert
_{1}}{\dots }\times B_{\Phi_t}^k,
\]
with predual $W_1 \times \overset{\left\Vert \cdot \right\Vert
_{\infty }}{\dots }\times W_t$.

The evaluation function $\delta_\theta:W_i\to\mathbb{R}$ is continuous, for every $\theta \in \Theta$. Indeed,  for every $\mathbf{x}\in\mathbb{R}^s$,
\[
\left\vert \delta_\theta \left( \sum_{j=1}^{k} \alpha_j k_i(\mathbf{x},\cdot) \right) \right\vert = \left\vert  \sum_{j=1}^{k} \alpha_j k_i(\mathbf{x},\theta)  \right\vert \leq 
\left\Vert  \sum_{j=1}^{k} \alpha_j k_i(\mathbf{x},\cdot)  \right\Vert_{\infty}.
\] Then, by Theorem~\ref{weak}, see also Example~\ref{prototipo}, $$\overline{\mathrm{Lin}\{k(\cdot,\theta):\theta\in \Theta\}}^{w^*}= \overline{\mathrm{Lin}\{f_\theta^\text{net}:\theta\in \Theta\}}^{w^*}=B_{\Phi^0}=B_\Phi.$$

Recent investigations \cite{bohn2018representertheoremdeepkernel,wang2025hypothesisspacesdeeplearning,heeringa2025deepnetworksreproducingkernel} have demonstrated that the training of a deep neural 
network with fixed architecture can be rigorously formulated as a 
minimum-norm problem in a reproducing kernel Banach space (RKBS), in accordance with the approach
adopted in this work.
In this setting, the kernel 
\[
    k : X \times X' \to \mathbb{R},
\]
is determined by the input domain $X=\mathbb{R}^s$ together with the parameter space $X'=\Theta$. 
Consequently,  training the neural network with fixed architecture and parameters $\theta\in\Theta$, with a training dataset \[
\mathcal{D}_m := \{(\mathbf{x}_j,\boldsymbol{y}_j)\}_{j=1}^{m} \subset \mathbb{R}^s \times \mathbb{R}^t,
\] is equivalent to solving the 
minimum-norm interpolation problem in the RKBS  $B_\Phi,$ given by 
\begin{equation}
    \label{problem}
    \operatorname*{arg\,min}\{ \|f_\mu\| \ : \ f_\mu\in \mathcal{M}_\textbf{y}\},
\end{equation}
where $\mathcal{M}_{\boldsymbol{y}}:=\{f_{\mu }\in B_{\Phi }:f_{\mu }(\mathbf{x}%
_{j})=\boldsymbol{y}_{j},$ $\ $with $\ j=1,\cdots ,m\}.$

This approach makes sense, since a neural network can naturally be
interpreted as an element of a special  vector-valued  RKBS. In this setting, $B_{\Phi }$
represent the smallest Banach space that contains every neural network \ $%
f_{\theta }^{\mathrm{net}},$ for every $\theta \in \Theta ,$ and is closed
with respect to pointwise convergence on $\mathbf{x},$ as $\overline{\mathrm{Lin}%
\{f_{\theta }^{\mathrm{net}}:\theta \in \Theta \}}^{w^{\ast }}=B_{\Phi }.$

Since $B_{\Phi }=B_{\Phi }^{k}\;:=\;B_{\Phi _{1}}^{k} \times \overset{\left\Vert \cdot \right\Vert
_{1}}{\dots }\times 
B_{\Phi _{t}}^{k}$,  for every $f_{\mu }\in B_{\Phi }=B_{\Phi }^{k}$ we have that 
\begin{equation*}
\Vert f_{\mu }\Vert :=\sum\limits_{i=1}^{t}\Vert \pi _{i}f_{\mu }\Vert ,
\end{equation*}%
and it is clear that $\Vert f_{\mu }\Vert $ is minimal whenever $\Vert \pi
_{i}f_{\mu }\Vert $ is mininal for each $i=1,\cdots ,t.$ Therefore, the
training problem (\ref{problem}) decomposes into $t$ scalar minimum-norm
interpolation problems. More precisely, letting 
\begin{equation*}
\mathcal{M}_{\boldsymbol{y}}^{i}:=\{\,\pi _{i}f_{\mu }:\mathbb{R}^{s}\rightarrow 
\mathbb{R}\ :\ g(\mathbf{x}_{j})=\pi_i(y_{j}),\ \text{\ with }j=1,\ldots ,m\,\},
\end{equation*}%
the solution of (\ref{problem}) is given by 
\begin{equation*}
\operatorname*{arg\,min}\Big\{\Vert \pi _{i}f_{\mu }\Vert \ :\ \pi _{i}f_{\mu }\in 
\mathcal{M}_{\boldsymbol{y}}^{i}\Big\},\quad \text{for every}\hspace{0.2cm}%
i=1,\ldots ,t.
\end{equation*}

As proved in Theorem \ref{mnip}, the solutions of these problems are derived  by
the Hahn-Banach extensions $\widehat{\mu}^{\mathrm{data}}_{i}$ of the corresponding data functions
${\mu_i}^{\mathrm{data}}$. Actually such solutions are the functions  $f_{\widehat{\mu}^{\mathrm{data}}_{i}}$, for every $i= 1 \cdots t$.

In summary, $B_{\Phi }\;:=\;B_{\Phi _{1}}  \times \overset{\left\Vert \cdot \right\Vert
_{1}}{\dots }\times B_{\Phi _{t}}$%
, and the training process is performed component by component. Beyond
Theorem \ref{mnip}, obviously, the Representer Theorem (Theorem \ref{finM})
naturally comes into play, along with all the results established around it. More
precisely, we can make use of Corollary \ref{nice1} and corollary \ref{nice2}. Under these last considerations, we next provide a way to proceed.

As previously stated, restricting our analysis to the case $t=1$ entails no
loss of generality. Then, proceed as follows:

(i) Get the maximal nice  dataset contained in the given dataset $$\mathcal{D}_m%
:=\{(\mathbf{x}_{j},\boldsymbol{y}_{j})\}_{j=1}^{m}\subset \mathbb{R}^{s}\times 
\mathbb{R}.$$

(ii)\ Find $\theta _{1},\cdots ,\theta _{m}$ satisfying that $M=(k(\mathbf{x}%
_{i},\theta _{j}))_{ij}$ is diagonal with $\left| k(\mathbf{x}_i,\theta_i) \right|
= \left\| k(\cdot,\theta_i) \right\|.$

(iii)\ Determine the corresponding set of  admissible sign vector in $%
\mathbb{R}^{m}$.

(iv) If $\boldsymbol{\beta }=(\beta _{1},\cdots ,\beta _{m})$ and $\mathbf{y}=(y_1,\dots,y_m)$ are such that $M\boldsymbol{\beta }^{T}=\boldsymbol{y}^{T}$ and $\mathrm{sign}(\boldsymbol{\beta })$ is an
admissible sign vector then $f_{\mu }(\cdot )=\sum_{i=1}^{m}\beta
_{i}k(\cdot ,\theta _{i})$ is a solution of (\ref{problem}).

(v) If $\mathrm{sign}(\mathbf{\beta })$ is not admissible, then we can deal
with the regularizations of $$f_{\mu }(\cdot )=\sum_{i=1}^{m}\beta _{i}k(\cdot
,\theta _{i}).$$ To do this, consider all  vectors of the type $\boldsymbol{
\beta }_{reg}=(\widetilde{\beta }_{1},\cdots ,\widetilde{\beta }_{m})$ whose
sign vector $\mathrm{sign}(\mathbf{\beta }_{reg}),$ is admissible. If $%
\boldsymbol{\beta }_{reg}=(\widetilde{\beta }_{1},\cdots ,\widetilde{\beta }_{m})
$ is such a vector, then we say that $\widetilde{f}_{\mu }(\cdot
)=\sum_{i=1}^{m}\widetilde{\beta }_{i}k(\cdot ,\theta _{i})$ is a
regularization of $f_{\mu }.$ Note that $\widetilde{f}_{\mu }(\cdot )$ is a
solution of the problem%
\begin{equation*}
\text{arg min}\{\Vert f_{\mu }\Vert \ :\ f_{\mu }\in \mathcal{M}_{\widetilde{{
\mathbf{y}}}}\},
\end{equation*}%
with $M\boldsymbol{\beta }_{reg}^{T}=\widetilde{\mathbf{y}}^{T}$. Then, choose
the best option between $f_{\mu }(\cdot )=\sum_{i=1}^{m}\beta _{i}k(\cdot
,\theta _{i})$ and all the regularizations $\widetilde{f}_{\mu }(\cdot
)=\sum_{i=1}^{m}\widetilde{\beta }_{i}k(\cdot ,\theta _{i}),$ according to
the defined function $Q_{\mathbf{y}}$ and parameter $\lambda _{0}$
considered in the regularization problem 
\begin{equation*}
\inf \left\{ Q_{\mathbf{y}}(\mathcal{L}(f_{\mu }))+\lambda _{0}\,\varphi
\!\left( \left\Vert f_{\mu }\right\Vert \right) :f_{\mu }\in B_{\Phi
}\right\} .
\end{equation*}

Recall that for neural networks $f_{\theta }^{\mathrm{net}}:%
\mathbb{R}^{s}\rightarrow \mathbb{R}^{t}$, the problem decomposes into $t$
scalar minimum-norm interpolation problem (see Section~\ref{sec:rkbs}) by
considering $\pi _{i}f_{\theta }^{\mathrm{net}}:\mathbb{R}^{s}\rightarrow 
\mathbb{R}$, for $i =1 \cdots , t.$

In the next section, we provide an example to illustrate this procedure for
learning a function from a dataset. We will also verify that, in
order to work with nice datasets, obtaining a
diagonal matrix $M=(k(\mathbf{x}_{i},\theta _{j}))_{ij}$ with $$k(\mathbf{x}%
_{i},\theta _{i})=\left\Vert k(\mathbf{\cdot },\theta _{i})\right\Vert,$$ is rather straightforward, in many cases.

\section{Illustrative Example: Neural Networks as Featured RKBSs}
\label{sec:example}

In this section, we present a concrete illustrative example that demonstrates how the theoretical framework developed in the previous sections can be instantiated in a simple setting. The purpose of this example is not to model a realistic large-scale learning problem, but rather to make explicit how featured reproducing kernel Banach spaces arise from fixed neural network architectures and how the associated learning formulation and representer results apply in practice.

For clarity, we consider a small neural network architecture with two input nodes, one hidden layer consisting of two neurons, and an output layer with three nodes. This architecture is sufficiently simple to allow all constructions to be written explicitly, while still capturing the essential features of the theory, including the induced feature map, the associated Banach norm on the space of realized functions, and the resulting kernel representation. Through this example, we illustrate how minimal-norm interpolation in the induced featured RKBS leads to finite representations consistent with the representer theorems established earlier.

The structure of such a net is $%
f_\theta^\text{net}:\mathbb{R}^{2}\rightarrow \mathbb{R}^{3}$. For every $\mathbf{x}=(x_{1},x_{2})\in \mathbb{R}^{2}$, we have that $$f_\theta^\text{net}(%
\boldsymbol{x)=(}f_\theta^{\text{net},1}(\mathbf{x}),f_\theta^{\text{net},2}(\mathbf{x}%
),f_\theta^{\text{net},3}(\mathbf{x}))\in \mathbb{R}^{3},$$ where 
\begin{equation*}
\mathbf{x} = 
\begin{pmatrix} x_1 \\ x_2 \end{pmatrix}
\longmapsto 
f_\theta^{\text{net}}(\mathbf{x}) =
\begin{pmatrix}
f_\theta^{\text{net},1}(\mathbf{x}) \\
f_\theta^{\text{net},2}(\mathbf{x}) \\
f_\theta^{\text{net},3}(\mathbf{x})
\end{pmatrix},
\end{equation*}

\begin{align*}
f_\theta^{\text{net},1}(\mathbf{x}) &= 
w_5 \sigma(w_1 x_1 + w_2 x_2 + b_1) 
+ w_6 \sigma(w_3 x_1 + w_4 x_2 + b_2) + b_3, \\[0.3em]
f_\theta^{\text{net},2}(\mathbf{x}) &= 
\widetilde{w}_5 \sigma(\widetilde{w}_1 x_1 + \widetilde{w}_2 x_2 + \widetilde{b}_1) 
+ \widetilde{w}_6 \sigma(\widetilde{w}_3 x_1 + \widetilde{w}_4 x_2 + \widetilde{b}_2) + \widetilde{b}_3, \\[0.3em]
f_\theta^{\text{net},3}(\mathbf{x}) &= 
\widehat{w}_5 \sigma(\widehat{w}_1 x_1 + \widehat{w}_2 x_2 + \widehat{b}_1) 
+ \widehat{w}_6 \sigma(\widehat{w}_3 x_1 + \widehat{w}_4 x_2 + \widehat{b}_2) + \widehat{b}_3.
\end{align*}

Here, $\sigma(t) = \max\{t,0\}$ denotes the ReLU activation function for every $t \in \mathbb{R}$. 
We will consider the training dataset $$\mathcal{D}_3=\{(\mathbf{x}_{1},%
\boldsymbol{y}_{1}),(\mathbf{x}_{1},\boldsymbol{y}_{2}),(\mathbf{x}_{3},\boldsymbol{y}%
_{3})\}$$ where the input data $\mathbf{x}_{1}=(1,-1),$ $\mathbf{x}_{2}=(-1,0),$ $\mathbf{x}%
_{3}=(0,1)$ are in $\mathbb{R}^2,$ and the output data $\boldsymbol{y}_{1}=(a_{1},b_{1},c_{1}),$ $\boldsymbol{y}_{2}=(%
\widetilde{a},\widetilde{b},\widetilde{c}),$ and $\boldsymbol{y}_{3}=(\widehat{a}%
,\widehat{b},\widehat{c})$ are in $\mathbb{R}^3.$

We have shown in Section~\ref{neuronas}  that training the whole neural network,
$f_{\theta }^{\text{net}}$, is equivalent to training each of its output components separately; 
$f_{\theta }^{\text{net},1}(\mathbf{x}), \,
 f_{\theta }^{\text{net},2}(\mathbf{x}), \,$ and 
$f_{\theta }^{\text{net},3}(\mathbf{x})$. 
For this reason, we restrict our attention to the training of a network
\begin{equation*}
f_{\theta }^{\text{net}} : \mathbb{R}^{2} \to \mathbb{R},
\end{equation*}
given by $f_\theta^\text{net}(\mathbf{x}) = 
w_{5}\,\sigma(w_{1}x_{1}+w_{2}x_{2}+b_{1})
+ w_{6}\,\sigma(w_{3}x_{1}+w_{4}x_{2}+b_{2}) + b_{3},$
where $\sigma(t) = \max\{t,0\}$ denotes the ReLU activation function. Consequently, we will consider the following training dataset
\[
\mathcal{D}_3 := \{ (\mathbf{x}_{1}, y_{1}), (\mathbf{x}_{2}, y_{2}), (\mathbf{x}_{3}, y_{3}) \},
\]
with $\mathbf{x}_{1} = (1,-1),$ $\mathbf{x}_{2} = (-1,0),$ $\mathbf{x}_{3} = (
0,1),$ and  
$y_{1} = a,$ $y_{2} = b,$ $y_{3} = c.$

In this framework, 
\begin{equation*}
\left\Vert \theta \right\Vert :=\max \{\left\vert w_{1}\right\vert
+\left\vert w_{2}\right\vert +\left\vert b_{1}\right\vert ;\left\vert
w_{3}\right\vert +\left\vert w_{4}\right\vert +\left\vert b_{2}\right\vert
;\left\vert w_{5}\right\vert +\left\vert w_{6}\right\vert +\left\vert
b_{3}\right\vert\} \qquad (\theta \in \Theta ).
\end{equation*}
where 
\begin{equation*}
\theta =\left( \left( 
\begin{array}{cc}
w_{1} & w_{2} \\ 
w_{3} & w_{4}%
\end{array}%
\right) ,\left( 
\begin{array}{c}
b_{1} \\ 
b_{2}%
\end{array}%
\right) ,(w_{5},w_{6}),b_{3}\right)
=(w_{1},w_{2},w_{3},w_{4},b_{1},b_{2},w_{5},w_{6},b_{3})\in \Theta .
\end{equation*}%
Since $\Theta $ is finite-dimensional, all norms on it are equivalent. Nevertheless, we may choose any convenient one.

In order to determine the kernel function to be used, we define 
$\xi : \Theta \to \mathbb{R}^+$ as
\begin{equation}
\xi(\theta) := \frac{1}{\max\{1, \lVert \theta \rVert^3\}}, 
\quad \theta \in \Theta.
\label{xi}
\end{equation}

Now, we define the kernel function 
$k : \mathbb{R}^{2} \times \Theta \to \mathbb{R}$ as
\begin{equation*}
k(\mathbf{x}, \theta) := f_\theta^\text{net}(\mathbf{x}) \, \xi(\theta),
\ \text{  for every } \mathbf{x} = (x_{1},x_{2}) \in \mathbb{R}^{2} \text{ and } \theta \in \Theta.
\end{equation*}

Note that $k(\mathbf{x}, \cdot) \in \mathcal{B}(\Theta)$ for every 
$\mathbf{x} \in \mathbb{R}^{2}$, where $\mathcal{B}(\Theta)$ denotes the Banach space 
of all continuous bounded functions on $\Theta$, endowed with the supremum norm 
$\lVert \cdot \rVert_{\infty}$. 
In fact, 
\[
\lim_{\lVert \theta \rVert \to \infty} f_\theta^\text{net}(\mathbf{x}) \, \xi(\theta) = 0,
\]
which implies that $k(\mathbf{x}, \cdot) \in C_{0}(\Theta)$ for every 
$\mathbf{x} \in \mathbb{R}^{2}$.

Therefore, for $\theta = (w_{1}, w_{2}, w_{3}, w_{4}, b_{1}, b_{2}, w_{5}, w_{6}, b_{3}) \in \Theta$ 
and $\mathbf{x} = (x_{1}, x_{2}) \in \mathbb{R}^{2}$, we have that
\begin{equation*}
k(\mathbf{x}, \theta) 
=  \frac{w_{5}\,\sigma(w_{1}x_{1} + w_{2}x_{2} + b_{1})
      + w_{6}\,\sigma(w_{3}x_{1} + w_{4}x_{2} + b_{2}) + b_{3}}{\max\{1, \lVert \theta \rVert^3\}},
\end{equation*}
Thus,
\begin{equation}
\label{7.1}
\begin{aligned}
\lvert k(\mathbf{x}, \theta)  \rvert 
&\leq \frac{
\lvert w_{5} \rvert \big(\lvert w_{1}x_{1} \rvert + \lvert w_{2}x_{2} \rvert + \lvert b_{1} \rvert \big)
+ \lvert w_{6} \rvert \big(\lvert w_{3}x_{1} \rvert + \lvert w_{4}x_{2} \rvert + \lvert b_{2} \rvert \big)
+ \lvert b_{3} \rvert}{\max\{1, \lVert \theta \rVert^3\}} \\
&\leq 
\frac{\max \{ \lvert x_{1} \rvert, \lvert x_{2} \rvert, 1 \} 
\Big( \lVert \theta \rVert \,\lvert w_{5} \rvert 
   + \lVert \theta \rVert \,\lvert w_{6} \rvert 
   + \lvert b_{3} \rvert \Big)}{\max\{1, \lVert \theta \rVert^3\}} \\
&\leq \frac{\max \{ \lvert x_{1} \rvert, \lvert x_{2} \rvert, 1 \}
\| \theta \|^2}{\max\{1, \lVert \theta \rVert^3\}} \leq \max \{ \lvert x_{1} \rvert, \lvert x_{2} \rvert, 1 \}.
\end{aligned}
\end{equation}
On the other hand, if $\max\{|x_{1}|,|x_{2}|,1\}=|x_{1}|$, then, for 
$
\theta_{0} = (\pm1,0,0,0,0,0,1,0,0)
$
(we choose $+1$ or $-1$ depending on the sign of $x_{1}$) we have that
\[k(\mathbf{x},\theta_{0})
=\sigma(|x_{1}|)
= |x_{1}|.\]

Similarly, if $\max\{|x_{1}|,|x_{2}|,1\}=|x_{2}|$, 
then for 
$
\theta_{0}=(0,\pm 1,0,0,0,0,1,0,0)
$
we have $k(\mathbf{x},\theta_{0}) = |x_{2}|$. 
Moreover, if $\max\{|x_{1}|,|x_{2}|,1\}=1$, then for 
$
\theta_{0} = (0,0,0,0,1,0,1,0,0)
$
we have $k(\mathbf{x},\theta_{0}) = \sigma(1) =1$. Therefore,
$$
\|k(\mathbf{x},\cdot)\|_{\infty} 
= \max\{|x_{1}|,|x_{2}|,1\}.
$$

Recall that our dataset is $\{(\mathbf{x}_{1},y_{1}), (\mathbf{x}_{2},y_{2}), (\mathbf{x}_{3},y_{3})\}$, where 
$$
\mathbf{x}_{1}=(1,-1), \qquad
\mathbf{x}_{2}=(-1,0), \qquad
\mathbf{x}_{3}=(
0,1),
$$
and $\boldsymbol{y}=(y_{1},y_{2},y_{3})=(a,b,c).$
Consequently,
$$
\|k(\mathbf{x}_{i},\cdot)\|_{\infty}
= 1, \qquad \text{for every } i=1,2,3.
$$

Furthermore, 

(i) For $\theta _{1}:=(1,0,0,0,0,0,1,0,0)$ we have $k(\mathbf{x},\theta
_{1}):=\sigma (x_{1}),$ so that $k(\mathbf{x}_{1},\theta _{1})=1$ and $k(%
\mathbf{x}_{2},\theta _{1})=$ $k(\mathbf{x}_{3},\theta _{1})=0.$

(ii) For $\theta_{2} := (-1,0,0,0,0,0,1,0,0)$ we have
$k(\mathbf{x}, \theta_{2}) := \sigma(-x_{1})$, so that
$k(\mathbf{x}_{2}, \theta_{2}) = 1$, while both
$k(\mathbf{x}_{1}, \theta_{2})$ and $k(\mathbf{x}_{3}, \theta_{2})$
are equal to $0$.

(iii) For $\theta_{3} := (0,1,0,0,0,0,1,0,0)$ we have
$k(\mathbf{x}, \theta_{3}) := \sigma(x_{2})$, so that
$k(\mathbf{x}_{3}, \theta_{3}) = 1$, while both
$k(\mathbf{x}_{1}, \theta_{3})$ and $k(\mathbf{x}_{2}, \theta_{3})$
are equal to $0$.

Therefore,
\[
k(\mathbf{x}_{i},\theta_{j}) = 0,  \quad\text{if }\quad i \ne j,
 \text{ and } 
\,k(\mathbf{x}_{i},\theta_{i}) 
= \|k(\cdot,\theta_i)\|
= 1, 
\quad \text{for }\quad   i=1,2,3.
\]
In fact, if \ $\sum\limits_{i=1}^{m}\alpha _{i}k(\widetilde{\mathbf{x}}%
_{i},\cdot )\in \mathrm{Lin}\{k(\mathbf{x},\cdot ):\mathbf{x}\in \mathbb{R}%
^{2}\}$\ is such that $\left\Vert \sum\limits_{i=1}^{m}\alpha _{i}k(%
\widetilde{\mathbf{x}}_{i},\cdot )\right\Vert _{\infty }\leq 1$ then, \ $%
\left\vert \sum\limits_{i=1}^{m}\alpha _{i}k(\widetilde{\mathbf{x}}%
_{i},\theta )\right\vert \leq 1$ so that 
\begin{equation*}
\left\Vert k(\cdot ,\theta )\right\Vert =\sup \left\{ \left\vert
\sum\limits_{i=1}^{m}\alpha _{i}k(\widetilde{\mathbf{x}}_{i},\theta
)\right\vert :\left\Vert \sum\limits_{i=1}^{m}\alpha _{i}k(\widetilde{%
\mathbf{x}}_{i},\cdot )\right\Vert _{\infty }\leq 1\right \}\leq 1,
\end{equation*}%
and it follows that if $ k(\mathbf{x},\theta )
=1=\left\Vert k(\mathbf{x},\cdot )\right\Vert _{\infty }$ for some $\boldsymbol{x%
}\in \mathbb{R}^{2}$, then $\left\Vert k(\cdot ,\theta )\right\Vert
=1.$ Hence,
\begin{equation}
M:=
\begin{pmatrix}
k(\mathbf{x}_{1},\theta_{1}) & k(\mathbf{x}_{1},\theta_{2}) & k(\mathbf{x}_{1},\theta_{3}) \\
k(\mathbf{x}_{2},\theta_{1}) & k(\mathbf{x}_{2},\theta_{2}) & k(\mathbf{x}_{2},\theta_{3}) \\
k(\mathbf{x}_{3},\theta_{1}) & k(\mathbf{x}_{3},\theta_{2}) & k(\mathbf{x}_{3},\theta_{3})
\end{pmatrix}=
\left( 
\begin{array}{ccc}
1 & 0 & 0 \\ 
0 & 1 & 0 \\ 
0 & 0 & 1%
\end{array}%
\right)
\end{equation}

Let us now determine the set of admissible sign vectors.  
The sign space is
\[
\Delta := \{ \boldsymbol{s} = (s_1, s_2, s_3) : s_i \in \{-1, 0, 1\},\ i = 1,2,3 \}.
\]
According to Definition~\ref{admisible}, 
a sign vector $\boldsymbol{s} =(s_1, s_2, s_3)\in \Delta^3$ is admissible if

\begin{equation*}
|||\,\boldsymbol{s}\,|||\text{ }:=\left\Vert
\sum\limits_{i=1}^{3}s_{i}k(\mathbf{x}_{i},\cdot )\right\Vert _{\infty }=\sup_{\theta \in \Theta }\left\vert
\sum\limits_{i=1}^{3}s_{i}k(\mathbf{x}_{i},\theta )\right\vert \leq 1.
\end{equation*}%

To determine the set of admissible sign vectors note that \mbox{$%
\theta =(0,0,0,0,0,0,0,0,1)\in \Theta $} is such that $\left\Vert \theta
\right\Vert =1$ and 
\begin{equation*}
\left\vert \sum\limits_{i=1}^{3}s_{i}k(\mathbf{x}_{i},\theta )\right\vert
=\left\vert \sum\limits_{i=1}^{3}s_{i}\right\vert ,
\end{equation*}%
so that all sign vectors $\boldsymbol{s}=(s_{1},s_{2},s_{3})$ with $\left\vert
\sum\limits_{i=1}^{3}s_{i}\right\vert >1$ are not admissible. Therefore, the vectors $\pm(1,1,0),$ $ \pm(1,0,1), \pm(0,1,1), \pm(1,1,1)$ are not admissible. Moreover:

(i) For $\theta :=(-\frac{4}{15},\frac{2}{5},0,0,\frac{1}{3},0,1,0,0)$  we
have that $$\left\vert -k(\mathbf{x}_{1},\theta )+k(\mathbf{x}_{2},\theta )+k(
\mathbf{x}_{3},\theta )\right\vert >1,$$ and therefore  $\pm (-1,1,1)$ is not
admissible.

(ii) For $\theta :=(\frac{3}{10},\frac{1}{5},0,0,\frac{1}{2},0,1,0,0)$ we
have that $$\left\vert k(\mathbf{x}_{1},\theta )-k(\mathbf{x}_{2},\theta )+k(%
\mathbf{x}_{3},\theta )\right\vert >1 , $$ so that  $\pm (1,-1,1)$ is not
admissible.

(iii) For $\theta :=(\frac{1}{5},-\frac{3}{10},0,0,\frac{1}{2},0,1,0,0)$ we have that $$
\left\vert k(\mathbf{x}_{1},\theta )+k(\mathbf{x}_{2},\theta )-k(\mathbf{x}%
_{3},\theta )\right\vert >1, $$ so that $\pm (1,1,-1)$  are not admissible.

On the other hand, we claim that $\pm (1,-1,0),$ $\pm (1,0,-1),$ $\pm (0,1,-1)$ are
admissible. Indeed, for $\mathbf{x=(}x_{1},x_{2})$,  $%
\widetilde{\mathbf{x}}\mathbf{=(}\widetilde{x}_{1},\widetilde{x}_{2}),$ and $%
\theta \in \Theta ,$ consider
\begin{equation*}
    k(\mathbf{x},\theta)= \frac{w_5\sigma(a)+w_6\sigma(c)+ b_3}{\max\{1,||\theta||^3 \}} \hspace{0.2cm} \text{and} \hspace{0.2cm} k(\widetilde{\mathbf{x}},\theta)= \frac{w_5\sigma(b)+w_6\sigma(d)+b_3}{\max\{1,||\theta||^3 \}},
\end{equation*}
where the auxiliary variables $a,b,c,d$ are defined as
\[
\begin{aligned}
a &= w_1 x_1 + w_2 x_2 + b_1, &\qquad
b &= w_1 \widetilde{x}_1 + w_2 \widetilde{x}_2 + b_1, \\[3pt]
c &= w_3 x_1 + w_4 x_2 + b_2, &\qquad
d &= w_3 \widetilde{x}_1 + w_4 \widetilde{x}_2 + b_2.
\end{aligned}
\]
Then, 
\[
\begin{aligned}
|k(\mathbf{x},\theta) - k(\widetilde{\mathbf{x}},\theta)|
&= \frac{\big|\,w_5\big(\sigma(a) - \sigma(b)\big)
        + w_6\big(\sigma(c) - \sigma(d)\big)\,\big|}
       {\max\{1, \|\theta\|^3\}} \\[6pt]
&\leq 
\frac{|w_5|\,|\sigma(a) - \sigma(b)|
     + |w_6|\,|\sigma(c) - \sigma(d)|}
     { \max\{1, \|\theta\|^3\}}.
\end{aligned}
\]
Since, $ |\sigma(a) - \sigma(b)|
\leq \max\{\,|\sigma(a)|,\,|\sigma(b)|\,\}
$ and  $|\sigma(c) - \sigma(d)|
\leq \max\{\,|\sigma(c)|,\,|\sigma(d)|\,\},
$ we have that 
\[
\begin{aligned}
|k(\mathbf{x},\theta) - k(\mathbf{\tilde{x}},\theta)|
&\leq 
\frac{
\max\{|\sigma(a)|,| \sigma(b)|,| \sigma(c)|, |\sigma(d)|\} \, (|w_5| + |w_6|)
}{
\max\{1, \|\theta\|^3\}
} \\[6pt]
&\leq
\frac{\max\{|x_1|,| x_2|,| \tilde{x}_1|, |\tilde{x}_2|,1\} ||\theta||^2} {\max\{1, \|\theta\|^3\}} \leq
\max\{|x_1|, |x_2|, |\tilde{x}_1|, |\tilde{x}_2|, 1\}.
\end{aligned}
\]
It follows that $\pm (1,-1,0),$ $\pm (1,0,-1),$ $\pm (0,1,-1)$ are admissible. Also, the vectors $\pm(1,0,0),$ $\pm(0,1,0)$, and $\pm(0,0,1)$ are admissible, as $||k(\mathbf{x}_i,\cdot)||_\infty=1$, for $i=1,2,3$. Consequently, we conclude that the set of admissible sign vectors is
\begin{equation*}
{\Delta^3}_{\rm{adm}}=\{(0,0,0),\pm (1,0,0),\pm
(0,1,0),\pm (0,0,1),\pm (1,-1,0),\pm (1,0,-1),\pm (0,1,-1)\}.
\end{equation*}

In order to train our network, since $M = (k(x_i, \theta_j))_{ij} = I_3$,
obviously we have that
\( \boldsymbol{\beta} = (a, b, c) = \boldsymbol{y}. \)
If \( \operatorname{sign}(\boldsymbol{y}) \) is admissible, then
 
\begin{equation}
f^{\mathrm{sol}}(\cdot )=a\text{ }k(\cdot ,\theta _{1})+b\,k(\cdot ,\theta _{2})+c%
\text{ }k(\cdot ,\theta _{3})  \label{ante}
\end{equation}%
is the solution of 
\begin{equation}
\label{eq:mni}
\arg \min \{\left\Vert f_{\mu }\right\Vert :f_{\mu }\in \mathcal{M}_{\boldsymbol{%
y}}\}.
\end{equation}
In fact, 
\(\|f^{\mathrm{sol}}\| = |a| + |b| + |c|\)
and it is attained in 
\(s_1 k(x_1, \cdot) + s_2 k(x_2, \cdot) + s_3 k(x_3, \cdot)\).


For instance, if $a<0, \ b=0$ and $c>0$, then $f^{\mathrm{sol}}=a k(x_1, \cdot)+ ck(x_3, \cdot)$ is a solution of \eqref{eq:mni} for $\mathbf{y}=(a,0,c),$ as $(-1,0,1)$ is an admissible sign vector. 

Suppose now that, for example, 
\(\mathbf{y} = (a, b, c) = (2, -3, \tfrac{1}{2})\).
Then, \(\operatorname{sign}(\boldsymbol{y})\) is not admissible.
Because of this, we consider a regularization problem of the type
\begin{equation} \label{eq:reg-problem}
    \arg\min \bigl\{ Q_{\mathbf{y}}(\mathcal{L}(f_\mu))
    + \lambda_0 \, \varphi(\|f_\mu\|) 
    : f_\mu \in B_{\Phi} \bigr\}.
\end{equation}

As an illustration, set
\[
\mathcal{R}(f_\mu)
:= Q_{\mathbf{y}}(\mathcal{L}(f_\mu))
   + \lambda_0 \, \varphi(\|f_\mu\|)
   = \frac{(\tilde{y}_1 - a)^2 + (\tilde{y}_2 - b)^2 + (\tilde{y}_3 - c)^2}{3}
   + \lambda_0 \|f_\mu\|,
\]
where 
\(\mathcal{L}(f_\mu) = (f_\mu(x_1), f_\mu(x_2), f_\mu(x_3)) 
= (\tilde{y}_1, \tilde{y}_2, \tilde{y}_3)\)
and \(\lambda_0 = \tfrac{1}{10}\).

In this case, we have that the map $ \widehat{\mu}^{\mathrm{data}} : W \to \mathbb{R}$ given by 
$$\widehat{\mu}^{\mathrm{data}} = 2 \delta_{\theta_1} - 3 \delta_{\theta_2}  + \tfrac{1}{2} \delta_{\theta_3}$$ is an extension of the  of the linear map 
\( \mu^{\text{data}} \colon W^{\text{data}} \to \mathbb{R} \)
determined by the equalities
\[
\mu^{\text{data}}(k(x_1, \cdot)) = 2, 
\qquad 
\mu^{\text{data}}(k(x_2, \cdot)) = -3, 
\qquad 
\mu^{\text{data}}(k(x_3, \cdot)) = \tfrac{1}{2}.
\]
Moreover, 
$$
f_{\widehat{\mu}^{\mathrm{data}}}  = 2k(\cdot, \theta_1) - 3k(\cdot, \theta_2) + \tfrac{1}{2}k(\cdot, \theta_3).
$$
However, we do not know if $ \widehat{\mu}^{\mathrm{data}} $ is a Hahn--Banach extension of $\mu^{\text{data}}$.
Because of this, we also consider regularized solutions of the type
\[
f_\varepsilon(\cdot)
= (2 + \varepsilon_1)k(\cdot, \theta_1)
  + (-3 + \varepsilon_2)k(\cdot, \theta_2)
  + \bigl(\tfrac{1}{2} + \varepsilon_3\bigr)k(\cdot, \theta_3),
\]
where \(\boldsymbol{\beta}_\varepsilon = (2+\varepsilon_1,\, -3+\varepsilon_2,\, \tfrac{1}{2}+\varepsilon_3)\)
and \(\operatorname{sign}(\boldsymbol{\beta}_\varepsilon)\) is admissible.
These are the solutions of
\[
\arg\min \{ \|f_\mu\| : f_\mu \in \mathcal{M}_{\mathbf{y}_\varepsilon} \},
\]
where 
\(\mathbf{y}_\varepsilon^{T} = M \boldsymbol{\beta}_\varepsilon^{T} 
= (2+\varepsilon_1,\, -3+\varepsilon_2,\, \tfrac{1}{2}+\varepsilon_3)^{T}\).
For every admissible sign vector we choose the best 
\(\varepsilon = (\varepsilon_1, \varepsilon_2, \varepsilon_3)\)
which is the one that minimizes~\eqref{eq:reg-problem}.

\smallskip
First, note that 
$
f_0(\cdot) = 2k(\cdot, \theta_1) - 3k(\cdot, \theta_2) + \tfrac{1}{2}k(\cdot, \theta_3)$
is such that 
\(\mathcal{R}(f_0) = \lambda_0 \|f_0\|\),
where 
\(\|f_0\| \le 2 + 3 + \tfrac{1}{2} = \tfrac{11}{2}\). Moreover,
\(\|f_0\| \ge |2(k(x_1, \theta_1) +3 k(x_2, \theta_3))| = 5,\) since $|k(x_1,\cdot)-k(x_2,\cdot)\| \leq 1$. Therefore, 
\(0.5 \le \mathcal{R}(f_0) \le 0.55.\)
Now, we determine the best regularized solutions associated with each admissible sign vector.

(i) For $\mathbf{s}_1 := (1, -1, 0)$, we have that the regularizations of $f_0$ are given by 
\[
f_{\boldsymbol{s_1}}(\cdot)
= (2 + \varepsilon_1)k(\cdot, \theta_1)
  + (-3 + \varepsilon_2)k(\cdot, \theta_2),
\]
with $2 + \varepsilon_1 \ge 0$ and $-3 + \varepsilon_2 \le 0$.
Therefore,
\[
\mathcal{R}(f_{\boldsymbol{s_1}})
= \frac{\varepsilon_1^2 + \varepsilon_2^2 + \tfrac{1}{4}}{3}
  + \frac{1}{10} (|2 + \varepsilon_1| + |-3 + \varepsilon_2|)
= \frac{\varepsilon_1^2 + \varepsilon_2^2}{3}
  + \frac{1}{12}
  + \frac{1}{10}(\varepsilon_1 - \varepsilon_2)
  + \frac{1}{2}.
\]

The minimum of $\mathcal{R}(f_{\boldsymbol{s_1}})$
is attained at 
$(\varepsilon_1, \varepsilon_2) = \bigl(-\tfrac{3}{20}, \tfrac{3}{20}\bigr)$.
Thus, related with the sign vector 
$\mathbf{s}_1 = (1, -1, 0)$, our best choice for a regularized solution is
\[
f_{\tilde{\mathbf{y}}_1}(\cdot)
= \bigl(2 - \tfrac{3}{20}\bigr)k(\cdot, \theta_1)
  + \bigl(-3 + \tfrac{3}{20}\bigr)k(\cdot, \theta_2),
\]
which is a solution of 
\[
\arg\min \{ \|f_\mu\| : f_\mu \in \mathcal{M}_{\tilde{\mathbf{y}}_1} \},
\]
where 
$\tilde{\mathbf{y}}_1 = \bigl(2 - \tfrac{3}{20},\, -3 + \tfrac{3}{20},\, 0\bigr)$.
Furthermore,
\[
\|f_{\tilde{\mathbf{y}}_1}\| 
= |2 - \tfrac{3}{20}| + |-3 + \tfrac{3}{20}|
= \tfrac{94}{20},
\]
this norm is attained in $k(x_1, \cdot) - k(x_2, \cdot)$, and
\[
\mathcal{R}(f_{\tilde{\mathbf{y}}_1})
= \frac{\tfrac{9}{400} + \tfrac{9}{400} + \tfrac{1}{4}}{3}
  + \frac{1}{10} \|f_{\tilde{\mathbf{y}}_1}\|
= \frac{59}{600} + \frac{1}{10}\tfrac{94}{20}
= \frac{341}{600} \approx 0.5683.
\]

\medskip
\noindent
(ii) For $\mathbf{s}_2 := (-1, 1, 0)$, we have that
\[
f_{\boldsymbol{s_2}}(\cdot)
= (2 + \varepsilon_1)k(\cdot, \theta_1)
  + (-3 + \varepsilon_2)k(\cdot, \theta_2),
\]
with $2 + \varepsilon_1 \le 0$ and $-3 + \varepsilon_2 \ge 0$.
Thus, we minimize
\[
\mathcal{R}(f_{\boldsymbol{s_2}})
= \frac{\varepsilon_1^2 + \varepsilon_2^2 + \tfrac{1}{4}}{3}
  + \frac{1}{10} (|2 + \varepsilon_1| + |-3 + \varepsilon_2|),
\]
with $\varepsilon_1 \le -2$ and $\varepsilon_2 \ge 3$.
The minimum is attained in $(\varepsilon_1, \varepsilon_2) = (-2, 3)$.
Thus,
\[
f_{\tilde{\mathbf{y}}_2}(\cdot) = 0
\quad\text{with}\quad
\mathcal{R}(f_{\tilde{\mathbf{y}}_2})
= \frac{\varepsilon_1^2 + \varepsilon_2^2 + \tfrac{1}{4}}{3}
= \frac{4+ 9 + \tfrac{1}{4}}{3}= \frac{53}{12}.
\]

\medskip
\noindent
(iii) For $\mathbf{s}_3 := (1, 0, -1)$ and $\mathbf{s}_4 := (-1, 0, 1)$ we have 
\[
f_{\boldsymbol{s_j}}(\cdot)
= (2 + \varepsilon_1)k(\cdot, \theta_1)
  + \bigl(\tfrac{1}{2} + \varepsilon_3\bigr)k(\cdot, \theta_3),
\]
(with $j = 3, 4$) where $2 + \varepsilon_1 \ge 0$ and 
$\tfrac{1}{2} + \varepsilon_3 \le 0$ if $j=3$, and 
$2 + \varepsilon_1 \le 0$ and $\tfrac{1}{2} + \varepsilon_3 \ge 0$ if $j=4$.
The minimum of
\[
\mathcal{R}(f_{\boldsymbol{s_j}})
= \frac{\varepsilon_1^2 + 9 + \varepsilon_3^2}{3}
  + \frac{1}{10}\Bigl(|2 + \varepsilon_1| + \bigl|\tfrac{1}{2} + \varepsilon_3\bigr|\Bigr),
\]
for $j=3$ is attained in 
$f_{\widetilde{\boldsymbol{y}}_3}(\cdot) = (2 - \tfrac{3}{20})k(\cdot, \theta_1)$
with $\mathcal{R}(f_{\widetilde{\boldsymbol{y}}_3}) = \tfrac{3931}{1200}  \approx 3.275$,
meanwhile for $j=4$ it is attained in
$f_{\widetilde{\boldsymbol{y}}_4}(\cdot) = \bigl(\tfrac{1}{2} - \tfrac{3}{20}\bigr)k(\cdot, \theta_3)$
with $\mathcal{R}(f_{\widetilde{\boldsymbol{y}}_4}) = \tfrac{5251}{1200} \approx 4.375$.

\medskip
\noindent
(iv) For $\mathbf{s}_5 := (0, -1, 1)$ and $\mathbf{s}_6 := (0, 1, -1)$,
we have 
\[
f_{\boldsymbol{s_5}}(\cdot) 
= f_{\boldsymbol{s_6}}(\cdot)
= (-3 + \varepsilon_2)k(\cdot, \theta_2)
  + \bigl(\tfrac{1}{2} + \varepsilon_3\bigr)k(\cdot, \theta_3),
\]
where $-3 + \varepsilon_2 \le 0$ and $\tfrac{1}{2} + \varepsilon_3 \ge 0$ for 
$f_{\boldsymbol{s_5}}(\cdot)$, 
and $-3 + \varepsilon_2 \ge 0$ and 
$\tfrac{1}{2} + \varepsilon_3 \le 0$ for 
$f_{\boldsymbol{s_6}}(\cdot)$.
Note that, for $j = 5, 6$,
\[
\mathcal{R}(f_{\boldsymbol{s_j}})
= \frac{4 + \varepsilon_2^2 + \varepsilon_3^2}{3}
  + \frac{1}{10}\Bigl(|-3 + \varepsilon_2| + \bigl|\tfrac{1}{2} + \varepsilon_3\bigr|\Bigr).
\]
We obtain as minimum 
\[
f_{\widetilde{\boldsymbol{y}}_5}(\cdot)
= (-3 + \tfrac{3}{20})k(\cdot, \theta_2)
  + \bigl(\tfrac{1}{2} - \tfrac{3}{20}\bigr)k(\cdot, \theta_3)
\text{  with } 
\mathcal{R}(f_{\widetilde{\boldsymbol{y}}_5}) = \tfrac{1001}{600} \approx 1.66,
\]
and $
f_{\widetilde{\boldsymbol{y}}_6}(\cdot) = 0
 \text{ with } 
\mathcal{R}(f_{\boldsymbol{\varepsilon}_6}) = \tfrac{53}{12} \approx 4.411.
$

\medskip
\noindent
(v) For $\mathbf{s}_7 := (1, 0, 0)$ and $\mathbf{s}_8 := (-1, 0, 0)$ we have
\[
f_{\boldsymbol{s_j}}(\cdot)
= (2 + \varepsilon_1)k(\cdot, \theta_1),
\]
for $j = 7, 8$, with $2 + \varepsilon_1 \ge 0$ if $j = 7$, and 
$2 + \varepsilon_1 \le 0$ if $j = 8$.
This gives rise to
\[
\mathcal{R}(f_{\boldsymbol{s_j}})
= \frac{\varepsilon_1^2 + 9 + \tfrac{1}{4}}{3}
  + \frac{1}{10}|2 + \varepsilon_1|,
  \qquad (j = 7, 8).
\]
Thus,
\[
\mathcal{R}(f_{\boldsymbol{s_7}})
= \frac{\varepsilon_1^2 + 9 + \tfrac{1}{4}}{3}
  + \frac{1}{10}(2 + \varepsilon_1),
\]
with $2 + \varepsilon_1 \ge 0$, and 
\[
\mathcal{R}(f_{\boldsymbol{s_8}})
= \frac{\varepsilon_1^2 + 9 + \tfrac{1}{4}}{3}
  - \frac{1}{10}(2 + \varepsilon_1),
\]
with $2 + \varepsilon_1 \le 0$.
We obtain 
\[
f_{\tilde{\mathbf{y}}_7}(\cdot)
= (2 - \tfrac{3}{20})k(\cdot, \theta_1),
\quad
\mathcal{R}(f_{\tilde{\mathbf{y}}_7})
= \tfrac{3921}{1200} \approx 3.26,
\quad
f_{\tilde{\mathbf{y}}_8} = 0,
\quad
\mathcal{R}(f_{\tilde{\mathbf{y}}_8}) = \tfrac{53}{12} \approx 4.41.
\]

\medskip
\noindent
(vii) For $\mathbf{s}_9 := (0, 1, 0)$ and $\mathbf{s}_{10} := (0, -1, 0)$ we have 
\[
f_{\boldsymbol{s_j}}(\cdot)
= (-3 + \varepsilon_2)k(\cdot, \theta_1),
\quad
(j = 9, 10),
\]
with $-3 + \varepsilon_2 \ge 0$ if $j=9$, and $-3 + \varepsilon_2 \le 0$ if $j=10$.
Thus,
\[
\mathcal{R}(f_{\boldsymbol{s_j}})
= \frac{4 + \varepsilon_2^2 + \tfrac{1}{4}}{3}
  + \frac{1}{10}|-3 + \varepsilon_2|
= \frac{\varepsilon_2^2}{3} + \frac{17}{12} + \frac{1}{10}|-3 + \varepsilon_2|,
\quad (j = 9, 10).
\]
We obtain 
\[
f_{\tilde{\mathbf{y}}_9} = 0
\quad \text{with} \quad 
\mathcal{R}(f_{\tilde{\mathbf{y}}_9}) = \tfrac{53}{12}\approx 4.41,
\quad
f_{\tilde{\mathbf{y}}_{10}}(\cdot)
= (-3 + \tfrac{3}{20})k(\cdot, \theta_1),
\quad
\mathcal{R}(f_{\tilde{\mathbf{y}}_{10}}) = \tfrac{2051}{1200} \approx 1.7.
\]

\medskip
\noindent
(viii) Similar for 
$\mathbf{s}_{11} := (0, 0, 1)$ and $\mathbf{s}_{12} := (0, 0, -1)$
with 
\[
f_{\boldsymbol{s_j}}(\cdot)
= \bigl(\tfrac{1}{2} + \varepsilon_3\bigr)k(\cdot, \theta_3),
\]
where $\tfrac{1}{2} + \varepsilon_3 \ge 0$ if $j=11$, and 
$\tfrac{1}{2} + \varepsilon_3 \le 0$ if $j=12$.
Then,
\[
\mathcal{R}(f_{\boldsymbol{\varepsilon}_j})
= \frac{4 + 9 + \varepsilon_3^2}{3}
  + \frac{1}{10}\Bigl|\tfrac{1}{2} + \varepsilon_3\Bigr|,
\qquad (j = 11, 12).
\]
Thus,
\[
f_{\tilde{\mathbf{y}}_{11}}(\cdot)
= \bigl(\tfrac{1}{2} - \tfrac{3}{20}\bigr)k(\cdot, \theta_3),
\quad
\mathcal{R}(f_{\tilde{\mathbf{y}}_{11}}) = \tfrac{5251}{1200} \approx 4.37,
\quad
f_{\tilde{\mathbf{y}}_{12}}(\cdot) = 0,
\quad
\mathcal{R}(f_{\tilde{\mathbf{y}}_{12}}) = \tfrac{53}{12} \approx 4.41.
\]

\medskip
\noindent
(ix) For $\mathbf{s}_{13} := (0, 0, 0)$ we have 
$f_{\widetilde{\boldsymbol{y}}_{13}}(\cdot) = 0$
and 
$\mathcal{R}(f_{\widetilde{\boldsymbol{y}}_{13}}) = \tfrac{53}{12} \approx 4.41.$

Consequently, our best choice in this framework is 
\[
f_0(\cdot):= f_{\widehat{\mu}^{\mathrm{data}}}
= 2k(\cdot, \theta_1)
  - 3k(\cdot, \theta_2)
  + \tfrac{1}{2}k(\cdot, \theta_3).
\]

\vspace{0.1cm}
\begin{remark}
If $(a,b,c) = \bigl(1, -\tfrac{1}{2}, \tfrac{1}{2}\bigr)$ and 
$\lambda_0 = \tfrac{1}{3}$, then we have that
\[
f_0 :=  k(\cdot, \theta_1)
  - \tfrac{1}{2}k(\cdot, \theta_2)
  + \tfrac{1}{2}k(\cdot, \theta_3),
\]
with $1 + \tfrac{1}{2} \le \|f_0\| \le 2$, as
$f_0(k(\cdot, \theta_1) - \tfrac{1}{2}k(\cdot, \theta_2)) = \tfrac{3}{2}$.
It follows that
\[
\tfrac{1}{2} \le \mathcal{R}(f_{0}) \le \tfrac{2}{3}.
\]
However, for $\mathbf{s}_1 := (1, -1, 0)$, we have that the regularizations of $f_0$ are
\[
f_{\boldsymbol{s}_1}(\cdot)
= (1 + \varepsilon_1)k(\cdot, \theta_1)
  + \bigl(-\tfrac{1}{2} + \varepsilon_2\bigr)k(\cdot, \theta_2),
\]
with $1 + \varepsilon_1 \ge 0$ and $-\tfrac{1}{2} + \varepsilon_2 \le 0$.
Hence,
\[
\mathcal{R}(f_{\boldsymbol{s}_1})
= \frac{\varepsilon_1^2 + \varepsilon_2^2 + \tfrac{1}{4}}{3}
  + \frac{1}{3}\bigl(|1 + \varepsilon_1| + |-\tfrac{1}{2} + \varepsilon_2|\bigr)
= \frac{\varepsilon_1^2 + \varepsilon_2^2}{3}
  + \frac{1}{12}
  + \frac{1}{3}(\varepsilon_1 - \varepsilon_2)
  + \frac{1}{2}.
\]
The minimum is attained in 
$f_{\tilde{\boldsymbol{y}}_1}(\cdot)
= (1 - \tfrac{1}{2})k(\cdot, \theta_1)$
with
\[
\mathcal{R}(f_{\tilde{\boldsymbol{y}}_1})
= \tfrac{1}{6} + \tfrac{1}{12} - \tfrac{1}{3} + \tfrac{1}{2}
= \tfrac{5}{12}.
\]
Therefore, in this case,
\[
f_{\tilde{\boldsymbol{y}}_1}(\cdot)
= \tfrac{1}{2}k(\cdot, \theta_1)
\]
is a better choice than 
\[
f_0(\cdot)
= k(\cdot, \theta_1)
  - \tfrac{1}{2}k(\cdot, \theta_2)
  + \tfrac{1}{2}k(\cdot, \theta_3).
\]
\end{remark}

\vspace{0.1cm}
\begin{remark}
If we replace $\mathbf{x}_1 := (1, -1)$ with 
$\tilde{\mathbf{x}}_1 := (2, -2)$, in the above example, then, 
for the same parameter $\theta_1 := (1, 0, 0, 0, 0, 0, 1, 0, 0)$, 
we have $k(\mathbf{x}, \theta_1) := \sigma(x_1)$,
so that $k(\tilde{\mathbf{x}}_1, \theta_1) = 2$ and 
$k(\mathbf{x}_2, \theta_1) = k(\mathbf{x}_3, \theta_1) = 0$.
Thus, $k(\cdot, \theta_1)$ attains its norm in 
$\tfrac{1}{2}k(\tilde{\mathbf{x}}_1, \cdot)$, and we proceed as above with 
$\tfrac{1}{2}k(\tilde{\mathbf{x}}_1, \cdot)$,
obtaining a solution
\[
f^{\mathrm{sol}}(\cdot)
= \tfrac{a}{2}k(\cdot, \theta_1)
  + b\,k(\cdot, \theta_2)
  + c\,k(\cdot, \theta_3).
\]
\end{remark}

\section{Learning-Theoretic Insights and Implications}
\label{sec:LT-insights}

This section distills the main learning-theoretic insights arising from the featured reproducing kernel Banach space framework developed in this work. Rather than revisiting formal constructions, we focus on how the results reshape the interpretation of learning problems beyond the Hilbert setting and clarify the role of function-space geometry in modern learning models.

A first insight is that minimal-norm interpolation and regularization remain conceptually meaningful learning principles in Banach spaces, but only when additional structural conditions are imposed. Unlike in reproducing kernel Hilbert spaces, continuity of point-evaluation functionals alone does not ensure the existence of feature representations, kernel-based algorithms, or finite-dimensional solution structures. From a learning perspective, this explains why naïve extensions of RKHS-based methods to non-Hilbertian norms often fail and why feature-map–induced constructions are essential for tractable learning formulations.

A second insight concerns the nature of representer theorems. In Banach settings, representer results are inherently conditional and may fail even when interpolation problems are well posed. This reveals a sharp conceptual boundary between learning problems that admit kernel-style solutions and those that give rise to genuinely infinite-dimensional solution sets. Understanding this boundary is crucial for interpreting the behavior of learning algorithms under non-quadratic regularization and sparsity-inducing norms.

Finally, viewing fixed-architecture neural networks as elements of special vector-valued featured RKBSs provides a unifying function-space perspective that connects kernel methods, norm-based regularization, and neural architectures. In this view, architectural choices and parameter norms determine the induced function-space geometry and, consequently, the implicit inductive bias of the learning model. This perspective highlights that many structural aspects of neural network learning can be understood independently of infinite-width limits or purely Hilbertian assumptions.

\section{Conclusions}
\label{sec:conc}

This work has developed a coherent functional-analytic framework extending classical reproducing kernel Hilbert space theory to Banach settings in a way that is compatible with modern learning models. By introducing and systematically analyzing featured reproducing kernel Banach spaces, we identified the precise structural conditions under which kernel-based learning formulations and representer-type results can be recovered beyond the Hilbertian regime.

A central outcome of this analysis is the identification of feature-map–induced structure and suitable predual representations as essential ingredients for kernel-based learning in Banach spaces. While evaluation continuity suffices in Hilbert spaces, it is generally insufficient in Banach settings, where additional geometric and duality assumptions are required. The featured RKBS framework isolates these requirements and provides a clear classification of Banach spaces that support kernel representations versus those that do not.

Within this framework, we established existence results for minimal-norm interpolation and regularization problems and proved representer theorems under explicit conditions. These results formally characterize when solutions admit finite kernel expansions and when only weak-$^\ast$ descriptions are available, thereby clarifying the limits of classical kernel intuition outside the Hilbert setting.

We further extended the theory to vector-valued featured RKBSs, providing a principled foundation for multi-output learning problems. Building on this extension, we showed that fixed-architecture neural networks naturally induce special vector-valued featured RKBSs, allowing network training to be rigorously formulated as a variational problem in an induced Banach space of functions. This establishes a precise mathematical link between kernel methods, Banach-space regularization, and neural network architectures.

The scope of this work is intentionally foundational. We do not address optimization dynamics, statistical generalization guarantees, or computational efficiency, which depend on additional modeling assumptions. Instead, the framework developed here provides a structural basis for future investigations into implicit regularization, architectural inductive bias, and hybrid kernel–neural models from a unified functional-analytic perspective.

Natural directions for future work include studying optimization dynamics within featured RKBSs, with particular emphasis on how Banach-space geometry and feature-map structure influence implicit regularization and the behavior of gradient-based algorithms. Another important direction is the derivation of statistical generalization guarantees under Banach-space geometries, including stability and robustness analyses that account for non-Hilbertian norms and duality structures. Finally, developing computationally efficient algorithms and approximation schemes that exploit the feature-map–induced structure of featured RKBSs remains a key challenge, especially in the context of large-scale kernel methods and neural-network models where explicit function-space representations are infeasible.

\subsubsection*{Acknowledgements} 

This publication is part of the project “Ethical, Responsible, and General Purpose Artificial Intelligence: Applications In Risk Scenarios” (IAFER) Exp.:TSI-100927-2023-1 funded through the creation of university-industry research programs (ENIA Programs), aimed at the research and development of artificial intelligence, for its dissemination and education within the framework of the Recovery, Transformation and Resilience Plan of the European Union Next Generation EU through the Ministry of Digital Transformation and the Civil Service. This work was also partially supported by Knowledge Generation Projects, funded by the Spanish Ministry of Science, Innovation, and Universities of Spain under the project PID2023-150070NB-I00. 

The authors acknowledge Miguel Cabrera and Juan F. Mena for helpful discussions and insightful comments.

\newpage
\bibliography{bibliography}
\bibliographystyle{apacite}

\end{document}